\newcommand{\eqsp}{\;}
\newcommand{\rmd}{\mathrm{d}}
\newcommand{\E}{\mathbb{E}}
\newcommand{\Rset}{\mathbb{R}}
\newcommand{\Xset}{\mathsf{X}}
\newcommand{\Yset}{\mathsf{Y}}
\newcommand{\Zset}{\mathsf{Z}}
\newcommand{\Xsigma}{\mathcal{X}}
\newcommand{\Zsigma}{\mathcal{Z}}
\newcommand{\Ysigma}{\mathcal{Y}}
\newcommand{\x}{\mathsf{x}}
\newcommand{\y}{\mathsf{y}}
\newcommand{\TV}{\mathrm{TV}}
\newcommand{\idx}{t}
\definecolor{nblue}{RGB}{28,130,185}
\definecolor{nred}{RGB}{203, 0, 68} 
\definecolor{ngreen}{RGB}{0,150,0}
\theoremstyle{plain}
\newtheorem{theorem}{Theorem}[section]
\newtheorem{Proposition}[theorem]{Proposition}
\newtheorem{lemma}[theorem]{Lemma}
\theoremstyle{definition}
\newtheorem{definition}[theorem]{Definition}
\newtheorem{assumption}{Assumption}
\theoremstyle{remark}
\icmltitlerunning{Latent Guided Sampling for Combinatorial Optimization}
\begin{document}

\twocolumn[
  \icmltitle{Latent Guided Sampling for Combinatorial Optimization}

  \icmlsetsymbol{equal}{*}

  \begin{icmlauthorlist}
    \icmlauthor{Sobihan Surendran}{lpsm,califrais}
    \icmlauthor{Adeline Fermanian}{califrais}
    \icmlauthor{Sylvain Le Corff}{lpsm}
  \end{icmlauthorlist}

  \icmlaffiliation{lpsm}{Sorbonne Université and Université Paris Cité, CNRS, Laboratoire de Probabilités, Statistique et Modélisation, F-75005 Paris, France}
  \icmlaffiliation{califrais}{LOPF, Califrais' Machine Learning Lab, Paris, France}

  \icmlcorrespondingauthor{Sobihan Surendran}{sobihan.surendran@sorbonne-universite.fr}

  \icmlkeywords{Machine Learning, ICML}

  \vskip 0.3in
]

\printAffiliationsAndNotice{}

\begin{abstract}
Combinatorial Optimization problems are widespread in domains such as logistics, manufacturing, and drug discovery, yet their NP-hard nature makes them computationally challenging. Recent Neural Combinatorial Optimization (NCO) methods leverage deep learning to learn policies for constructing solutions, trained via Supervised or Reinforcement Learning. While promising, these approaches often rely on task-specific augmentations, perform poorly on out-of-distribution instances, and lack robust inference mechanisms. Moreover, existing latent space models either require labeled data or use an instance-independent latent distribution. In this work, we propose LGS-Net, a novel latent space model that conditions on problem instances, and introduce an efficient inference method, Latent Guided Sampling (LGS), based on Markov Chain Monte Carlo and Stochastic Approximation. We show that the iterations of our method form a time-inhomogeneous Markov Chain and provide rigorous theoretical convergence guarantees. Empirical results on benchmark routing tasks show that our method achieves state-of-the-art performance among NCO baselines.
\end{abstract}

\section{Introduction}

Combinatorial Optimization (CO) consists of finding the best solution from a discrete set of possibilities by optimizing a given objective function subject to constraints. It has widespread applications across various domains, including vehicle routing \citep{veres2019deep}, production planning \citep{dolgui2019scheduling}, and drug discovery \citep{liu2017combinatorial}. However, its NP-hard nature and the complexity of many problem variants make solving CO problems highly challenging. Traditional heuristic methods (e.g., \citep{kirkpatrick1983optimization, glover1989tabu, mladenovic1997variable}) rely on hand-crafted rules to guide the search, providing near-optimal solutions with significantly lower computational costs. Inspired by the success of deep learning in computer vision \citep{krizhevsky2012imagenet, he2016deep} and natural language processing \citep{vaswani2017attention, devlin2018bert}, recent years have seen a surge in Neural Combinatorial Optimization (NCO) approaches for solving CO problems, including the Travelling Salesman Problem (TSP) and the Capacitated Vehicle Routing Problem (CVRP).

These methods leverage neural networks to learn a policy that generates solutions, trained via either Supervised Learning (SL) \citep{vinyals2015pointer, joshi2019efficient, hottung2020learning, fu2021generalize, joshi2022learning, kool2022deep} or Reinforcement Learning (RL) \citep{bello2016neural, nazari2018reinforcement, kool2018attention, chen2019learning, kwon2020pomo, hottung2020neural, grinsztajn2023winner, chalumeau2023combinatorial}. SL-based methods often struggle to obtain sufficient high-quality labeled data, whereas RL-based approaches can surpass them by exploring solutions autonomously.
Despite their success on in-distribution problem instances, these methods often generalize poorly to out-of-distribution cases, limiting their applicability in real-world scenarios.
Moreover, once a policy is trained, inference typically relies on relatively simple strategies such as stochastic sampling \citep{kool2018attention, kwon2020pomo}, beam search \citep{steinbiss1994improvements}, or Monte Carlo Tree Search \citep{browne2012survey}. A popular alternative, based on online fine-tuning \citep{bello2016neural, hottung2021efficient}, is to actively adapt the policy for each new problem instance. However, this approach introduces significant computational and practical challenges.

Meanwhile, latent space models have proven effective across diverse tasks, including image generation \citep{rombach2022high}, text generation \citep{bowman2016generating}, anomaly detection \citep{an2015variational}, and molecule design \citep{gomez2018automatic}. More recently, \citet{hottung2020learning, chalumeau2023combinatorial} have explored learning a continuous latent space for discrete routing problems. Importantly, \citet{chalumeau2023combinatorial} have shown that RL-based latent methods can achieve competitive results without relying on the augmentation trick \citep{kwon2020pomo}, which is used in several prior inference methods. This augmentation strategy enhances performance by generating variations of the same problem, such as rotating the coordinates, but is task-specific and only applicable to certain routing problems in Euclidean space. However, existing latent space methods still have important limitations: the approach of \citet{hottung2020learning} relies on labeled training data for SL, whereas \citet{chalumeau2023combinatorial} lacks instance-dependent structure in its latent space and therefore does not adapt to the specific problem instance. In contrast, we introduce a latent space model that conditions the latent representation on problem instances and is trained with RL, addressing these limitations and enabling more effective latent space optimization. We further propose a novel guided inference method designed for latent-based models, based on Markov Chain Monte Carlo (MCMC) and Stochastic Approximation (SA). Our method comes with theoretical convergence guarantees and outperforms most state-of-the-art NCO methods.

More precisely, our contributions are summarized as follows.
\begin{itemize}
    \item We introduce LGS-Net, a novel latent space model for Neural Combinatorial Optimization that does not require labeled data, learns a structured, instance-conditioned latent representation, and is supported by a rigorous mathematical framework.
    \item We propose LGS, a new inference scheme based on interacting MCMC, which jointly samples from the learned distribution while updating parameters via Stochastic Approximation. Moreover, we establish that its iterates form a joint time-inhomogeneous Markov Chain over the latent and solution spaces, converging to the desired target distribution.
    \item We empirically show that our inference method is effective in low-budget regimes, where gradient-based approaches tend to slow down. Furthermore, we establish that it achieves state-of-the-art performance among NCO methods, consistently outperforming existing techniques across diverse problem types, both with and without the augmentation trick.
\end{itemize}

\section{Related Work}

\textbf{Neural Combinatorial Optimization. }
The application of neural networks to CO problems dates back to \citet{hopfield1985neural}, who used a Hopfield network for small TSP instances. With advancements in deep learning, a major breakthrough came with Pointer Networks \citep{vinyals2015pointer}, inspired by sequence-to-sequence models \citep{sutskever2014sequence}. Pointer Networks enabled variable-length outputs but relied on supervised training and thus could not surpass the quality of the provided solutions. To address this limitation, \citet{bello2016neural} adapted the model for RL. For the CVRP, \citet{nazari2018reinforcement} extended Pointer Networks with element-wise projections, followed by the Attention Model (AM) of \citet{kool2018attention}, based on transformers \citep{vaswani2017attention}. Since then, numerous AM variants have been proposed \citep{kwon2020pomo, xin2020step, kim2021learning, xin2021multi, kwon2021matrix, hottung2024polynet}. For instance, \citet{kwon2020pomo} introduced POMO, an attention-based model that incorporates a more robust learning and inference strategy grounded in multiple optimal policies. In addition, graph neural network approaches have been explored, notably graph embeddings \citep{khalil2017learning}, attention networks \citep{deudon2018learning}, and convolutional networks \citep{joshi2019efficient} for the TSP.

Several approaches combine heuristic algorithms, such as local search \citep{papadimitriou1998combinatorial, de2008z3}, with machine learning techniques to tackle routing problems. For example, \citet{chen2019learning, lu2019learning} propose an RL-based improvement method that iteratively selects a region of the solution and applies a local heuristic determined by a trainable policy. To improve the generalization ability of constructive methods during inference, various strategies have been introduced, such as Efficient Active Search (EAS) \citep{hottung2021efficient} and Simulation-guided Beam Search (SGBS) \citep{choo2022simulation}. Notably, EAS builds on POMO by fine-tuning a subset of model parameters at inference time using Gradient Descent, in contrast to Active Search \citep{bello2016neural}, which updates all model parameters. More recently, generalization-boosting methods have also been explored \citep{gao2023towards, zhou2024collaboration}. Recent state-of-the-art NCO methods include memory-enhanced approaches \citep{chalumeaumemory} as well as latent space and search-based methods such as \citet{chalumeau2023combinatorial, hottung2021efficient}.

Among the latent space models designed to map discrete routing problems to continuous spaces, \citet{hottung2020learning} used a conditional variational autoencoder (CVAE) \citep{sohn2015learning} that maps solutions to a continuous latent space. However, their approach relies on supervised training, which is hindered by the significant cost of acquiring high-quality labeled data. To overcome this limitation, \citet{chalumeau2023combinatorial} proposed COMPASS, an RL-based approach that learns a latent space on top of a policy. However, the latent distribution in COMPASS lacks instance-dependent structure: it is defined via a fixed prior in latent space and does not adapt to the specific problem instance, making it conceptually similar to a GAN \citep{goodfellow2014generative} without a discriminator.
In contrast, we introduce a latent space model that conditions the latent representation on problem instances and is trained with RL from cost feedback, without requiring labeled data. Our approach can be interpreted as a VAE with a modified encoder that conditions only on problem instances, rather than on both problem instances and solutions. Furthermore, we propose an inference method based on MCMC and SA, rather than using Differential Evolution (DE) \citep{storn1997differential} or Covariance Matrix Adaptation (CMA) \citep{hansen2001completely}, which were used in previous works on latent space models.

\textbf{Markov Chain Monte Carlo. }
The Metropolis-Hastings algorithm \citep{metropolis1953equation, hastings1970monte} is one of the most widely used MCMC methods. Since its introduction, numerous variants have been developed, including the Gibbs sampler \citep{geman1984stochastic} and Hamiltonian Monte Carlo \citep{duane1987hybrid}. These methods have been theoretically studied, particularly in terms of geometric ergodicity, under well-established drift and minorization conditions \citep{meyn1994computable, baxendale2005renewal}. A crucial factor influencing the performance of MCMC methods is the choice of the proposal distribution, which significantly affects the convergence rate \citep{gelman1997weak}. Traditionally, tuning the proposal distribution relies on heuristics and manual adjustments. To address this limitation, adaptive MCMC methods have been developed, where the proposal distribution is adjusted dynamically based on previous samples \citep{haario2001adaptive}. The ergodicity of adaptive MCMC methods incorporating Stochastic Approximation \citep{robbins1951stochastic} has been studied by \citet{andrieu2006efficiency, andrieu2006ergodicity}. Beyond adaptive MCMC, time-inhomogeneous MCMC methods, which extend adaptive MCMC, have also been explored \citep{andrieu2001convergence, douc2004quantitative}. In our setting, both the proposal and target distributions evolve dynamically, and existing results are therefore insufficient to establish convergence guarantees. Indeed, prior analyses \citep{andrieu2001convergence, douc2004quantitative} focus on continuous spaces and rely heavily on regularity assumptions, such as strict convexity and coercivity of the objective function, which do not hold in our case. To address this gap, we introduce new results for time-inhomogeneous MCMC methods, enabling us to derive convergence guarantees.

\section{Notation and Background}

\subsection{Notation}

In the following, for all distribution $\mu$ (resp. probability density $p$) we write $\E_\mu$ (resp. $\E_p$) the expectation under $\mu$ (resp. under $p$). We may also write $\E_{x\sim\mu}$ for the expectation under $\mu$. Given a measurable space $(\Xset, \Xsigma)$, where $\Xsigma$ is a countably generated $\sigma$-algebra, let $\mathsf{F}(\Xset)$ denote the set of all measurable functions defined on $(\Xset, \Xsigma)$.
Let $\mathsf{M}(\Xset)$ be the set of $\sigma$-finite measures on $(\Xset, \Xsigma)$, and $\mathsf{M}_1(\Xset) \subset \mathsf{M}(\Xset)$ the probability measures. For all $f \in \mathsf{F}(\Xset)$ and $\mu \in \mathsf{M}(\Xset)$, we write  $\mu(f) = \int f(x) \mu(dx)$.  
For a Markov kernel $P$ on $(\Xset, \Xsigma)$ and $\mu\in \mathsf{M}_1(\Xset)$, the composition $\mu P$ is defined as $\mu P : \Xsigma \ni A \mapsto \int \mu(dx) P(x, dy) \mathbf{1}_A(y)$. For probability measures $\mu$ and $\nu$ defined on the same measurable space, the Total Variation (TV) is defined as 
$\left\|\mu - \nu\right\|_{\TV} := \sup_{A \in \Xsigma} |\mu(A) - \nu(A)|$. The $\mathrm{L}_2$-norm of a random variable $X$ is defined as $\| X \|_{\mathrm{L}_2} := \left( \mathbb{E}[\| X \|^2] \right)^{1/2}$. The Hadamard product of two vectors $u$ and $v$ is denoted by $u \odot v$. For a sequence $\left(a_m\right)_{m \in \mathbb{N}}$, and all $u \leq v$, we write $a_{u:v} = \{a_u, \dots, a_v\}$. For $a,b\in\mathbb{R}$, define $a \wedge b = \min\{a,b\}$.
Table~\ref{app:tab:notation} provides a summary of the notations used throughout the paper for ease of reference.

\subsection{Problem Setting}

In a Combinatorial Optimization problem, the objective is to determine the best assignment of discrete variables that satisfies the constraints of the problem. Let $x$ represent a given problem instance and $y$ denote a solution. An instance $x = \{\x_i\}_{i=1}^n \in \Xset \subset \Rset^{n \times d_x}$ consists of a set of $n$ nodes, each represented by a feature vector $\x_i \in \Rset^{d_x}$, which encodes relevant information about the node. For a given instance $x$, we aim to find a solution $y^*$ that minimizes the associated cost function $C$:
\begin{equation} \label{eq:CO_problem}
y^* \in \underset{y \in \Yset}{\mathrm{argmin}} \, C(y, x)\eqsp,
\end{equation}
where $\Yset$ denotes the discrete set of all feasible solutions for the given problem $x$. This setting covers problems such as the TSP, CVRP, Knapsack, and Job Scheduling. For instance, in TSP, $x$ represents the coordinates of all nodes and $\Yset$ consists of all possible node permutations. In CVRP, $x$ additionally includes demands, and $\Yset$ comprises all feasible routes satisfying the capacity constraints. In both cases, the cost corresponds to the cumulative distance of the route. Further details on both problems are provided in Appendix \ref{app:pb_setting}.

\begin{figure*}[t]
\centering
\resizebox{0.95\textwidth}{!}{\input{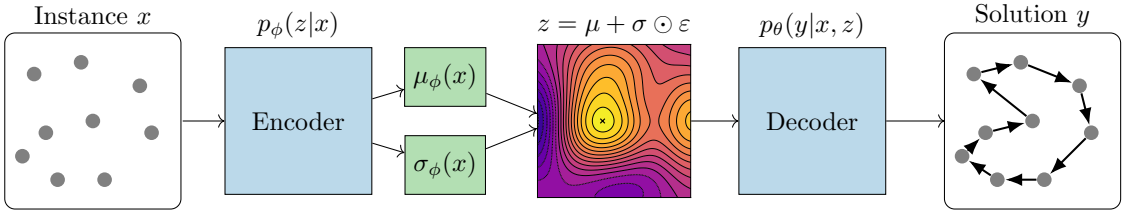}}
\caption{Our latent model architecture with the Gaussian reparameterization trick.}
\label{fig:our_method}
\end{figure*}

\subsection{Constructive NCO Methods}

Constructive NCO methods \citep{vinyals2015pointer, bello2016neural, nazari2018reinforcement, kool2018attention, kwon2020pomo} generate solutions sequentially using a stochastic policy $p_{\theta}(y|x)$, which defines the probability of selecting a solution $y$ given a problem instance $x$. This policy is parameterized by $\theta\in\Theta$, where $\Theta$ is a parameter space, and factorized as:
$$
p_{\theta}(y|x) = \prod_{t=1}^{T} p_{\theta}(\y_t|\y_{1:t-1},x)\eqsp,
$$
with the convention $p_{\theta}(\y_1|\y_{1:0},x) = p_{\theta}(\y_1|x)$, where $\y_t \in \{0, \ldots, n\}$ is the selected node at step $t$, $\y_{1:t-1}$ denotes the sequence of nodes selected up to step $t-1$, and $T$ is the total number of decoding steps. Following \citet{bello2016neural}, writing $\mathbb{P}_{x}$ the distribution of the problem instances, the policy is trained via RL by minimizing an empirical estimate of the expected cost:
$$
J(\theta) = \mathbb{E}_{x \sim \mathbb{P}_{x}, y \sim p_{\theta}(.|x)} \left[C(y, x)\right]\eqsp.
$$
where $C(y, x)$ denotes the tour cost. This objective is optimized using RL techniques, such as REINFORCE \citep{williams1992simple} or Actor-Critic methods \citep{konda1999actor}.

\section{Latent Guided Sampling}

\subsection{Model}

The proposed model introduces a continuous latent search space for routing problems similar to \citet{hottung2020learning, chalumeau2023combinatorial}, which can be efficiently explored by any continuous optimization method at inference time. To achieve this, we model the target distribution that generates a solution $y$ given a problem instance $x$ as a latent-variable model:
$$
p_{\phi, \theta}(y | x) = \int p_{\phi}(z | x) p_{\theta}(y | x, z) \,\rmd z \eqsp.
$$
The encoder $p_{\phi}(z|x)$ maps the problem instance $x$ to a continuous $d_z$-dimensional latent representation $z$. The decoder $p_{\theta}(y|x, z)$ then generates a solution $y$ conditioned on both $z$ and $x$. Both the encoder and decoder are parameterized by neural networks with learnable parameters $\phi \in \Phi$ and $\theta \in \Theta$, respectively.
The role of the latent variable is not merely to provide a compact
representation, but to define an instance-conditioned continuous search
space. For a fixed instance $x$, different latent variables $z$ can represent different global structures of the solution, while nearby latent variables are expected to decode to related feasible solutions.
The decoder, which enforces feasibility through sequential decisions and problem-specific masking, then maps these latent patterns into feasible solutions. This stochastic latent representation allows the model to capture multiple promising solution modes for the same instance, which is useful at inference time when the latent space is explored by our method rather than relying on independent decoder samples.
The decoder distribution is factorized autoregressively as:
$$
p_{\theta}(y|x, z) = \prod_{t=1}^{T} p_{\theta}(\y_t|\y_{1:t-1}, x, z)\eqsp, 
$$
where $\y_t \in \{0, \ldots, n\}$ is the selected node at step $t$, and $\y_{1:t-1}$ denotes the sequence of nodes selected up to step $t-1$. It is important to note that the encoder differs from the variational distribution $q_{\phi}(z|x,y)$ \citep{kingma2013auto}; it corresponds to a CVAE-Opt \citep{hottung2020learning}, which requires labeled data for training.

Our encoder architecture follows the general structure of \citet{kool2018attention} but includes additional layers to compute the parameters of the encoder distribution.
To compute output probabilities, we use a single decoder layer with multi-head attention to enable efficient inference. At step $0 \leq t \leq T$, for all $i \in \{0, \ldots, n\}$, this layer computes the probability $p_{\theta}(\y_t = i|\y_{1:t-1}, x, z)$ while masking nodes that lead to infeasible solutions. Details on the encoder and decoder are provided in Appendix \ref{app:encoder_decoder}.

\subsection{Training}
\label{sec:training}

During training, our objective is to minimize the cost $C$ while encouraging diversity in the generated solutions to improve inference efficiency. To achieve this, we introduce an entropic regularization term \citep{ziebart2008maximum, haarnoja2018soft} controlled by a parameter $\beta$. The training loss is given by:
\begin{equation}\label{eq:obj_training}
\begin{aligned}
\mathcal{L}(\phi, \theta; x)
&= \sum_{k=1}^{K}
\mathbb{E}_{z^{k} \sim p_{\phi}(\cdot \mid x)}
\Big[
    \mathbb{E}_{y^{k} \sim p_{\theta}(\cdot \mid x, z^{k})}
    \big[ w^{k} C(y^{k}, x) \big] \\
&\hspace{2.6em} \quad
    - \beta \, \mathcal{H}\!\left(p_{\theta}(\cdot \mid x, z^{k})\right)
\Big]\eqsp,
\end{aligned}
\end{equation}
where $\mathcal{H}(p_{\theta}(\cdot | x, z))$ denotes the entropy of the conditional decoder distribution $p_{\theta}(y | x, z)$.  The loss in \eqref{eq:obj_training} can be interpreted as a cost-weighted extension of Maximum Entropy RL \citep{ziebart2008maximum}, with weights $w^{k} = \exp(- C(y^{k}, x) / \tau)$ acting as importance factors that emphasize low-cost solutions. This formulation is inspired by IWAE \citep{burda2015importance} and the “best-of-many” objective \citep{bhattacharyya2018accurate, grinsztajn2023winner}, interpolating between uniform weighting (large $\tau$) that encourages exploration and near-greedy weighting (small $\tau$) that prioritizes exploitation. In practice, $\tau$ is gradually decreased to encourage broad exploration early on, followed by concentration on promising regions of the latent space. A more detailed derivation and the complete training procedure are provided in Appendix \ref{app:training}.

\subsection{Inference}

Any search procedure strategy can be used at inference time to find the best solution while keeping the computational cost manageable. Possible approaches include evolutionary algorithms such as DE and CMA-ES, as well as learnable methods like Active Search \citep{bello2016neural} and Efficient Active Search \citep{hottung2021efficient}. We formulate inference as a sampling problem: given an instance $x$, and the learned encoder and decoder parameters $\phi$ and $\theta$, our goal is to sample from the distribution:
\begin{equation}\label{eq:inference_target}
\begin{aligned}
\pi_{\theta}(y \mid x)
&\propto \int \pi_{\theta}(z, y \mid x)\,\rmd z \eqsp, \\
\pi_{\theta}(z, y \mid x)
&\propto p_{\phi}(z \mid x)\, p_{\theta}(y \mid x, z)\, e^{-\lambda C(y, x)} \eqsp.
\end{aligned}
\end{equation}
We omit the explicit dependence on $\phi$ since $p_{\phi}$ serves as a prior over latent variables. To favor lower-cost solutions, we introduce the reweighting factor $\mathrm{exp}(-\lambda C(y, x))$, where $\lambda$ controls the trade-off between likelihood and cost.
However, incorporating this reweighting renders the distribution in \eqref{eq:inference_target} intractable to sample from directly. While methods such as MCMC \citep{hastings1970monte} can be used to approximate it, they are often inefficient in practice. To address this challenge, we propose Latent Guided Sampling (LGS), a novel inference method designed for latent space models. LGS constructs sequences of latent samples and corresponding solutions by running multiple interacting Markov Chains to encourage better exploration, while simultaneously updating the model parameters $\theta$ via Stochastic Approximation to minimize the following test objective:
\begin{equation} \label{eq:obj_inf}
\mathcal{L}_{test}(\theta; x) =  
\mathbb{E}_{\pi_{\theta}(\cdot \mid x)} 
\left[ C(y, x) \right]\eqsp.
\end{equation} 

Since the solution quality depends on the trained parameters, it is natural to iteratively update $\theta$. In contrast, the encoder parameters $\phi$ are kept fixed to avoid the high computational cost associated with backpropagating through them. Let $H_{\theta}\left(x, \{\left(z^k, y^k\right)\}_{k=1}^{K}\right)$ denote the gradient estimator, approximated using previously sampled latent variables:
\begin{equation} \label{eq:grad_est_theta_inf}
\frac{1}{K} \sum_{k=1}^{K} \left( C(y^k, x) - b(x) \right) \nabla_{\theta} \log p_{\theta}(y^k | x, z^k)\eqsp,
\end{equation}
where $b(x)$, defined in \eqref{app:eq:baseline_inf}, serves as a baseline to reduce the variance. The proposed method is detailed in Algorithm \ref{alg:LGS}. The value of $K$ should be selected to balance stability (reducing variance in gradient estimates), effective exploration of the latent space, and computational efficiency. Although standard gradient updates are used in Algorithm~\ref{alg:LGS}, any other optimizer such as Adam \citep{kingma2014adam} could be used. In the next section, we establish that the sequence generated by our algorithm forms a Markov Chain and converges to the target distribution, depending on the optimality of $\theta$.

\begin{algorithm}[t]
\caption{Latent Guided Sampling}
\label{alg:LGS}
\begin{algorithmic}[0]
\STATE \textbf{Input:} Problem instance $x$, pretrained encoder $p_{\phi}$, pretrained decoder $p_{\theta_0}$, proposal distribution $q$, number of particles $K$, number of iterations $M$, cost function $C$, and temperature $\lambda$.
\STATE \textbf{Initialize:} 
\STATE Sample initial particles: $z_0^k \sim p_{\phi}(\cdot | x)$ for all $k = 1, \dots, K$.
\FOR{$m = 0, 1, \dots, M-1$} 
    \STATE Propagate new particles: $\tilde z_{m+1}^k \sim q(\cdot | z_{m})$ for all $k = 1, \dots, K$.
    \STATE Generate candidate solutions: $y_{m+1}^k \sim p_{\theta_m}(\cdot | x, \tilde z_{m+1}^k)$ for all $k = 1, \dots, K$.
    \STATE Compute acceptance probabilities:
    \[
    \alpha_{m+1}^k = 1 \wedge \frac{p_{\phi}(\tilde z_{m+1}^k | x)}{p_{\phi}(z_{m}^k | x)} e^{-\lambda (C(y_{m+1}^k, x) - C(y_{m}^k, x))} \eqsp.
    \]
    \STATE Accept $z_{m+1}^{k} = \tilde z_{m+1}^{k}$ with probability $\alpha_{m+1}^k$, otherwise $z_{m+1}^{k} = z_{m}^{k}$ for all $k = 1, \dots, K$.
    \STATE Compute the gradient estimate $H_{\theta_m}$ using \eqref{eq:grad_est_theta_inf}.
    \STATE Update parameters: 
    \[\theta_{m+1} = \theta_{m} - \gamma_{m+1} H_{\theta_m}\left(x, \{\left(z_{m+1}^k, y_{m+1}^k\right)\}_{k=1}^{K}\right).\]
\ENDFOR
\end{algorithmic}
\end{algorithm}

\section{Theoretical Results}

Let $\Zset \subset \mathbb{R}^{d_z}$ be a compact latent space and $\Yset$ the solution space.
In this section, we present theoretical results on our inference method described in Algorithm \ref{alg:LGS}.

\subsection{\texorpdfstring{Convergence Analysis for Fixed $\theta$}{Convergence Analysis for Fixed theta}}

We first analyze convergence in the absence of the Stochastic Approximation step, that is, without updating the parameter $\theta$ (line 10 in Algorithm \ref{alg:LGS}).
Specifically, we show that the sequences generated by our algorithm form a Markov Chain and exhibit geometric convergence to the joint target distribution defined in \eqref{eq:inference_target}.

\begin{Proposition} \label{prop:MC_reversible}
The sequence $\{(Z_m, Y_m): m \in \mathbb{N}\}$ generated by Algorithm \ref{alg:LGS} for a fixed parameter $\theta$ forms a Markov Chain with transition kernel $P_{\theta}$.
\end{Proposition}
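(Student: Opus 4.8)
The plan is to exhibit the full particle ensemble as the Markov state and to realize a single iteration of Algorithm~\ref{alg:LGS} (with $\theta_m \equiv \theta$, i.e.\ line~10 suppressed) as a \emph{fixed} measurable map of the current ensemble and an independent source of randomness; the Markov property then follows from the standard functional representation of Markov chains. First I would set the state to be the \emph{joint} ensemble $W_m := (Z_m, Y_m) = \{(z_m^k, y_m^k)\}_{k=1}^K$ on $\mathsf{E} := (\Zset \times \Yset)^K$, equipped with the product $\sigma$-algebra $\mathcal{E}$. Retaining $Y_m$ is necessary because the acceptance ratio in line~7 depends on the previous cost $C(y_m^k, x)$; retaining the \emph{whole} ensemble is necessary because the proposal in line~5 draws $\tilde z_{m+1}^k \sim q(\cdot \mid Z_m)$ from all particles at once. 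Whether line~5 factorizes over particles or couples them through the full $Z_m$, the argument is identical at the ensemble level, and it is precisely $W_m$ (not its individual coordinates) that carries the Markov structure.

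Next I would write the one-step update as $W_{m+1} = \Psi_\theta(W_m, U_{m+1})$ for a measurable map $\Psi_\theta$ and fresh randomness $U_{m+1} = \{(\varepsilon^k, \eta^k, u^k)\}_{k=1}^K$, drawn independently of the past, where $\varepsilon^k$ realizes the proposal, $\eta^k$ the candidate solution, and $u^k \sim \mathrm{Unif}([0,1])$ the accept/reject coin. Concretely, $\tilde z^k = G_q(W_m, \varepsilon^k)$ samples $q(\cdot \mid Z_m)$, $\tilde y^k = G_p(\tilde z^k, x, \eta^k)$ samples $p_\theta(\cdot \mid \tilde z^k, x)$, and
$$
(z_{m+1}^k, y_{m+1}^k) =
\begin{cases}
(\tilde z^k, \tilde y^k) & \text{if } u^k \le \alpha^k_{m+1}\eqsp,\\
(z_m^k, y_m^k) & \text{otherwise}\eqsp,
\end{cases}
$$
with $\alpha^k_{m+1}$ the acceptance probability of line~7. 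I would then verify that $\alpha^k_{m+1}$ is a measurable function of $(z_m^k, \tilde z^k, y_m^k, \tilde y^k)$: this holds because $p_\phi(\cdot \mid x)$ admits a strictly positive density, $p_\theta(\cdot \mid \cdot, x)$ and $C(\cdot, x)$ are measurable, and $t \mapsto \min(1, t)$ is continuous, so $\Psi_\theta$ is jointly measurable.

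Since $U_{m+1}$ is independent of $\mathcal{F}_m := \sigma(W_{0:m})$ and $W_{m+1}$ is a measurable function of $(W_m, U_{m+1})$, the standard recursion argument yields that $\{W_m\}$ is a Markov chain with transition kernel $P_\theta(w, A) = \mathbb{P}(\Psi_\theta(w, U) \in A)$ for $A \in \mathcal{E}$ and $U$ distributed as $U_1$. Equivalently, for $f \in \mathsf{F}(\mathsf{E})$,
$$
P_\theta f(w) = \int \prod_{k=1}^K q(\rmd \tilde z^k \mid z^{1:K}) \prod_{k=1}^K p_\theta(\rmd \tilde y^k \mid \tilde z^k, x)\, \E_{u^{1:K}}\!\left[ f\big(\{(\zeta^k, \upsilon^k)\}_{k=1}^K\big) \right]\eqsp,
$$
where $(\zeta^k, \upsilon^k) = (\tilde z^k, \tilde y^k)\mathbf{1}\{u^k \le \alpha^k\} + (z^k, y^k)\mathbf{1}\{u^k > \alpha^k\}$. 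Because $\theta$ is fixed, $\Psi_\theta$ and hence $P_\theta$ do not depend on $m$, so the chain is in fact time-homogeneous, though this is stronger than the statement requires.

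The main obstacle is bookkeeping the interaction correctly: one must resist factorizing $P_\theta$ into a product of per-particle kernels, since the shared dependence on $Z_m$ in the proposal couples the coordinates, and the conditional independence across $k$ holds only \emph{given} $W_m$ (through the mutually independent draws $\varepsilon^k, \eta^k$ and coins $u^k$), which is exactly what the ensemble-level kernel above encodes. The remaining work is the routine verification that every integrand is measurable so that $P_\theta$ is a bona fide Markov kernel on $(\mathsf{E}, \mathcal{E})$, which is immediate from the density and measurability properties of $p_\phi$, $p_\theta$, and $C$.
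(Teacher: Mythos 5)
Your argument is correct and takes essentially the same route as the paper: both establish the Markov property by expressing the next state as a measurable function of the current state and fresh independent randomness (proposal noise, decoder draw, and the uniform accept/reject variable), the paper doing this by introducing $\mathcal{F}_m=\sigma(Z_0,Y_0,U_{1:m})$ and computing $\mathbb{E}[h(Z_{m+1},Y_{m+1})\mid\mathcal{F}_m]$ to read off the kernel $P_\theta$ explicitly. The only difference is organizational: the paper proves the $K=1$ case and then extends to $K>1$ with a product-form kernel, whereas you keep the full particle ensemble as the state from the outset, which accommodates the interacting proposal $q(\cdot\mid z_m^{1:K})$ slightly more directly.
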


The explicit expression for $P_{\theta}$ is provided in Proposition \ref{prop:MC_reversible2}.
Consider the following assumptions.
\begin{assumption}\label{ass:cost_bound_positivity} 
The cost function $C$ is bounded. For all $\phi\in\Phi$, the encoder distribution $p_{\phi}$ is positive.
Furthermore, the decoder probability satisfies
for all $1 \leq t \leq T$, and all $(\y_{1:t-1}, x, z)$, $p_{\theta}(\y_t = i|\y_{1:t-1}, x, z) > 0$ for all feasible nodes $i \in \{0, \ldots, n\}$.
\end{assumption}

Since $\Yset$ is discrete, the boundedness of $C$ is a natural assumption, analogous to bounded rewards in RL \citep{fallah2021convergence}. A Gaussian choice for $p_{\phi}$ is standard, enabling the reparameterization trick \citep{kingma2013auto} for efficient gradient backpropagation. To ensure positivity of the decoder, a common approach is to use a softmax function in the final layer of the neural network, which is a standard practice in most architectures.

\begin{assumption}\label{ass:symmetric_proposal}
The proposal density $q$ is positive and symmetric.
\end{assumption}

Assumption \ref{ass:symmetric_proposal} on the proposal density $q$ is commonly used in various sampling-based methods, such as Importance Sampling and MCMC \citep{douc2004quantitative}. It holds for a wide range of distributions, including Gaussian, Laplace, and Uniform.

\begin{theorem} \label{th:ergodic_theta_fixed}
Let Assumptions \ref{ass:cost_bound_positivity} and \ref{ass:symmetric_proposal} hold. Then, the Markov kernel $P_{\theta}$ admits a unique invariant probability measure $\pi_{\theta}$, defined in \eqref{eq:inference_target}. There exist constants $\rho_1, \rho_2 \in (0,1)$ and $\kappa_1, \kappa_2 \in \mathbb{R}_+$ such that for all $\mu \in \mathsf{M}_1(\Zset \times \Yset)$, $\theta \in \Theta$, and $m \in \mathbb{N}$, 
\begin{align}
\|\mu P_{\theta}^m - \pi_{\theta}\|_{\mathrm{TV}}
&\leq \kappa_1 \rho_1^m, \label{eq:tv_bound}\\
\|\mu P_{\theta}^m - \pi_{\theta}\|_{\mathrm{L}_2}
&\leq \kappa_2 \rho_2^m \|\mu - \pi_{\theta}\|_{\mathrm{L}_2}\eqsp. \label{eq:l2_bound}
\end{align}
\end{theorem} 

Theorem \ref{th:ergodic_theta_fixed} shows that the Markov Chain generated by our algorithm with a fixed $\theta$ converges geometrically to the joint target distribution in both Total Variation and $\mathrm{L}_2$-distance. Specifically, when the initial distribution is $\mu = p_{\phi} p_{\theta}$, the theorem guarantees rapid mixing of the Markov Chain, provided that the model is well-trained.

\subsection{\texorpdfstring{Convergence Analysis for Adaptive $\theta$}{Convergence Analysis for Adaptive theta}}

Incorporating SA steps introduce a time-inhomogeneous Markov Chain, where both the Markov kernel and the target distribution evolve dynamically. While only a few results are available in this setting \citep{douc2004quantitative}, we establish new convergence results for time-inhomogeneous Markov Chains under assumptions adapted to our setting, without relying on strict convexity or coercivity of the cost function $C$. For simplicity, we only present the results relating to our setting here; more general results are provided in Appendix \ref{app:subsec:time_inhomo_mcmc_general}. To analyze the convergence, we introduce the following additional assumptions.

\begin{assumption}\label{ass:bounded_decoder_score}
There exists $L \in \mathsf{F}(\Xset \times \Zset \times \Yset)$ such that for all $x \in \Xset, y \in \Yset, z \in \Zset$ and $\theta \in \Theta$,
$$
\left\|\nabla_{\theta} \log p_{\theta}(y|z,x) \right\| \leq L(x,y,z) \eqsp.
$$
\end{assumption}

Assumption \ref{ass:bounded_decoder_score} is commonly used in the analysis of convergence rates of policies \citep{papini2018stochastic, surendran2025theoretical}. In Adaptive MCMC, the Lipschitz condition is often applied to the Markov kernel rather than the target distribution \citep{andrieu2006efficiency, andrieu2006ergodicity}. However, since here both the kernel and the target distribution evolve dynamically, this Lipschitz condition with respect to the target distribution is more appropriate.

\begin{assumption}\label{ass:parameter_convergence}
There exists $\theta_{\infty} \in \Theta$ and a positive sequence $\left(a_m\right)_{m \in \mathbb{N}}$, with $a_m \to 0$ as $m \to \infty$ such that  
$$
 \left\| \theta_m - \theta_{\infty} \right\|_{\mathrm{L}_2}
 = O(a_m) \eqsp.
$$
\end{assumption}

Notably, we do not require $\theta_{\infty}$ to be a unique minimizer; it can simply be a critical point, which is often the case when the objective function in inference is non-convex. With additional regularity assumptions on the objective, this condition can be verified (see Appendix \ref{app:subsec:SA_rate}).

\begin{theorem} \label{th:ergodic_adaptive}
Let Assumptions \ref{ass:cost_bound_positivity} - \ref{ass:parameter_convergence} hold. Then, there exist a constant $\rho \in (0,1)$ and a positive sequence $\left(b_m\right)_{m \in \mathbb{N}}$ such that for all $\mu \in \mathsf{M}_1(\Zset \times \Yset)$, and $m \in \mathbb{N}$,
\[
\mathbb{E}\Big[\big\|\mu P_{\theta_1}\cdots P_{\theta_m}-\pi_{\theta_\infty}\big\|_{\TV}\Big]
= \mathcal{O}\big(\rho^{b_m} + R_m\big)\eqsp.
\]
where $R_m := \sum_{j=m-b_m}^{m-1} \gamma_{j+1} + a_m$.
Furthermore, if $\limsup_{m\to\infty} \left(b_m^{-1} + b_m/m + b_m \gamma_m\right) = 0$, then: 
\[
\mathbb{E} \left[\left\|\mu P_{\theta_1} \cdots P_{\theta_m}-\pi_{\theta_{\infty}}\right\|_{\TV}\right] \xrightarrow[m \to \infty]{} 0 \eqsp.
\]
\end{theorem} 

Theorem \ref{th:ergodic_adaptive} establishes that the time-inhomogeneous Markov Chain generated by our algorithm converges to the joint target distribution $\pi_{\theta_{\infty}}$.
The bound in Theorem~\ref{th:ergodic_adaptive} has three key components: $(i)$ the mixing error, which reflects how well the Markov Chain mixes from an arbitrary initial distribution; $(ii)$ the tracking error, which quantifies how much the stationary distribution shifts over time due to changes in the parameters; and $(iii)$ the optimization error, which measures the difference between the current parameters and their limiting value $\theta_{\infty}$.
These terms are interdependent: choosing a larger $b_m$ accelerates the convergence of the mixing error but may slow the convergence of the parameters, while the step size sequence $\gamma_m$ affects the convergence rate of the parameters $a_m$. If $\limsup_{m\to\infty} \left(b_m^{-1} + b_m/m + b_m \gamma_m\right) = 0$, the expected total variation distance between the Markov Chain and the target distribution tends to zero as $m \rightarrow \infty$, ensuring convergence. If $\gamma_m = m^{-\gamma}$, then choosing $b_m = \lfloor -\gamma \log(m) / \log(\rho) \rfloor$ yields a convergence rate of $\mathcal{O}\left(m^{-\gamma} \log m + a_m\right)$.

\section{Experiments}

\begin{table*}[tb]
\centering
\scriptsize
\renewcommand{\arraystretch}{0.9}
\caption{Experimental results on TSP and CVRP without the augmentation trick. “Obj.” denotes the average total travel distance, and “Time” indicates the total runtime for solving 1000 instances.}
\label{tab:results_TSP_CVRP} 
\resizebox{\textwidth}{!}{ 
\begin{tabular}{llcccccccccc} 
\toprule
\multicolumn{2}{l}{} & 
\multicolumn{3}{c}{\textbf{Training distribution}} & 
\multicolumn{6}{c}{\textbf{Generalization}} \\
\cmidrule(lr){3-5} \cmidrule(lr){6-11}
\multicolumn{2}{l}{} & 
\multicolumn{3}{c}{\textbf{n = 100}} & 
\multicolumn{3}{c}{\textbf{n = 125}} & 
\multicolumn{3}{c}{\textbf{n = 150}} \\
\cmidrule(lr){3-5} \cmidrule(lr){6-8} \cmidrule(lr){9-11}
& \textbf{Method} & Obj. & Gap & Time & Obj. & Gap & Time & Obj. & Gap & Time \\
\midrule
\multirow{10}{*}{\rotatebox{90}{TSP}}  & Concorde & 7.752 & 0.00\% & 8M & 8.583 & 0.00\% & 12M & 9.346 & 0.00\% & 17M \\
& LKH3 & 7.752 & 0.00\% & 47M & 8.583 & 0.00\% & 73M & 9.346 & 0.00\% & 99M \\
\cmidrule(lr){2-11}
& POMO (greedy) & 7.785 & 0.429\% & $<$1M & 8.640 & 0.664\% & $<$1M & 9.442 & 1.022\% & $<$1M \\
& POMO (sampling) & 7.768 & 0.206\% & 20M & 8.614 & 0.361\% & 30M & 9.406 & 0.642\% & 40M \\
& CVAE-Opt & 7.779 & 0.348\% & 15H & 8.646 & 0.736\% & 21H & 9.482 & 1.454\% & 30H \\
& EAS & 7.767 & 0.197\% & 20M & 8.607 & 0.280\% & 30M & 9.387 & 0.434\% & 40M \\
& COMPASS & 7.753 & 0.014\% & 20M & 8.586 & 0.035\% & 30M & 9.358 & 0.128\% & 40M \\
& ELG & 7.783 & 0.399\% & 20M & 8.634 & 0.594\% & 30M & 9.427 & 0.867\% & 40M \\
& CNF & 7.766 & 0.181\% & 20M & 8.607 & 0.279\% & 30M & 9.394 & 0.514\% & 40M \\
& LGS-Net (ours) & \bf 7.752 & \bf 0.002\% & 20M & \bf 8.584 & \bf 0.012\% & 30M & \bf 9.354 & \bf 0.081\% & 40M \\
\midrule
\multirow{10}{*}{\rotatebox{90}{CVRP}} & LKH3 & 15.54 & 0.00\% & 17H & 17.50 & 0.00\% & 19H & 19.22 & 0.00\% & 20H \\
& OR Tools & 17.084 & 9.936\% & 38M & 18.036 & 3.063\% & 64M & 21.209 & 10.349\% & 73M \\
\cmidrule(lr){2-11}
& POMO (greedy) & 15.740 & 1.287\% & $<$1M & 17.905 & 2.314\% & $<$1M & 19.882 & 3.444\% & $<$1M \\
& POMO (sampling) & 15.607 & 0.431\% & 40M & 17.652 & 0.869\% & 1H & 19.539 & 1.659\% & 1H30 \\
& CVAE-Opt & 15.752 & 1.364\% & 32H & 17.864 & 2.080\% & 36H & 19.843 & 3.240\% & 46H \\
& EAS & 15.563 & 0.148\% & 40M & 17.541 & 0.234\% & 1H & 19.319 & 0.515\% & 1H30 \\
& COMPASS & 15.561 & 0.135\% & 40M & 17.546 & 0.263\% & 1H & 19.358 & 0.718\% & 1H30 \\
& ELG & 15.736 & 1.261\% & 40M & 17.729 & 1.308\% & 1H & 19.516 & 1.540\% & 1H30 \\
& CNF & 15.591 & 0.328\% & 40M & 17.682 & 1.040\% & 1H & 19.998 & 4.047\% & 1H30 \\
& LGS-Net (ours) & \bf 15.524 & \bf -0.102\% & 40M & \bf 17.496 & \bf -0.022\% & 1H & \bf 19.286 & \bf 0.343\% & 1H30 \\
\bottomrule
\end{tabular}
}
\end{table*}

In this section, we illustrate our method using two classic CO problems: TSP and CVRP. We evaluate performance using benchmark datasets from the literature \citep{hottung2020learning}, consisting of 1,000 instances drawn from a training distribution---100 nodes uniformly sampled within the unit square. To evaluate generalization, we also test on two out-of-distribution datasets with larger sizes of 125 and 150 nodes. All experiments were run on a single NVIDIA RTX 6000 GPU. Our source code is publicly available.\footnote{\url{https://github.com/SobihanSurendran/LGS}}

\paragraph{Training and inference details. }
The encoder has 6 multi-head attention layers with 8 heads and embedding dimension $d_h=128$, while the decoder uses a single multi-head attention layer with 8 heads and key dimension $d_k=16$. We train with Adam using learning rate $5\times 10^{-4}$, batch size 128, and 8000 epochs. The entropic regularization coefficient $\beta$ is set to $0.01$, and the weights $\tau$ in the loss \eqref{eq:obj_training} follow an exponential decay schedule. We set the latent dimension to $d_z=100$ and constrain the latent space to a ball of radius $R=40$. For inference, we use the Gaussian proposal distribution with density, for all $m \in \mathbb{N}$ and $1 \leq k \leq K$,
$$
q(z^{k}_{m+1} \mid z^{1:K}_{m}) = \mathcal{N}\left(z_{m+1}^{k}; z_{m}^{k} + \gamma\left(z^{I_1}_{m} - z^{I_2}_{m}\right), \sigma^2 I_{d_z}\right),
$$
where $I_1, I_2 \sim \mathcal{U}(\{1, \dots, K\})$. Additional details on the training and inference setup are provided in Appendix \ref{app:sec:exp}.

\paragraph{Implementation details. }
In our inference method, the Markov chain runs in a low-dimensional latent space, so proposals and acceptance probabilities are cheap to compute. The main potential overhead comes from gradient updates. To keep inference efficient, we design the implementation as follows: (i) parameter updates are applied only to the last layer of the decoder, for which backpropagation is inexpensive; and (ii) updates are performed at fixed intervals rather than at every iteration. This avoids backpropagation through the full network and keeps the SA component lightweight. The choice of fixed update intervals is motivated not only by computational considerations but also by the need to let the chains explore the latent space under fixed parameters, since changing the parameters too frequently can hinder exploration. The effect of the update interval is illustrated in Figure~\ref{fig:sa_step_illus}.

\textbf{Baselines. } We compare our model to a range of state-of-the-art learning-based NCO methods and industrial solvers. These include Concorde \citep{applegate2006concorde}, an exact solver specialized for the TSP, LKH3 \citep{helsgaun2017extension}, a leading solver for CO problems, and Google OR-Tools \citep{perron2019ortools}, a widely used suite of optimization tools. Among the NCO methods, we evaluate our approach against POMO \citep{kwon2020pomo}, CVAE-Opt \citep{hottung2020learning}, EAS \citep{hottung2021efficient}, COMPASS \citep{chalumeau2023combinatorial}, ELG \citep{gao2023towards}, and CNF \citep{zhou2024collaboration}.

The average performance of each method on TSP and CVRP, where the gap to the best known solution is defined in \eqref{app:eq:gap}, is reported in Table \ref{tab:results_TSP_CVRP}, while the results with the augmentation trick of \citet{kwon2020pomo} are presented in Tables \ref{tab:results_TSP} and \ref{tab:results_CVRP}. Overall, our approach achieves state-of-the-art performance across most settings. For the TSP, our method produces near-optimal solutions and consistently reaches optimality when the augmentation trick is applied (Table \ref{tab:results_TSP}), while also outperforming other methods on out-of-distribution instances. Latent space models trained via RL (ours and COMPASS) outperform other baselines, highlighting the effectiveness of learned latent representations for capturing solution diversity. Importantly, our method surpasses COMPASS in all TSP settings. Although EAS achieves reasonable performance, it is considerably more expensive due to gradient computations at each iteration. For the CVRP, our model again outperforms all baselines, including both COMPASS and EAS. Furthermore, our method also surpasses LKH3 on instances with $n=100$ and $n=125$, and remains the only learning-based model that outperforms LKH3.

Our method also consistently outperforms both generalization-boosting methods (ELG, CNF) on TSP and CVRP. While each improves upon POMO, both still fall short of our approach. Their benefits are more pronounced under the stricter inference budgets of their original works; with our slightly larger budget, their advantage diminishes. In summary, our method achieves the best results across all TSP and CVRP settings and remains the top performer. While COMPASS is the closest competitor for TSP, EAS performs more strongly than COMPASS on CVRP, but still falls short of our method.

\begin{table}[!t]
\centering
\scriptsize
\caption{Comparison of different inference methods with our model on CVRP with $n=100$.}
\label{tab:comparison_methods}
\begin{tabular}{lcc}
\toprule
\textbf{Method} & Obj. & Gap \\
\midrule
Sampling & 15.652 & 0.721\% \\
DE & 15.561 & 0.135\% \\
CMA-ES & 15.582 & 0.271\% \\
EAS & 15.685 & 0.933\% \\
Adam & 15.632 & 0.592\% \\
SGLD & 15.607 & 0.431\% \\
Single MCMC & 15.649 & 0.701\% \\
Parallel MCMC (ours) & 15.557 & 0.109\% \\
Interacting MCMC (ours) & 15.535 & -0.032\% \\
LGS (ours) & \textbf{15.524} & \textbf{-0.102\%} \\
\bottomrule
\end{tabular}
\vskip -0.1in
\end{table}

\begin{figure}[!t]
\centering
\includegraphics[width=0.9\columnwidth]{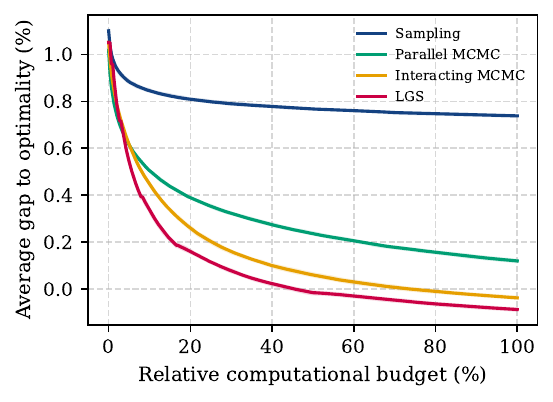}
\caption{Performance of sampling-based methods on CVRP with $n=100$, with bold lines indicating the mean over 10 runs.}
\label{fig:cvrp_inference}
\end{figure}

We compare the proposed cost-weighted loss to the POPPY loss of \citet{grinsztajn2023winner}. As shown in Table~\ref{tab:comparison_loss}, our loss yields slightly better results, though the gap remains small because our sufficiently large inference-time budget tends to mitigate differences induced by the training objective. We further evaluate deterministic and stochastic latent variable models in Table~\ref{tab:comparison_deterministic}, and compare our instance-conditioned model with its instance-independent variant in Table~\ref{tab:comparison_independent}. These results highlight the importance of using a stochastic, instance-conditioned latent model.

Table~\ref{tab:comparison_methods} and Figure~\ref{fig:cvrp_inference} compare different inference methods using the same trained model on CVRP instances with $n=100$ under a fixed wall-clock budget. Among the methods evaluated, Parallel MCMC, Interacting MCMC, and LGS consistently outperform other techniques, particularly DE and CMA-ES, which are commonly used inference methods in continuous spaces. In contrast, Single MCMC struggles due to limited exploration, leading to poor performance. Surprisingly, even gradient-based approaches such as Stochastic Gradient Langevin Dynamics (SGLD) and Adam perform poorly, as the high cost of gradient computations limits their effectiveness. EAS also underperforms, as the initial particles are insufficiently effective, and adjusting the parameters does not significantly improve the solution. This highlights the critical importance of particle propagation and parameter learning in improving solution quality. Notably, our method is the only one with negative gaps, yielding lower-cost solutions than LKH3 on CVRP.

Additionally, the “Sampling” method corresponds to direct sampling from the distribution defined in \eqref{eq:inference_target}, without the reweighting factor $\mathrm{exp}(-\lambda C(y, x))$. Figure \ref{fig:cvrp_inference} highlights the advantage of incorporating this reweighting factor: the blue curve shows sampling without this factor, whereas the green curve shows sampling with this factor using parallel MCMC but with no learning or interaction between chains. Adding interaction and learning yields notable further improvements,
suggesting that the gains come from both better latent-space exploration
and test-time adaptation of the decoder parameters. Figure \ref{fig:latent_plot} further illustrates how our method explores the continuous latent space to discover high‑quality solutions.
While interacting MCMC benefits from faster mixing (Theorem \ref{th:ergodic_theta_fixed}), it is nevertheless outperformed by LGS, which achieves better cost convergence despite potentially slower mixing (Theorem \ref{th:ergodic_adaptive}). This contrast highlights the importance of updating $\theta$ during inference: without the SA step, interacting MCMC may converge to samples from a possibly inaccurate distribution, resulting in suboptimal solutions. Nonetheless, interacting MCMC remains competitive due to the mitigating effect of the reweighting factor.

\begin{figure}[t]
\centering
\vskip 0.05in
\includegraphics[width=0.98\columnwidth, trim=0.3cm 1.2cm 0.4cm 2cm, clip]{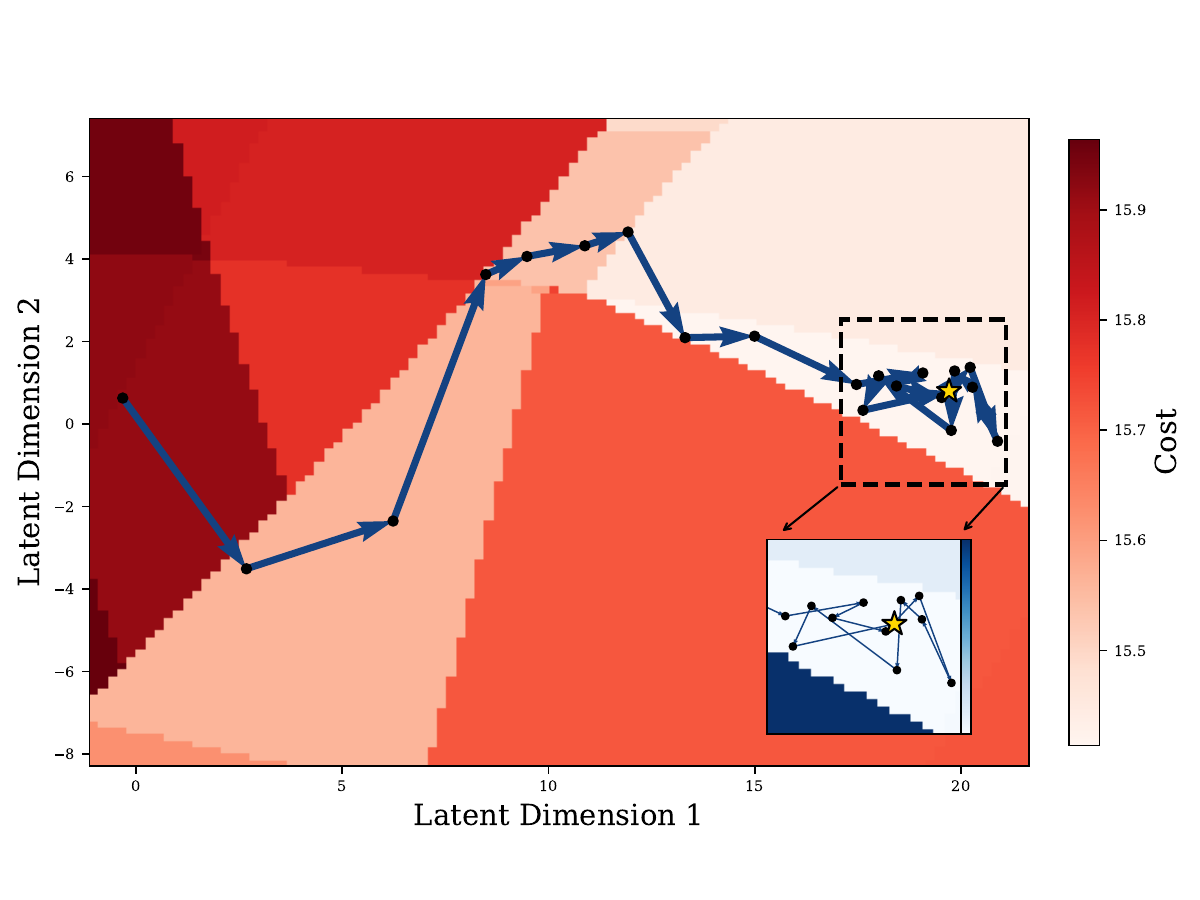}
\caption{Visualization of the 2-dimensional latent space ($d_z = 2$) learned by our model on a problem instance. The plotted path illustrates the search trajectory leading to the best-found solution.}
\label{fig:latent_plot}
\end{figure}

\paragraph{Scaling to larger instances.}
The present evaluation focuses on medium-size instances, following the standard benchmark regime used by most learning-based baselines considered in this work. Scaling our method to much larger instances would require both architectural and inference-time adaptations to
handle longer decoding horizons while preserving efficient exploration in latent space. On the architectural side, one possible direction is to use lightweight cross-attention mechanisms \citep{luo2025boosting} or sparse attention \citep{zaheer2020big} to reduce the cost of processing large instances. To keep inference efficient while maintaining good generalization, one could also explore more expressive but still efficient decoder designs \citep{luo2023neural}, rather than using extremely small decoders that may limit generalization.

On the latent-space side, if the latent dimension is kept fixed, the cost of the MCMC updates remains essentially unchanged, whereas the decoding cost increases with the instance size. To mitigate this, one possible extension would be to learn a lightweight surrogate cost predictor and use it within the acceptance ratio, thereby reducing the need to fully decode candidate solutions at every step. For very large instances, another promising direction is to combine our approach with decomposition-based methods \citep{li2021learning}. In such a setting, the learned model would decode or improve only a subproblem, rather than the entire large-scale instance in a single pass. More generally, scaling learned CO methods to very large instances may benefit from a divide-and-conquer strategy, in which the instance is decomposed into smaller subproblems that the model can handle effectively. The decomposition itself could also be learned, for example by identifying the most promising subproblems to optimize or by learning a partition of the instance and solving the resulting subproblems in parallel.

\paragraph{Constraints beyond constructive masking. }
Although our experiments focus on TSP and CVRP, the framework is not restricted to these routing problems. It can be extended to other CO problems by adapting the instance encoder, the decoder state, and the feasibility masking mechanism to the structure of the target problem. For example, the CVRP with time windows (CVRP-TW) can be handled by augmenting the decoder state with the current time, service times, and time-window information, while job shop scheduling problems (JSSP) can be handled by tracking machine availability and job precedence constraints. In Appendix~\ref{app:subsec:extensions}, we provide more details on how our method can be extended to these problems. Handling feasibility through state augmentation and masking is natural for classical routing problems such as CVRP, where feasibility can be checked sequentially during solution construction.

However, not all constrained combinatorial problems admit such a local constructive representation. Some constraints are global, history-dependent, or only verifiable after a full solution has been generated. In routing, such cases arise, for example, with route-balance
or route-compactness constraints, fairness constraints, or synchronization requirements between vehicles. In these settings, feasibility or solution quality may depend on the joint structure of several completed routes rather than on a single local decoding decision, so decoder-side masking alone may be insufficient. Extending our method to such constrained settings may require additional mechanisms, such as repairing infeasible decoded solutions or
incorporating penalties for constraint violations into the training cost and the exponential reweighting factor used in the inference-time target distribution.

\section{Conclusion}

This paper introduces LGS-Net, a novel latent space model for Neural Combinatorial Optimization that conditions directly on problem instances, thereby removing the need for labeled data and pretrained policies. We further propose a guided inference method that generates sequences of latent samples and corresponding solutions based on MCMC and SA. We establish that the iterates of our method form a time-inhomogeneous Markov Chain, with theoretical convergence guarantees. We evaluate our approach on TSP and CVRP, setting a new benchmark for learning-based NCO methods, both with and without domain-specific augmentations. A promising direction for future work is to explore how frequently the parameters of the target distribution should be updated to balance convergence, exploration, and computational efficiency. Another important extension is to scale the method to larger instances, for example by combining it with decomposition-based strategies. More broadly, handling richer constrained problems whose feasibility cannot be enforced by decoder-side masking remains an open challenge for NCO methods.

\section*{Acknowledgements}
The PhD of Sobihan Surendran was funded by the Paris Region PhD Fellowship Program of Région Ile-de-France. We would like to thank SCAI (Sorbonne Center for Artificial Intelligence) for providing the computing clusters.

\section*{Impact Statement}

This paper presents work whose goal is to advance the field of Machine Learning. There are many potential societal consequences of our work, none which we feel must be specifically highlighted here.

\bibliography{ref}

@article{hopfield1985neural,
  title={“Neural” computation of decisions in optimization problems},
  author={Hopfield, John J and Tank, David W},
  journal={Biological cybernetics},
  volume={52},
  number={3},
  pages={141--152},
  year={1985},
  publisher={Springer}
}

@inproceedings{sutskever2014sequence,
  title={Sequence to sequence learning with neural networks},
  author={Sutskever, Ilya and Vinyals, Oriol and Le, Quoc V},
  booktitle={Advances in Neural Information Processing Systems},
  volume={27},
  year={2014}
}

@inproceedings{vinyals2015pointer,
  title={Pointer networks},
  author={Vinyals, Oriol and Fortunato, Meire and Jaitly, Navdeep},
  booktitle={Advances in Neural Information Processing Systems},
  volume={28},
  year={2015}
}

@inproceedings{bello2016neural,
  title={Neural combinatorial optimization with reinforcement learning},
  author={Bello, Irwan and Pham, Hieu and Le, Quoc V. and Norouzi, Mohammad and Bengio, Samy},
  booktitle={International Conference on Learning Representations, Workshop Track},
  year={2017}
}

@inproceedings{khalil2017learning,
  title={Learning combinatorial optimization algorithms over graphs},
  author={Khalil, Elias and Dai, Hanjun and Zhang, Yuyu and Dilkina, Bistra and Song, Le},
  booktitle={Advances in Neural Information Processing Systems},
  volume={30},
  year={2017}
}

@inproceedings{kool2018attention,
  title={Attention, learn to solve routing problems!},
  author={Kool, Wouter and Van Hoof, Herke and Welling, Max},
  booktitle={International Conference on Learning Representations},
  year={2019}
}

@inproceedings{vaswani2017attention,
  title={Attention is all you need},
  author={Vaswani, Ashish and Shazeer, Noam and Parmar, Niki and Uszkoreit, Jakob and Jones, Llion and Gomez, Aidan N and Kaiser, {\L}ukasz and Polosukhin, Illia},
  booktitle={Advances in Neural Information Processing Systems},
  volume={30},
  year={2017}
}

@inproceedings{deudon2018learning,
  title={Learning heuristics for the {TSP} by policy gradient},
  author={Deudon, Michel and Cournut, Pierre and Lacoste, Alexandre and Adulyasak, Yossiri and Rousseau, Louis-Martin},
  booktitle={International conference on the integration of constraint programming, artificial intelligence, and operations research},
  pages={170--181},
  year={2018},
  organization={Springer}
}

@inproceedings{kwon2020pomo,
  title={{POMO}: Policy optimization with multiple optima for reinforcement learning},
  author={Kwon, Yeong-Dae and Choo, Jinho and Kim, Byoungjip and Yoon, Iljoo and Gwon, Youngjune and Min, Seungjai},
  booktitle={Advances in Neural Information Processing Systems},
  volume={33},
  pages={21188--21198},
  year={2020}
}

@inproceedings{hottung2020learning,
  title={Learning a latent search space for routing problems using variational autoencoders},
  author={Hottung, Andr{\'e} and Bhandari, Bhanu and Tierney, Kevin},
  booktitle={International Conference on Learning Representations},
  year={2021}
}

@article{helsgaun2017extension,
  title={An extension of the Lin-Kernighan-Helsgaun {TSP} solver for constrained traveling salesman and vehicle routing problems},
  author={Helsgaun, Keld},
  journal={Roskilde: Roskilde University},
  volume={12},
  pages={966--980},
  year={2017}
}

@inproceedings{nazari2018reinforcement,
  title={Reinforcement learning for solving the vehicle routing problem},
  author={Nazari, Mohammadreza and Oroojlooy, Afshin and Snyder, Lawrence and Tak{\'a}c, Martin},
  booktitle={Advances in Neural Information Processing Systems},
  volume={31},
  year={2018}
}

@inproceedings{grinsztajn2023winner,
  title={Winner takes it all: Training performant rl populations for combinatorial optimization},
  author={Grinsztajn, Nathan and Furelos-Blanco, Daniel and Surana, Shikha and Bonnet, Cl{\'e}ment and Barrett, Tom},
  booktitle={Advances in Neural Information Processing Systems},
  volume={36},
  pages={48485--48509},
  year={2023}
}

@inproceedings{chalumeau2023combinatorial,
  title={Combinatorial optimization with policy adaptation using latent space search},
  author={Chalumeau, Felix and Surana, Shikha and Bonnet, Cl{\'e}ment and Grinsztajn, Nathan and Pretorius, Arnu and Laterre, Alexandre and Barrett, Tom},
  booktitle={Advances in Neural Information Processing Systems},
  volume={36},
  pages={7947--7959},
  year={2023}
}

@inproceedings{huang2017arbitrary,
  title={Arbitrary style transfer in real-time with adaptive instance normalization},
  author={Huang, Xun and Belongie, Serge},
  booktitle={Proceedings of the IEEE International Conference on Computer Vision},
  pages={1501--1510},
  year={2017}
}

@inproceedings{chen2019learning,
  title={Learning to perform local rewriting for combinatorial optimization},
  author={Chen, Xinyun and Tian, Yuandong},
  booktitle={Advances in Neural Information Processing Systems},
  volume={32},
  year={2019}
}

@inproceedings{lu2019learning,
  title={A learning-based iterative method for solving vehicle routing problems},
  author={Lu, Hao and Zhang, Xingwen and Yang, Shuang},
  booktitle={International Conference on Learning Representations},
  year={2019}
}

@article{joshi2019efficient,
  title={An efficient graph convolutional network technique for the travelling salesman problem},
  author={Joshi, Chaitanya K and Laurent, Thomas and Bresson, Xavier},
  journal={arXiv preprint arXiv:1906.01227},
  year={2019}
}

@inproceedings{hottung2021efficient,
  title={Efficient active search for combinatorial optimization problems},
  author={Hottung, Andr{\'e} and Kwon, Yeong-Dae and Tierney, Kevin},
  booktitle={International Conference on Learning Representations},
  year={2022}
}

@inproceedings{choo2022simulation,
  title={Simulation-guided beam search for neural combinatorial optimization},
  author={Choo, Jinho and Kwon, Yeong-Dae and Kim, Jihoon and Jae, Jeongwoo and Hottung, Andr{\'e} and Tierney, Kevin and Gwon, Youngjune},
  booktitle={Advances in Neural Information Processing Systems},
  volume={35},
  pages={8760--8772},
  year={2022}
}

@misc{perron2019ortools,
  author       = {Laurent Perron and Vincent Furnon},
  title        = {O{R}-{T}ools},
  year         = {2019},
  howpublished = {\url{https://developers.google.com/optimization/}},
}

@article{hansen2001completely,
  title={Completely derandomized self-adaptation in evolution strategies},
  author={Hansen, Nikolaus and Ostermeier, Andreas},
  journal={Evolutionary computation},
  volume={9},
  number={2},
  pages={159--195},
  year={2001},
  publisher={MIT Press}
}

@article{storn1997differential,
  title={Differential evolution--a simple and efficient heuristic for global optimization over continuous spaces},
  author={Storn, Rainer and Price, Kenneth},
  journal={Journal of global optimization},
  volume={11},
  pages={341--359},
  year={1997},
  publisher={Springer}
}

@article{williams1992simple,
  title={Simple statistical gradient-following algorithms for connectionist reinforcement learning},
  author={Williams, Ronald J},
  journal={Machine Learning},
  volume={8},
  pages={229--256},
  year={1992},
  publisher={Springer}
}

@inproceedings{konda1999actor,
  title={Actor-critic algorithms},
  author={Konda, Vijay and Tsitsiklis, John},
  booktitle={Advances in Neural Information Processing Systems},
  volume={12},
  year={1999}
}

@inproceedings{qiu2022dimes,
  title={{DIMES}: A differentiable meta solver for combinatorial optimization problems},
  author={Qiu, Ruizhong and Sun, Zhiqing and Yang, Yiming},
  booktitle={Advances in Neural Information Processing Systems},
  volume={35},
  pages={25531--25546},
  year={2022}
}

@inproceedings{luo2023neural,
  title={Neural combinatorial optimization with heavy decoder: Toward large scale generalization},
  author={Luo, Fu and Lin, Xi and Liu, Fei and Zhang, Qingfu and Wang, Zhenkun},
  booktitle={Advances in Neural Information Processing Systems},
  volume={36},
  pages={8845--8864},
  year={2023}
}

@article{xin2020step,
  title={Step-wise deep learning models for solving routing problems},
  author={Xin, Liang and Song, Wen and Cao, Zhiguang and Zhang, Jie},
  journal={IEEE Transactions on Industrial Informatics},
  volume={17},
  number={7},
  pages={4861--4871},
  year={2020},
  publisher={IEEE}
}

@inproceedings{xin2021multi,
  title={Multi-decoder attention model with embedding glimpse for solving vehicle routing problems},
  author={Xin, Liang and Song, Wen and Cao, Zhiguang and Zhang, Jie},
  booktitle={Proceedings of the AAAI Conference on Artificial Intelligence},
  volume={35},
  pages={12042--12049},
  year={2021}
}

@inproceedings{kwon2021matrix,
  title={Matrix encoding networks for neural combinatorial optimization},
  author={Kwon, Yeong-Dae and Choo, Jinho and Yoon, Iljoo and Park, Minah and Park, Duwon and Gwon, Youngjune},
  booktitle={Advances in Neural Information Processing Systems},
  volume={34},
  pages={5138--5149},
  year={2021}
}

@inproceedings{kim2021learning,
  title={Learning collaborative policies to solve np-hard routing problems},
  author={Kim, Minsu and Park, Jinkyoo and others},
  booktitle={Advances in Neural Information Processing Systems},
  volume={34},
  pages={10418--10430},
  year={2021}
}

@article{veres2019deep,
  title={Deep learning for intelligent transportation systems: A survey of emerging trends},
  author={Veres, Matthew and Moussa, Medhat},
  journal={IEEE Transactions on Intelligent Transportation Systems},
  volume={21},
  number={8},
  pages={3152--3168},
  year={2019},
  publisher={IEEE}
}

@article{dolgui2019scheduling,
  title={Scheduling in production, supply chain and Industry 4.0 systems by optimal control: fundamentals, state-of-the-art and applications},
  author={Dolgui, Alexandre and Ivanov, Dmitry and Sethi, Suresh P and Sokolov, Boris},
  journal={International Journal of Production Research},
  volume={57},
  number={2},
  pages={411--432},
  year={2019},
  publisher={Taylor \& Francis}
}

@article{liu2017combinatorial,
  title={Combinatorial chemistry in drug discovery},
  author={Liu, Ruiwu and Li, Xiaocen and Lam, Kit S},
  journal={Current opinion in chemical biology},
  volume={38},
  pages={117--126},
  year={2017},
  publisher={Elsevier}
}

@inproceedings{fu2021generalize,
  title={Generalize a small pre-trained model to arbitrarily large {TSP} instances},
  author={Fu, Zhang-Hua and Qiu, Kai-Bin and Zha, Hongyuan},
  booktitle={Proceedings of the AAAI Conference on Artificial Intelligence},
  volume={35},
  pages={7474--7482},
  year={2021}
}

@article{joshi2022learning,
  title={Learning the travelling salesperson problem requires rethinking generalization},
  author={Joshi, Chaitanya K and Cappart, Quentin and Rousseau, Louis-Martin and Laurent, Thomas},
  journal={Constraints},
  volume={27},
  number={1},
  pages={70--98},
  year={2022},
  publisher={Springer}
}

@inproceedings{kool2022deep,
  title={Deep policy dynamic programming for vehicle routing problems},
  author={Kool, Wouter and van Hoof, Herke and Gromicho, Joaquim and Welling, Max},
  booktitle={International conference on integration of constraint programming, artificial intelligence, and operations research},
  pages={190--213},
  year={2022},
  organization={Springer}
}

@incollection{hottung2020neural,
  title={Neural large neighborhood search for the capacitated vehicle routing problem},
  author={Hottung, Andr{\'e} and Tierney, Kevin},
  booktitle={ECAI 2020},
  pages={443--450},
  year={2020},
  publisher={IOS Press}
}

@inproceedings{he2016deep,
  title={Deep residual learning for image recognition},
  author={He, Kaiming and Zhang, Xiangyu and Ren, Shaoqing and Sun, Jian},
  booktitle={Proceedings of the IEEE Conference on Computer Vision and Pattern Recognition},
  pages={770--778},
  year={2016}
}

@article{glover1989tabu,
  title={Tabu search—part {I}},
  author={Glover, Fred},
  journal={ORSA Journal on computing},
  volume={1},
  number={3},
  pages={190--206},
  year={1989},
  publisher={Informs}
}

@article{mladenovic1997variable,
  title={Variable neighborhood search},
  author={Mladenovi{\'c}, Nenad and Hansen, Pierre},
  journal={Computers \& operations research},
  volume={24},
  number={11},
  pages={1097--1100},
  year={1997},
  publisher={Elsevier}
}

@article{kirkpatrick1983optimization,
  title={Optimization by simulated annealing},
  author={Kirkpatrick, Scott and Gelatt Jr, C Daniel and Vecchi, Mario P},
  journal={Science},
  volume={220},
  number={4598},
  pages={671--680},
  year={1983},
  publisher={American association for the advancement of science}
}

@inproceedings{krizhevsky2012imagenet,
  title={Imagenet classification with deep convolutional neural networks},
  author={Krizhevsky, Alex and Sutskever, Ilya and Hinton, Geoffrey E},
  booktitle={Advances in Neural Information Processing Systems},
  volume={25},
  year={2012}
}

@inproceedings{devlin2018bert,
  title={{BERT}: Pre-training of deep bidirectional transformers for language understanding},
  author={Devlin, Jacob and Chang, Ming-Wei and Lee, Kenton and Toutanova, Kristina},
  booktitle={Proceedings of the 2019 conference of the North American chapter of the association for computational linguistics: human language technologies, volume 1 (long and short papers)},
  pages={4171--4186},
  year={2019}
}

@inproceedings{steinbiss1994improvements,
  title={Improvements in beam search.},
  author={Steinbiss, Volker and Tran, Bach-Hiep and Ney, Hermann},
  booktitle={ICSLP},
  volume={94},
  pages={2143--2146},
  year={1994}
}

@article{browne2012survey,
  title={A survey of {Monte Carlo} tree search methods},
  author={Browne, Cameron B and Powley, Edward and Whitehouse, Daniel and Lucas, Simon M and Cowling, Peter I and Rohlfshagen, Philipp and Tavener, Stephen and Perez, Diego and Samothrakis, Spyridon and Colton, Simon},
  journal={IEEE Transactions on Computational Intelligence and AI in games},
  volume={4},
  number={1},
  pages={1--43},
  year={2012},
  publisher={IEEE}
}

@article{meyn1994computable,
  title={Computable bounds for geometric convergence rates of Markov chains},
  author={Meyn, Sean P and Tweedie, Robert L},
  journal={The Annals of Applied Probability},
  pages={981--1011},
  year={1994},
  publisher={JSTOR}
}

@article{baxendale2005renewal,
  author    = {Baxendale, Peter H.},
  title     = {Renewal theory and computable convergence rates for geometrically ergodic Markov chains},
  journal   = {The Annals of Applied Probability},
  volume    = {15},
  pages     = {700--738},
  year      = {2005}
}

@article{andrieu2006ergodicity,
  author    = {Andrieu, Christophe and Moulines, {\'E}ric},
  title     = {On the ergodicity properties of some adaptive {MCMC} algorithms},
  journal   = {The Annals of Applied Probability},
  volume    = {16},
  number    = {3},
  pages     = {1462--1505},
  year      = {2006}
}

@article{andrieu2006efficiency,
  author    = {Andrieu, Christophe and Atchad{\'e}, Yves F.},
  title     = {On the efficiency of adaptive {MCMC} algorithms},
  journal   = {Electronic Communications in Probability},
  volume    = {12},
  pages     = {336--349},
  year      = {2007}
}

@article{douc2004quantitative,
  title={Quantitative bounds on convergence of time-inhomogeneous Markov chains},
  author={Douc, Randal and Moulines, Eric and Rosenthal, Jeffrey S},
  journal={The Annals of Applied Probability},
  pages={1643--1665},
  year={2004},
  publisher={JSTOR}
}

@book{meyn2012markov,
  title={Markov chains and stochastic stability},
  author={Meyn, Sean P and Tweedie, Richard L},
  year={2012},
  publisher={Springer Science \& Business Media}
}

@article{metropolis1953equation,
  title={Equation of state calculations by fast computing machines},
  author={Metropolis, Nicholas and Rosenbluth, Arianna W and Rosenbluth, Marshall N and Teller, Augusta H and Teller, Edward},
  journal={The Journal of Chemical Physics},
  volume={21},
  number={6},
  pages={1087--1092},
  year={1953},
  publisher={American Institute of Physics}
}

@article{hastings1970monte,
  author    = {Hastings, W. K.},
  title     = {{Monte Carlo} sampling methods using Markov chains and their applications},
  journal   = {Biometrika},
  year      = {1970}
}

@article{geman1984stochastic,
  title={Stochastic Relaxation, {Gibbs} Distributions, and the Bayesian Restoration of Images},
  author={Geman, Stuart and Geman, Donald},
  journal={IEEE Transactions on Pattern Analysis and Machine Intelligence},
  volume={6},
  number={6},
  pages={721--741},
  year={1984},
  publisher={IEEE}
}

@article{duane1987hybrid,
  title={Hybrid {Monte Carlo}},
  author={Duane, Simon and Kennedy, Anthony D and Pendleton, Brian J and Roweth, Duncan},
  journal={Physics letters B},
  volume={195},
  number={2},
  pages={216--222},
  year={1987},
  publisher={Elsevier}
}

@article{haario2001adaptive,
  title     = {An adaptive Metropolis algorithm},
  author    = {Haario, Heikki and Saksman, Eero and Tamminen, Johanna},
  journal   = {Bernoulli},
  volume    = {7},
  number    = {2},
  pages     = {223--242},
  year      = {2001}
}

@article{gelman1997weak,
  title={Weak convergence and optimal scaling of random walk Metropolis algorithms},
  author={Gelman, Andrew and Gilks, Walter R and Roberts, Gareth O},
  journal={The Annals of Applied Probability},
  volume={7},
  number={1},
  pages={110--120},
  year={1997},
  publisher={Institute of Mathematical Statistics}
}

@article{andrieu2001convergence,
  title={Convergence of simulated annealing using Foster-Lyapunov criteria},
  author={Andrieu, Christophe and Breyer, Laird A and Doucet, Arnaud},
  journal={Journal of Applied Probability},
  volume={38},
  number={4},
  pages={975--994},
  year={2001},
  publisher={Cambridge University Press}
}

@inproceedings{kingma2013auto,
  title={Auto-Encoding Variational Bayes},
  author={Kingma, Diederik P and Welling, Max},
  booktitle={International Conference on Learning Representations},
  year={2014}
}

@inproceedings{surendran2025theoretical,
  title={Theoretical Convergence Guarantees for Variational Autoencoders},
  author={Surendran, Sobihan and Godichon-Baggioni, Antoine and Le Corff, Sylvain},
  booktitle={International Conference on Artificial Intelligence and Statistics},
  pages={3547--3555},
  year={2025},
  organization={PMLR}
}

@inproceedings{surendran2024non,
  title={Non-asymptotic analysis of biased adaptive stochastic approximation},
  author={Surendran, Sobihan and Fermanian, Adeline and Godichon-Baggioni, Antoine and Le Corff, Sylvain},
  booktitle={Advances in Neural Information Processing Systems},
  volume={37},
  pages={12897--12943},
  year={2024}
}

@inproceedings{papini2018stochastic,
  title={Stochastic variance-reduced policy gradient},
  author={Papini, Matteo and Binaghi, Damiano and Canonaco, Giuseppe and Pirotta, Matteo and Restelli, Marcello},
  booktitle={International Conference on Machine Learning},
  pages={4026--4035},
  year={2018},
  organization={PMLR}
}

@book{douc2018markov,
  title={Markov chains: Basic definitions},
  author={Douc, Randal and Moulines, Eric and Priouret, Pierre and Soulier, Philippe and Douc, Randal and Moulines, Eric and Priouret, Pierre and Soulier, Philippe},
  year={2018},
  publisher={Springer}
}

@inproceedings{fallah2021convergence,
  title={On the convergence theory of debiased model-agnostic meta-reinforcement learning},
  author={Fallah, Alireza and Georgiev, Kristian and Mokhtari, Aryan and Ozdaglar, Asuman},
  booktitle={Advances in Neural Information Processing Systems},
  volume={34},
  pages={3096--3107},
  year={2021}
}

@article{roberts1997geometric,
  author    = {Gareth Roberts and Jeffrey Rosenthal},
  title     = {Geometric ergodicity and hybrid Markov chains},
  journal   = {Electronic Communications in Probability},
  volume    = {2},
  number    = {2},
  pages     = {13--25},
  year      = {1997}
}

@inproceedings{kingma2014adam,
  title={Adam: A method for stochastic optimization},
  author={Kingma, Diederik P and Ba, Jimmy},
  booktitle={International Conference on Learning Representations},
  year={2015}
}

@inproceedings{goodfellow2014generative,
  title={Generative adversarial nets},
  author={Goodfellow, Ian J and Pouget-Abadie, Jean and Mirza, Mehdi and Xu, Bing and Warde-Farley, David and Ozair, Sherjil and Courville, Aaron and Bengio, Yoshua},
  booktitle={Advances in Neural Information Processing Systems},
  volume={27},
  year={2014}
}

@inproceedings{sohn2015learning,
  title={Learning structured output representation using deep conditional generative models},
  author={Sohn, Kihyuk and Lee, Honglak and Yan, Xinchen},
  booktitle={Advances in Neural Information Processing Systems},
  volume={28},
  year={2015}
}

@book{papadimitriou1998combinatorial,
  title={Combinatorial optimization: algorithms and complexity},
  author={Papadimitriou, Christos H and Steiglitz, Kenneth},
  year={1998},
  publisher={Courier Corporation}
}

@inproceedings{de2008z3,
  title={Z3: An efficient SMT solver},
  author={De Moura, Leonardo and Bj{\o}rner, Nikolaj},
  booktitle={International conference on Tools and Algorithms for the Construction and Analysis of Systems},
  pages={337--340},
  year={2008},
  organization={Springer}
}

@misc{applegate2006concorde,
  title={Concorde {TSP} solver},
  author={Applegate, David and Bixby, Ribert and Chvatal, Vasek and Cook, William},
  year={2006}
}

@inproceedings{ziebart2008maximum,
  title={Maximum entropy inverse reinforcement learning.},
  author={Ziebart, Brian D and Maas, Andrew L and Bagnell, J Andrew and Dey, Anind K and others},
  booktitle={Aaai},
  volume={8},
  pages={1433--1438},
  year={2008},
  organization={Chicago, IL, USA}
}

@inproceedings{haarnoja2018soft,
  title={Soft actor-critic: Off-policy maximum entropy deep reinforcement learning with a stochastic actor},
  author={Haarnoja, Tuomas and Zhou, Aurick and Abbeel, Pieter and Levine, Sergey},
  booktitle={International Conference on Machine Learning},
  pages={1861--1870},
  year={2018},
  organization={PMLR}
}

@article{robbins1951stochastic,
  title={A stochastic approximation method},
  author={Robbins, Herbert and Monro, Sutton},
  journal={The {A}nnals of {M}athematical {S}tatistics},
  pages={400--407},
  year={1951},
  publisher={JSTOR}
}

@inproceedings{karimi2019non,
  title={Non-asymptotic analysis of biased stochastic approximation scheme},
  author={Karimi, Belhal and Miasojedow, Blazej and Moulines, Eric and Wai, Hoi-To},
  booktitle={Conference on Learning Theory},
  pages={1944--1974},
  year={2019},
  organization={PMLR}
}

@inproceedings{hottung2024polynet,
  title={{PolyNet}: Learning diverse solution strategies for neural combinatorial optimization},
  author={Hottung, Andr{\'e} and Mahajan, Mridul and Tierney, Kevin},
  booktitle    = {International Conference on Learning Representations},
  year         = {2025}
}

@inproceedings{zhou2024collaboration,
  title={Collaboration! towards robust neural methods for routing problems},
  author={Zhou, Jianan and Wu, Yaoxin and Cao, Zhiguang and Song, Wen and Zhang, Jie and Shen, Zhiqi},
  booktitle={Advances in Neural Information Processing Systems},
  volume={37},
  pages={121731--121764},
  year={2024}
}

@inproceedings{gao2023towards,
  title={Towards generalizable neural solvers for vehicle routing problems via ensemble with transferrable local policy},
  author={Gao, Chengrui and Shang, Haopu and Xue, Ke and Li, Dong and Qian, Chao},
  booktitle    = {International Joint Conference on Artificial Intelligence},
  year         = {2024}
}

@inproceedings{burda2015importance,
  title={Importance Weighted Autoencoders},
  author={Burda, Yuri and Grosse, Roger and Salakhutdinov, Ruslan},
  booktitle={International Conference on Learning Representations},
  year={2016}
}

@inproceedings{bhattacharyya2018accurate,
  title={Accurate and Diverse Sampling of Sequences Based on a ``Best of Many'' Sample Objective},
  author={Bhattacharyya, Apratim and Schiele, Bernt and Fritz, Mario},
  booktitle={Proceedings of the IEEE Conference on Computer Vision and Pattern Recognition},
  pages={8485--8493},
  year={2018}
}

@article{bengio2023gflownet,
  title={{GFlowNet} foundations},
  author={Bengio, Yoshua and Lahlou, Salem and Deleu, Tristan and Hu, Edward J and Tiwari, Mo and Bengio, Emmanuel},
  journal={Journal of Machine Learning Research},
  volume={24},
  number={210},
  pages={1--55},
  year={2023}
}

@inproceedings{zhang2025adversarial,
  title={Adversarial generative flow network for solving vehicle routing problems},
  author={Zhang, Ni and Yang, Jingfeng and Cao, Zhiguang and Chi, Xu},
  booktitle={International Conference on Learning Representations},
  year={2025}
}

@inproceedings{fan2023dpok,
  title={{DPOK}: Reinforcement learning for fine-tuning text-to-image diffusion models},
  author={Fan, Ying and Watkins, Olivia and Du, Yuqing and Liu, Hao and Ryu, Moonkyung and Boutilier, Craig and Abbeel, Pieter and Ghavamzadeh, Mohammad and Lee, Kangwook and Lee, Kimin},
  booktitle={Advances in Neural Information Processing Systems},
  volume={36},
  pages={79858--79885},
  year={2023}
}

@article{uehara2024understanding,
  title={Understanding reinforcement learning-based fine-tuning of diffusion models: A tutorial and review},
  author={Uehara, Masatoshi and Zhao, Yulai and Biancalani, Tommaso and Levine, Sergey},
  journal={arXiv preprint arXiv:2407.13734},
  year={2024}
}

@inproceedings{venkatraman2024amortizing,
  title={Amortizing intractable inference in diffusion models for vision, language, and control},
  author={Venkatraman, Siddarth and Jain, Moksh and Scimeca, Luca and Kim, Minsu and Sendera, Marcin and Hasan, Mohsin and Rowe, Luke and Mittal, Sarthak and Lemos, Pablo and Bengio, Emmanuel and others},
  booktitle={Advances in Neural Information Processing Systems},
  volume={37},
  pages={76080--76114},
  year={2024}
}

@inproceedings{rombach2022high,
  title={High-resolution image synthesis with latent diffusion models},
  author={Rombach, Robin and Blattmann, Andreas and Lorenz, Dominik and Esser, Patrick and Ommer, Bj{\"o}rn},
  booktitle={Proceedings of the IEEE/CVF Conference on Computer Vision and Pattern Recognition},
  pages={10684--10695},
  year={2022}
}

@inproceedings{bowman2016generating,
  title={Generating sentences from a continuous space},
  author={Bowman, Samuel and Vilnis, Luke and Vinyals, Oriol and Dai, Andrew and Jozefowicz, Rafal and Bengio, Samy},
  booktitle={Proceedings of the 20th SIGNLL conference on computational natural language learning},
  pages={10--21},
  year={2016}
}

@article{an2015variational,
  title={Variational autoencoder based anomaly detection using reconstruction probability},
  author={An, Jinwon and Cho, Sungzoon},
  journal={Special lecture on IE},
  volume={2},
  number={1},
  pages={1--18},
  year={2015}
}

@article{gomez2018automatic,
  title={Automatic chemical design using a data-driven continuous representation of molecules},
  author={G{\'o}mez-Bombarelli, Rafael and Wei, Jennifer N and Duvenaud, David and Hern{\'a}ndez-Lobato, Jos{\'e} Miguel and S{\'a}nchez-Lengeling, Benjam{\'\i}n and Sheberla, Dennis and Aguilera-Iparraguirre, Jorge and Hirzel, Timothy D and Adams, Ryan P and Aspuru-Guzik, Al{\'a}n},
  journal={ACS Central Science},
  volume={4},
  number={2},
  pages={268--276},
  year={2018},
  publisher={ACS Publications}
}

@incollection{pisinger1998knapsack,
  title={Knapsack problems},
  author={Pisinger, David and Toth, Paolo},
  booktitle={Handbook of Combinatorial Optimization: Volume1--3},
  pages={299--428},
  year={1998},
  publisher={Springer}
}

@inproceedings{chalumeaumemory,
  title={Memory-Enhanced Neural Solvers for Routing Problems},
  author={Chalumeau, Felix and Shabe, Refiloe and De Nicola, Noah and Pretorius, Arnu and Barrett, Thomas D and Grinsztajn, Nathan},
  booktitle={Advances in Neural Information Processing Systems},
  year={2025}
}

@inproceedings{dickstein2015,
  title={Deep Unsupervised Learning using Nonequilibrium Thermodynamics},
  author={Sohl-Dickstein, Jascha and Weiss, Eric and Maheswaranathan, Niru and Ganguli, Surya},
  booktitle={International Conference on Machine Learning},
  pages={2256--2265},
  year={2015},
  organization={PMLR}
}

@inproceedings{ho2020denoising,
  title={Denoising diffusion probabilistic models},
  author={Ho, Jonathan and Jain, Ajay and Abbeel, Pieter},
  booktitle={Advances in Neural Information Processing Systems},
  volume={33},
  pages={6840--6851},
  year={2020}
}

@inproceedings{song2021score,
  title={Score-based generative modeling through stochastic differential equations},
  author={Song, Yang and Sohl-Dickstein, Jascha and Kingma, Diederik P and Kumar, Abhishek and Ermon, Stefano and Poole, Ben},
  booktitle={International Conference on Learning Representations},
  year={2021}
}

@inproceedings{li2021learning,
  title={Learning to delegate for large-scale vehicle routing},
  author={Li, Sirui and Yan, Zhongxia and Wu, Cathy},
  booktitle={Advances in Neural Information Processing Systems},
  volume={34},
  pages={26198--26211},
  year={2021}
}

@inproceedings{huang2025rethinking,
  title={Rethinking light decoder-based solvers for vehicle routing problems},
  author={Huang, Ziwei and Zhou, Jianan and Cao, Zhiguang and Xu, Yixin},
  booktitle={International Conference on Learning Representations},
  year={2025}
}

@inproceedings{luo2025boosting,
  title={Boosting neural combinatorial optimization for large-scale vehicle routing problems},
  author={Luo, Fu and Lin, Xi and Wu, Yaoxin and Wang, Zhenkun and Xialiang, Tong and Yuan, Mingxuan and Zhang, Qingfu},
  booktitle={International Conference on Learning Representations},
  year={2025}
}

@inproceedings{sun2023revisiting,
  title={Revisiting sampling for combinatorial optimization},
  author={Sun, Haoran and Goshvadi, Katayoon and Nova, Azade and Schuurmans, Dale and Dai, Hanjun},
  booktitle={International Conference on Machine Learning},
  pages={32859--32874},
  year={2023},
  organization={PMLR}
}

@inproceedings{son2023meta,
  title={{Meta-SAGE}: Scale meta-learning scheduled adaptation with guided exploration for mitigating scale shift on combinatorial optimization},
  author={Son, Jiwoo and Kim, Minsu and Kim, Hyeonah and Park, Jinkyoo},
  booktitle={International Conference on Machine Learning},
  pages={32194--32210},
  year={2023},
  organization={PMLR}
}

@inproceedings{ye2023deepaco,
  title={{DeepACO}: Neural-enhanced ant systems for combinatorial optimization},
  author={Ye, Haoran and Wang, Jiarui and Cao, Zhiguang and Liang, Helan and Li, Yong},
  booktitle={Advances in Neural Information Processing Systems},
  volume={36},
  pages={43706--43728},
  year={2023}
}

@inproceedings{kim2025ant,
  title={Ant Colony Sampling with {GFlowNets} for Combinatorial Optimization},
  author={Kim, Minsu and Choi, Sanghyeok and Kim, Hyeonah and Son, Jiwoo and Park, Jinkyoo and Bengio, Yoshua},
  booktitle={International Conference on Artificial Intelligence and Statistics},
  pages={469--477},
  year={2025},
  organization={PMLR}
}

@inproceedings{zaheer2020big,
  title={{Big Bird}: Transformers for longer sequences},
  author={Zaheer, Manzil and Guruganesh, Guru and Dubey, Kumar Avinava and Ainslie, Joshua and Alberti, Chris and Ontanon, Santiago and Pham, Philip and Ravula, Anirudh and Wang, Qifan and Yang, Li and others},
  booktitle={Advances in Neural Information Processing Systems},
  volume={33},
  pages={17283--17297},
  year={2020}
}

@article{vidal2012hybrid,
  title={A hybrid genetic algorithm for multidepot and periodic vehicle routing problems},
  author={Vidal, Thibaut and Crainic, Teodor Gabriel and Gendreau, Michel and Lahrichi, Nadia and Rei, Walter},
  journal={Operations Research},
  volume={60},
  number={3},
  pages={611--624},
  year={2012},
  publisher={INFORMS}
}

@article{vidal2022hybrid,
  title={Hybrid genetic search for the {CVRP}: Open-source implementation and {SWAP*} neighborhood},
  author={Vidal, Thibaut},
  journal={Computers \& Operations Research},
  volume={140},
  pages={105643},
  year={2022},
  publisher={Elsevier}
}
\bibliographystyle{icml2026}

\newpage
\appendix
\onecolumn

\section*{Notation}

\begin{table}[h!]
\centering
\caption{Summary of notation used throughout the paper.}
\label{app:tab:notation}
\begin{tabular}{@{}ll@{}}
\toprule
\textbf{Object} & \textbf{Description} \\ \midrule
$x = \{\mathsf{x}_i\}_{i=1}^n \in \Xset \subset \Rset^{n \times d_x}$ & Problem instance \\
$y = \left(\mathsf{y}_1, \cdots, \mathsf{y}_T\right) \in \Yset \subset \left\{0, \cdots, n\right\}^{T}$ & Solution \\
$z \in \Zset \subset \Rset^{d_z}$ & Latent variable \\ 
$(\Xset, \Xsigma)$ & Problem instance space $\Xset$ with Borel $\sigma$-algebra $\Xsigma = \mathcal{B}(\Xset)$ \\
$(\Yset, \Ysigma)$ & Discrete solution space $\Yset$ with the power set $\Ysigma = \mathcal{P}(\Yset)$ \\
$(\Zset, \Zsigma)$ & Latent space $\Zset$ with Borel $\sigma$-algebra $\Zsigma = \mathcal{B}(\Zset)$ \\
$\mathbb{P}_x$ & Distribution over problem instances \\
$p_{\phi}(z | x)$ & Encoder distribution \\ 
$p_{\theta}(y | x, z)$ & Decoder distribution \\ 
$C$ & Cost function \\
$n$ & Number of nodes in the instance \\
$t$, $T$ & Decoding step index and horizon \\
$m$, $M$ & Inference step index and total iterations \\
$k$, $K$ & Particle index and total number of particles \\ 
$B$ & Batch size used during training \\ 
\bottomrule
\end{tabular}
\end{table}

For a given batch of problem instances, we denote by $x_{(i)}$ the $i$-th input in a training batch. The corresponding solution and latent variable samples are denoted by $y_{(i)}^{k}$ and $z_{(i)}^{k}$ respectively, where $k$ indexes multiple samples drawn for the same input $x_{(i)}$. During inference, we denote by $y_{m}^{k}$ and $z_{m}^{k}$ the solution and latent variable of the $k$-th particle at the $m$-th inference iteration.

\section{Problem and Model Description}

\subsection{Problem Setting}
\label{app:pb_setting}

\textbf{Traveling Salesman Problem (TSP). }  
A TSP instance $x = \{\x_i\}_{i=1}^n$ consists of a set of $n$ nodes, where the feature $\x_i$ corresponds to its coordinates $c_i \in \mathbb{R}^2$. The objective is to find a permutation $y = (\y_1, \dots, \y_n)$ of the nodes, where $\y_t \in \{1, \dots, n\}$ and $\y_t \neq \y_{t'}$ for all $t \neq t'$, that minimizes the total tour length:
\begin{equation} \label{eq:cost_tsp_cvrp}
C(y,x) = \sum_{i=1}^{n-1} \|x_{ \y_{i+1}} - x_{\y_i}\| + \|x_{ \y_n} - x_{\y_1}\| \eqsp,
\end{equation}
where $\| \cdot \|$ denotes the Euclidean norm. Note that the number of decoder steps $T$ equals the number of nodes $n$, i.e., $T=n$.

\textbf{Capacitated Vehicle Routing Problem (CVRP). }  
CVRP generalizes TSP by introducing a depot (indexed as $0$) and multiple routes, each starting and ending at the depot. Each customer $i \in \{1, \dots, n\}$ has a demand $d_i > 0$ and a location $c_i \in \mathbb{R}^2$, while the depot has $d_0 = 0$. A fleet of vehicles, each with a capacity $D > 0$, serves the customers. The goal is to determine the minimum number of vehicles and the corresponding routes, ensuring that each customer is visited exactly once and that the total demand in each route does not exceed $D$: for any route $j$, 
$$
\sum_{i \in R_j} d_i \leq D \eqsp,
$$
where $R_j$ denotes the set of customers assigned to route $j$.

\subsection{Model Architecture details}
\label{app:encoder_decoder}

\subsubsection{Encoder}
\label{app:subsec:encoder}

Given $d_x$-dimensional input features $\x_i$, the encoder initially computes $d_h$-dimensional node embeddings $h^{(0)}_i$ through a learned linear projection using parameters $W_0$ and $b_0$:
$$
h^{(0)}_i = W_0 \x_i + b_0 \eqsp.
$$
The embeddings are updated using $L$ attention layers, each consisting of two sublayers: a multi-head attention ($\mathsf{MHA}$) layer followed by a node-wise fully connected feed-forward ($\mathsf{FF}$) layer. Each sublayer adds a skip connection \citep{he2016deep} and instance normalization ($\mathsf{InstanceNorm}$) \citep{huang2017arbitrary}. Denoting $h^{(l)}_i$ as the node embeddings produced by layer $l \in \{1, \dots, L\}$, the updates are defined as follows:
\begin{align*}
\hat{h}_i^{(l+1)} &= \mathsf{InstanceNorm}\left(h^{(l)}_i + \mathsf{MHA}\left(h^{(l)}_1, \ldots, h^{(l)}_n\right) \right), \\
h_i^{(l+1)} &= \mathsf{InstanceNorm}\left(\hat{h}_i^{(l+1)} + \mathsf{FF}(\hat{h}_i^{(l+1)})\right) \eqsp.
\end{align*}

Then, it computes an aggregated embedding $\bar{h}^{(L)}$ of the input graph as the mean of the final node embeddings $h^{(L)}_i$. Finally, the encoder generates a latent space vector using a reparameterization trick:
$$
z = \mu_{\phi}(x) + \sigma_{\phi}(x) \odot \varepsilon , \quad \varepsilon \sim \mathcal{N}(0, I_{d_z}) \eqsp,
$$
where the mean $\mu_{\phi}(x)$ and log-variance $\log \sigma_{\phi}(x)^2$ of the conditional distribution $p_{\phi}(z|x)$ are given by:
$$
\mu_{\phi}(x) = \mathsf{FF}\left(\bar{h}^{(L)}\right)
\quad \quad \text{and} \quad \quad 
\log \sigma_{\phi}(x)^2 = \mathsf{FF}\left(\bar{h}^{(L)}\right) \eqsp,
$$
where the feedforward network $\mathsf{FF}$ uses a soft-clipping activation function \citep{surendran2025theoretical}.

\subsubsection{TSP Decoder}

The context $c_{t}$ for the TSP decoder at time $t$ is derived by combining the latent vector $z$ and the output up to time $t-1$. Specifically, the context is defined as:
$$
c_{t} = \left[z, h^{(L)}_{\y_{t-1}}, h^{(L)}_{\y_{0}}\right] \eqsp,
$$
where $h^{(L)}_{\y_{0}}$ and $h^{(L)}_{\y_{t-1}}$ represent the embeddings of the starting node and the previously selected node, respectively.

\textbf{Computation of Probabilities.}
The output probabilities for the TSP decoder are computed using a single decoder layer with multi-head attention. This layer computes probabilities while incorporating a masking mechanism:
$$
p_{\theta}(\y_t = i|\y_{0:t-1}, x, z) =
\begin{cases}
    \text{softmax } \left( \omega \tanh \left( \frac{q^T_{(c_t)} k_i}{\sqrt{d_k}} \right) \right) & \text{if } i \neq \y_{s} \quad \forall s < t \eqsp, \\
    0 & \text{otherwise} \eqsp, \\
\end{cases}
$$
where the query and key are given by $q_{(c_t)} = \mathsf{MHA}\left(c_t, \{h^{(L)}_i\}_{i=1}^n, \{h^{(L)}_i\}_{i=1}^n\right)$ and $k_{i} = W^K h^{(L)}_{i}$ respectively.

\subsubsection{CVRP Decoder}
\label{app:subsec:cvrp_decoder}

Similar to the TSP decoder, the context $c_{t}$ for the CVRP decoder at time $t$ is derived by combining the latent vector $z$ and the output up to time $t-1$. Specifically, the context is defined as:
$$
c_{t} = \left[z, h^{(L)}_{\y_{t-1}}, \widehat{D}_t\right] \eqsp,
$$
where we keep track of the remaining vehicle capacity $\widehat{D}_t$ at time $t$. At $t = 1$, this is initialized as $\widehat{D}_t = D$, after which it is updated as follows:
$$
\widehat{D}_{t+1} =
\begin{cases}
    \max(\widehat{D}_t - d_{\y_t, t}, 0) & \text{if } \y_t \neq 0 \eqsp, \\
    D & \text{if } \y_t = 0 \eqsp.
\end{cases}
$$

\textbf{Computation of Probabilities.}
The output probabilities for the CVRP decoder are computed using a single decoder layer with multi-head attention. This layer computes probabilities while incorporating a masking mechanism:
$$
p_{\theta}(\y_t = i|\y_{0:t-1}, x, z) =
\begin{cases}
    \text{softmax } \left( \omega \tanh \left( \frac{q^T_{(c_t)} k_i}{\sqrt{d_k}} \right) \right) & \text{if } i \neq \y_{s} \quad \forall s < t \quad \text{and} \quad d_{i,t} \leq \widehat{D}_t \eqsp, \\
    0 & \text{otherwise} \eqsp, \\
\end{cases}
$$
where the query and key are given by $q_{(c_t)} = \mathsf{MHA}\left(c_t, \{h^{(L)}_i\}_{i=1}^n, \{h^{(L)}_i\}_{i=1}^n\right)$ and $k_{i} = W^K h^{(L)}_{i}$ respectively.

\subsection{Extension to Other CO Problems}
\label{app:subsec:extensions}

The proposed framework is not tied to a specific routing problem. It can be extended to other CO problems by adapting the instance encoder, decoder state, and feasibility masking mechanism. The latent inference procedure remains the same: latent variables are sampled from the instance-conditioned encoder, candidate solutions are decoded, and the resulting costs are used in the MCMC acceptance step and SA update. Below, we outline how this can be done for two representative constrained problems.

\paragraph{CVRP with time windows.}
For CVRP-TW, the encoder can take node features $\mathsf{x}_i=(c_i,d_i,a_i,b_i,s_i)$, where $c_i$ is the customer location, $d_i$ the demand, $s_i$ the service time, and $[a_i,b_i]$
the associated time window. At decoding step $t$, the decoder state keeps track not only of the remaining vehicle capacity $\widehat{D}_t$, but also of the current time $T_t$, corresponding to the time at which the vehicle is ready to depart from the current node. If customer $i$ is selected at step $t$, the time is updated as
$$
T_{t+1} = \max\left(T_t+\tau_{y_{t-1},i},a_i\right)+s_i \eqsp,
$$
where $\tau_{jk}$ denotes the travel time from node $j$ to node $k$. The masking mechanism can then be extended by excluding customers that violate either the capacity constraint or the time-window constraint, for example those satisfying
$$
T_t+\tau_{y_{t-1},i} > b_i \eqsp.
$$
If the depot is also associated with a time window, the mask can further exclude moves that prevent the vehicle from returning to the depot before its closing time.

\paragraph{Job shop scheduling.}
For JSSP, the encoder can process operation-wise features $\mathbf{x}_i=(p_i,m_i,\mathrm{pos}_i,j_i)$ for each operation $i$, where $p_i$ is the processing time, $m_i$ the required machine, $\mathrm{pos}_i$ the position within its job, and $j_i$ the job index. The decoder context can incorporate the current scheduling state, such as machine availability and job progress. At each step, the decoder selects the next operation to schedule, while the masking mechanism enforces precedence feasibility by allowing only operations whose predecessors have already been scheduled. Once an operation is selected, its start time is determined by the earliest time at which both its machine and its job are available. The cost can then be defined as the resulting makespan.

\subsection{Training}
\label{app:training}

When setting $w^{k} = 1$ and $\beta=0$ in the training objective defined in \eqref{eq:obj_training}, the loss reduces to minimizing the expected cost over $K$ latent samples, without weighting or entropy regularization. We introduce the entropy term (controlled by $\beta$) to encourage diversity in the decoder’s outputs, inspired by maximum entropy reinforcement learning \citep{ziebart2008maximum, haarnoja2018soft}.

One could, in principle, also regularize the encoder $p_\phi$ via an entropy term. However, we found empirically that this had negligible impact on performance. Our intuition is that the decoder already induces sufficient stochasticity, and that exploration of the latent space is effectively handled during inference. Moreover, including encoder regularization would require tuning additional hyperparameters, which we deliberately avoided for the sake of simplicity.

\paragraph{Justification for the weights $w^k$. \\}

\citet{bhattacharyya2018accurate} motivate the ``Best-of-Many'' sample objective by observing that the standard multi-sample VAE \citep{kingma2013auto} loss averages the reconstruction error across all latent samples, allowing even low-quality samples to influence learning. To address this, they propose training on only the best sample. However, relying solely on the best sample may overlook valuable learning signals from other informative samples.

The idea behind our introduction of weights $w^{k} = \mathrm{exp}(- C(y^{k}, x) / \tau)$ serves to softly prioritize lower-cost solutions among multiple latent samples drawn from $p_{\phi}(z|x)$. This design is inspired by importance weighting techniques used in IWAE \citep{burda2015importance}, where more promising samples are assigned greater influence during training. In our setting, all samples contribute to the gradient estimator, but poor-quality samples have reduced impact, while more promising ones are emphasized.

As discussed in Section \ref{sec:training}, when $w^k=1$ ($\tau$ is large), the model treats all samples equally, encouraging exploration. In contrast, when $\tau$ is small, the weighting scheme closely resembles training on only the best sample, as in \citet{bhattacharyya2018accurate}. In our experiments, we employ a decreasing schedule for $\tau$, which enables more stable learning in the early stages while progressively concentrating on high-quality regions of the latent space.

\paragraph{Importance sampling view of the weighted objective. \\}
Let $q_{\theta,\phi}(z,y\mid x)=p_{\phi}(z\mid x) p_{\theta}(y\mid x,z)$ denote the joint distribution and
\[
\pi_{\theta,\phi}(z,y\mid x)\propto
q_{\theta,\phi}(z,y\mid x) e^{-C(y,x)/\tau}
\]
the cost-weighted joint distribution, with normalizing constant $Z(x)$.
The importance ratio $w(z,y)=\pi_{\theta,\phi}/q_{\theta,\phi} \propto e^{-C(y,x)/\tau}$ satisfies
\[
\mathbb{E}_{q_{\theta,\phi}} \bigl[w(z,y) C(y,x) \bigr] = \mathbb{E}_{\pi_{\theta,\phi}}[C(y,x)] \eqsp.
\]

Indeed, we have:
\begin{align*}
\mathbb{E}_{q_{\theta,\phi}} \bigl[w(z,y) C(y,x) \bigr]
  &= \int
      C(y,x)
      \frac{\pi_{\theta,\phi}(z,y\mid x)}{q_{\theta,\phi}(z,y\mid x)}
      q_{\theta,\phi}(z,y\mid x) dz dy \\[6pt]
  &= \frac{1}{Z(x)}
     \int C(y,x) e^{-C(y,x)/\tau}
          q_{\theta,\phi}(z,y\mid x) dz dy \\[4pt]
  &= \mathbb{E}_{\pi_{\theta,\phi}(\cdot\mid x)}
      \bigl[C(y,x)\bigr].
\end{align*}

Drawing $K$ i.i.d.\ pairs $(z^{k},y^{k})\sim q_{\theta,\phi}(\cdot\mid x)$, we obtain
\[
\mathbb{E}_{q_{\theta,\phi}^{\otimes K}}
   \Bigl[
          \textstyle{\sum_{k=1}^{K}}
          w^{k} C\bigl(y^{k},x\bigr)
        \Bigr] = \mathbb{E}_{\pi_{\theta,\phi}}[C(y,x)].
\]
Hence, the weighted multi-sample loss can be interpreted as an importance sampling estimator of the expected cost under the cost-weighted target distribution $\pi_{\theta,\phi}$.

\paragraph{Gradient computation. \\}

The following proposition provides the gradient of the training objective defined in \eqref{eq:obj_training}.

\begin{Proposition} \label{prop:grad_training}
For all $x \in \Xset$, $\theta \in \Theta$ and $\phi \in \Phi$, we have:
\begin{align*}
\nabla_{\theta} \mathcal{L}(\phi, \theta; x) &= 
\sum_{k=1}^{K} 
\mathbb{E}_{z^{k} \sim p_{\phi}(\cdot \mid x)} 
\left[ 
\mathbb{E}_{y^{k} \sim p_{\theta}(\cdot \mid x, z^{k})} 
\left[ w^{k} C(y^{k}, x) \nabla_{\theta} \log p_{\theta}(y^k | x, z^k) \right] \right] \\
& \quad + \beta \sum_{k=1}^{K} 
\mathbb{E}_{z^{k} \sim p_{\phi}(\cdot \mid x)} 
\left[ 
\mathbb{E}_{y^{k} \sim p_{\theta}(\cdot \mid x, z^{k})} 
\left[ \log p_{\theta}(y^k | x, z^k) \nabla_{\theta} \log p_{\theta}(y^k | x, z^k) \right]
\right] \eqsp, \\
\nabla_{\phi} \mathcal{L}(\phi, \theta; x) &= 
\sum_{k=1}^{K} 
\mathbb{E}_{z^{k} \sim p_{\phi}(\cdot \mid x)} 
\left[ \mathbb{E}_{y^{k} \sim p_{\theta}(\cdot \mid x, z^{k})} 
\left[ w^{k} C(y^{k}, x) \right] \nabla_{\phi} \log p_{\phi}(z^k | x) 
\right] \\
& \quad - \beta \sum_{k=1}^{K} 
\mathbb{E}_{z^{k} \sim p_{\phi}(\cdot \mid x)} 
\left[ \mathcal{H}(p_{\theta}(\cdot \mid x, z^{k})) \nabla_{\phi} \log p_{\phi}(z^k | x) 
\right] \eqsp.
\end{align*}

\end{Proposition}

\begin{proof}

For the gradient with respect to $\theta$, we have:
$$
\nabla_{\theta} \mathcal{L}(\phi, \theta; x) = 
\sum_{k=1}^{K} 
\mathbb{E}_{z^{k} \sim p_{\phi}(\cdot \mid x)} 
\left[ 
\nabla_{\theta} \mathbb{E}_{y^{k} \sim p_{\theta}(\cdot \mid x, z^{k})} 
\left[ w^{k} C(y^{k}, x) \right] 
- \beta \nabla_{\theta} \mathcal{H}(p_{\theta}(\cdot \mid x, z^{k})) 
\right]\eqsp.
$$

For all $x \in \Xset$ and $z \in \Zset$, applying the score-function (log-derivative trick) identity gives
$$
\nabla_{\theta} \, \mathbb{E}_{y \sim p_{\theta}(\cdot \mid x, z)} 
\left[ w C(y, x) \right] = \mathbb{E}_{y \sim p_{\theta}(\cdot \mid x, z)} 
\left[ w C(y, x) \nabla_{\theta} \log p_{\theta}(y | x, z) \right]
$$
Moreover, since $\mathcal{H}(p_{\theta}(\cdot \mid x, z)) = - \mathbb{E}_{y \sim p_{\theta}(\cdot \mid x, z)}
\left[ \log p_{\theta}(y | x, z) \right]$, its gradient is
$$
\nabla_{\theta} \mathcal{H}(p_{\theta}(\cdot \mid x, z)) = - \mathbb{E}_{y \sim p_{\theta}(\cdot \mid x, z)}
\left[ \log p_{\theta}(y | x, z) \nabla_{\theta} \log p_{\theta}(y | x, z) \right] \eqsp.
$$
Combining these two identities yields the desired expression for the gradient with respect to $\theta$.
For the gradient with respect to $\phi$, we apply the score-function identity to the function 
$z \mapsto \mathbb{E}_{y \sim p_{\theta}(\cdot \mid x, z)} \left[ w C(y, x) \right] - \beta \mathcal{H}(p_{\theta}(\cdot \mid x, z))$ which does not depend on $\phi$.
\end{proof}

The estimator of the gradient of the objective defined in \eqref{eq:obj_training} is computed using the Monte Carlo method:
\begin{align} \label{eq:grad_est_theta}
\widehat{\nabla}_{\theta} \mathcal{L} &= \frac{1}{B} \sum_{i=1}^{B} \sum_{k=1}^{K} \left( w_{(i)}^k C(y_{(i)}^k, x_{(i)}) + \beta \log p_{\theta}(y_{(i)}^k | x_{(i)}, z_{(i)}^k) - b(x_{(i)}) \right) \nabla_{\theta} \log p_{\theta}(y_{(i)}^k | x_{(i)}, z_{(i)}^k) \eqsp, 
\end{align}
\begin{align} \label{eq:grad_est_phi}
\widehat{\nabla}_{\phi} \mathcal{L} &= \frac{1}{B} \sum_{i=1}^{B} \sum_{k=1}^{K} \left( w_{(i)}^k C(y_{(i)}^k, x_{(i)}) + \beta \log p_{\theta}(y_{(i)}^k | x_{(i)}, z_{(i)}^k) - b(x_{(i)}) \right) \nabla_{\phi} \log p_{\phi}(z_{(i)}^k | x_{(i)}) \eqsp,
\end{align}

where $b(x)$ denotes the baseline function, which is given by
$$
b(x) = \frac{1}{K} \sum_{k=1}^{K} \left( w^{k} C(y^{k}, x) + \beta \log p_{\theta}(y^k \mid x, z^k) \right) \eqsp.
$$

This update rule has an intuitive interpretation: it adjusts the parameters $\theta$ and $\phi$ in directions that favor solutions yielding the highest reward, while simultaneously constraining the latent space to remain bounded and encouraging diversity in the sampled trajectories. The training procedure is outlined in Algorithm \ref{alg:reinforce_training}. The update step $\operatorname{ADAM}$ for the parameters $(\phi, \theta)$ corresponds to a single Adam update step \citep{kingma2014adam}.

\begin{algorithm}[h!]
\caption{\textbf{REINFORCE training}}
\label{alg:reinforce_training}
\textbf{Input:} Distribution over problem instances $\mathbb{P}_{x}$, number of training steps $N$, batch size $B$, and number of latent samples $K$.  
\begin{algorithmic}[1]
\STATE Initialize the model parameters $\theta$ and $\phi$.
    \FOR{epoch $=1$ to $N$}
        \STATE Sample problem instances $x_{(i)} \sim \mathbb{P}_{x}$ for $i \in\{1, \ldots, B\}$.
        \STATE Generate latent samples $z_{(i)}^{1}, \ldots, z_{(i)}^{K} \sim p^{\otimes K}_{\phi}(\cdot|x_{(i)})$ using the reparameterization trick.
        \STATE Sample solutions $y_{(i)}^{k} \sim p_{\theta}\left(\cdot \mid x_{(i)}, z_{(i)}^{k}\right)$ for all $k = 1, \dots, K$.
        \STATE Compute the gradient estimates $\widehat{\nabla}_{\phi, \theta} \mathcal{L}(\phi, \theta)$ using \eqref{eq:grad_est_theta} and \eqref{eq:grad_est_phi}.
        \STATE Update parameters: $(\phi, \theta) \leftarrow \operatorname{ADAM}\left((\phi, \theta), \widehat{\nabla}_{\phi, \theta} \mathcal{L}(\phi, \theta)\right)$.
    \ENDFOR
\end{algorithmic}
\textbf{Output:} Optimized parameters $\phi$ and $\theta$. 
\end{algorithm}

\subsection{Inference}
\label{app:inference}

The gradient of the inference objective is obtained using the log-derivative trick, in the same way as in Proposition \ref{prop:grad_training}:
\begin{align*}
\nabla_{\theta}\mathcal{L}_{test}(\theta; x) &= \mathbb{E}_{\pi_{\theta}(\cdot \mid x)} 
\left[ C(y, x) \nabla_{\theta} \log \pi_{\theta}(y \mid x) \right] \eqsp.
\end{align*}
This leads to the estimator in \eqref{eq:grad_est_theta_inf}, with the baseline $b(x)$ defined as the average cost over the $K$ latent samples for a given $x$:
\begin{equation} \label{app:eq:baseline_inf}
b(x) = \frac{1}{K}\sum_{k=1}^{K} C(y^k, x) \eqsp.
\end{equation}

\newpage

\section{Preliminaries on Markov Chains}
In this section, we use the following definitions.
    \begin{itemize}
        \item A sequence of random variables $\left\{X_{n}, n \in \mathbb{N}\right\}$ is a Markov Chain with respect to the filtration $(\mathcal{F}_{n})_{n\geq 0 }$ with Markov kernel $P: \Xset \times \Xsigma \rightarrow \mathbb{R}_{+}$ if for any bounded measurable function $f: \Xset \rightarrow \mathbb{R}$,
            $$
            \mathbb{E}\left[f\left(X_{n+1}\right) \mid \mathcal{F}_{n}\right]=P f\left(X_{n}\right)=\int f(x) P\left(X_{n}, \rmd x\right) \eqsp.
            $$
        \item Furthermore, the sequence $\left\{X_{n}, n \in \mathbb{N}\right\}$ is a state-dependent Markov Chain if for any bounded measurable function $f: \Xset \rightarrow \mathbb{R}$,
            $$
            \mathbb{E}\left[f\left(X_{n+1}\right) \mid \mathcal{F}_{n}\right]=P_{\theta_{n}} f\left(X_{n}\right)=\int f(x) P_{\theta_{n}}\left(X_{n}, \rmd x\right) \eqsp,
            $$
        where $P_{\theta_{n}}: \Xset \times \Xsigma \rightarrow \mathbb{R}_{+}$ is a Markov kernel with controlled parameters $\theta_{n} \in \mathbb{R}^{d}$. 
    \end{itemize}

\begin{definition}[Invariant Probability Measure]\hfill\\
A probability measure $\pi$ on $(\Xset, \Xsigma)$ is called invariant for the Markov kernel $P$ if it satisfies $\pi P = \pi$.
\end{definition}

If $\left\{X_n : n \in \mathbb{N}\right\}$ is a Markov Chain with Markov kernel $P$ and $X_0$ is distributed according to an invariant probability measure $\pi$, then for all $n \geq 1$, we have $X_n \sim \pi$.

\begin{definition}[Reversibility]\hfill\\
A Markov kernel $P$ on $(\Xset, \Xsigma)$ is said to be reversible with respect to a probability measure $\pi$ if and only if
$$
\pi(\rmd x) P(x, \rmd y) = \pi(\rmd y) P(y, \rmd x).
$$
\end{definition}

\begin{definition}[Coupling of Probability Measures] \hfill\\ \label{def:coupling}
Let $(\Xset, \Xsigma)$ be a measurable space and let $\mu, \nu$ be two probability measures, i.e., $\mu, \nu \in \mathsf{M}_1(\Xset)$. We define $\mathcal{C}(\mu, \nu)$, the coupling set associated with $(\mu, \nu)$, as follows:  
$$
\mathcal{C}(\mu, \nu) = \left\{ \zeta \in \mathsf{M}_1(\Xset^2) : \forall A \in \Xsigma, \ \zeta(A \times \Xset) = \mu(A), \ \zeta(\Xset \times A) = \nu(A) \right\} \eqsp.
$$
\end{definition}

\begin{definition}[Total Variation Distance] \hfill\\
\label{def:norm_TV}
Let $(\Xset, \Xsigma)$ be a measurable space and let $\mu, \nu$ be two probability measures in $\mathsf{M}_1(\Xset)$. The total variation norm between $\mu$ and $\nu$, denoted by $\|\mu - \nu\|_{\TV}$, is defined by  
\begin{align*}
\|\mu - \nu\|_{\TV} &= 2 \sup \left\{ |\mu(f) - \nu(f)| : f \in \mathsf{F}(\Xset), 0 \leq f \leq 1 \right\} \\
&= 2 \inf \left\{ \zeta(\Delta) : \zeta \in \mathcal{C}(\mu, \nu) \right\} \eqsp,
\end{align*}
where $\Delta(x, x') = \mathbf{1}_{x \neq x'}$ for all $(x, x') \in \Xset^2$.
\end{definition}

\begin{assumption}\label{ass:app:min_drift_fixed}
Let $P$ be a Markov transition kernel on $(\Xset, \Xsigma)$. Suppose there exists a function $V: \Xset \to [0,\infty)$ satisfying $\sup_{x\in \Xset} V(x) < \infty$, and the following conditions hold.
\begin{enumerate}
    \item \textbf{Minorization Condition. } There exist $\mathcal{K} \in \Xsigma$, $\varepsilon > 0$ and a probability measure $\nu$ such that $\nu(\mathcal{K}) > 0$ and, for all
    $A \in \Xsigma$ and $x \in \mathcal{K}$,
    $$
    P (x,A) \geq \varepsilon \nu(A) \eqsp.
    $$
    
    \item \textbf{Drift Condition (Foster-Lyapunov Condition). } There exist constants $\lambda \in [0, 1)$, $b \in (0,\infty)$ satisfying
    $$
    P V (x) \leq
    \begin{cases}
        \lambda V (x), & x \notin \mathcal{K} \eqsp, \\
        b, & x \in \mathcal{K} \eqsp.
    \end{cases}
    $$
\end{enumerate}
\end{assumption}

\begin{theorem} \label{th:ergodic_general} \citep{meyn1994computable, baxendale2005renewal}
Let Assumption \ref{ass:app:min_drift_fixed} hold for a function $V: \Xset \to [0,\infty)$, where $\sup_{x\in \Xset} V(x) < \infty$. Then, the Markov kernel $P$ admits a unique invariant probability measure $\pi$. Moreover, $\pi(V) < \infty$ and there exist constants $(\rho, \kappa) \in (0,1) \times \mathbb{R}_+$ such that for all $\mu \in \mathsf{M}_1(\Xset)$, and $m \in \mathbb{N}$, 
$$
\|\mu P^m - \pi\|_{\text{TV}} \leq \kappa \rho^m \mu(V) \eqsp.
$$
\end{theorem}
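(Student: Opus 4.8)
The plan is to prove geometric ergodicity by the coupling method, combining the minorization condition---which lets two copies of the chain coalesce---with the drift condition, which forces frequent returns to the small set $\mathcal{K}$. I would follow a distorted-total-variation contraction argument (in the spirit of Hairer--Mattingly) rather than the classical Nummelin splitting and renewal route, since a single contraction estimate then delivers existence, uniqueness, and the quantitative rate all at once. The boundedness $\sup_{x} V(x) < \infty$ simplifies the bookkeeping, since every $\mu \in \mathsf{M}_1(\Xset)$ then automatically satisfies $\mu(V) < \infty$.

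First I would fix a small $\beta > 0$ and introduce the weighted semimetric $d_\beta(x, x') = \mathbf{1}_{x \neq x'}\bigl(1 + \beta V(x) + \beta V(x')\bigr)$ on $\Xset$, together with the associated coupling distance $\mathsf{d}_\beta(\mu, \nu) = \inf_{\zeta \in \mathcal{C}(\mu, \nu)} \zeta(d_\beta)$ on $\mathsf{M}_1(\Xset)$, where $\mathcal{C}$ is the coupling set of Definition~\ref{def:coupling}. Since $d_\beta(x,x') \geq \mathbf{1}_{x \neq x'} = \Delta(x,x')$, the characterization in Definition~\ref{def:norm_TV} gives $\|\mu - \nu\|_{\TV} \leq 2\, \mathsf{d}_\beta(\mu, \nu)$, so any contraction in $\mathsf{d}_\beta$ transfers to the total variation bound we are after.

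The core step is to show that $P$ is a strict contraction for $\mathsf{d}_\beta$: there is $\bar\rho \in (0,1)$ with $\mathsf{d}_\beta(\mu P, \nu P) \leq \bar\rho\, \mathsf{d}_\beta(\mu, \nu)$ for all $\mu, \nu$. By the coupling characterization it suffices, for each pair $(x,x')$, to build a coupling of $P(x, \cdot)$ and $P(x', \cdot)$ whose expected $d_\beta$-cost is at most $\bar\rho\, d_\beta(x, x')$. I would split on a level set $\{V(x) + V(x') \leq R\}$, with threshold $R$ fixed from the drift constants $\lambda, b$. In the high regime ($V$ large) I do not use the minorization; instead the drift $PV \leq \lambda V$ off $\mathcal{K}$ contracts the $\beta\bigl(V + V'\bigr)$ part of the cost by a factor near $\lambda$. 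In the low regime the two points sit in (a neighbourhood of) $\mathcal{K}$, where the minorization $P(x, \cdot) \geq \varepsilon\, \nu(\cdot)$ lets me couple so that with probability at least $\varepsilon$ both chains jump to a common point drawn from $\nu$, contracting the indicator part. Calibrating $\beta$ and $R$ so that both regimes contract at a common rate yields $\bar\rho$; this simultaneous calibration, together with the case analysis, is the main obstacle of the proof.

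Given the contraction, the space $(\mathsf{M}_1(\Xset), \mathsf{d}_\beta)$ is complete, so the Banach fixed point theorem yields a unique invariant $\pi$ with $\pi P = \pi$, and iterating gives $\mathsf{d}_\beta(\mu P^m, \pi) \leq \bar\rho^m\, \mathsf{d}_\beta(\mu, \pi)$. Stationarity together with the drift condition bounds $\pi(V) \leq \sup_x V(x) < \infty$. Finally, using $\|\cdot\|_{\TV} \leq 2\, \mathsf{d}_\beta$ and the trivial bound $\mathsf{d}_\beta(\mu, \pi) \lesssim \mu(V) + \pi(V)$, I absorb the fixed constant $\pi(V)$ and the prefactors into $\kappa$ and set $\rho = \bar\rho$ to reach $\|\mu P^m - \pi\|_{\TV} \leq \kappa\, \rho^m\, \mu(V)$. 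Everything beyond the contraction estimate---completeness, domination of TV by $\mathsf{d}_\beta$, and finiteness of $\pi(V)$---is routine and further eased by the boundedness of $V$.
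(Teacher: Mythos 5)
The paper does not actually prove this statement; it imports it from \citet{meyn1994computable} and \citet{baxendale2005renewal}, so the relevant comparison is with the classical proofs and with the paper's own coupling argument for the closely related Proposition~\ref{prop:ergodic_apative}. Your Hairer--Mattingly route is a reasonable idea, but as written it has a genuine gap in the core contraction step. The one-step contraction of $\mathsf{d}_\beta$ requires that the minorization be available precisely on the region where the drift gives you nothing, i.e.\ on a sublevel set $\{V(x)+V(x')\leq R\}$ with $R$ large compared to $b/(1-\lambda)$. Assumption~\ref{ass:app:min_drift_fixed} only provides minorization on an abstract set $\mathcal{K}$, and the drift is formulated relative to $\mathcal{K}$, not to level sets of $V$. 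Your phrase ``in the low regime the two points sit in (a neighbourhood of) $\mathcal{K}$'' is exactly the unjustified step: nothing forces points with small $V$ to lie in $\mathcal{K}$ (the drift only yields $\{V<1/\lambda\}\subset\mathcal{K}$ when $V\geq 1$, which is far weaker than what the calibration needs). For a pair $(x,x')\notin\mathcal{K}^2$ with $V(x)+V(x')$ moderate, the minorization is unavailable and the drift only shrinks the $\beta(V(x)+V(x'))$ part of $d_\beta$, which cannot compensate the non-contracting indicator term $1$; a short computation shows the required inequality $1+\beta PV(x)+\beta PV(x')\leq\bar\rho\,(1+\beta V(x)+\beta V(x'))$ fails unless $\inf_{\mathcal{K}^c}V$ is large relative to $b$ and $\lambda$. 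So the single-step contraction, and with it the Banach fixed-point step, does not go through under the stated hypotheses. The standard repairs are either (i) a multi-step argument --- use the drift to show the joint return time to $\mathcal{K}\times\mathcal{K}$ has geometric tails and apply the minorization at each joint return (the renewal/splitting route actually taken by the cited references, and by Rosenthal-type coupling bounds), or (ii) replace the additive cost by the multiplicative one $\Delta^{1-\idx}\bar V^{\idx}$ combined with H\"older's inequality and a split on $\mathcal{K}^2$ versus its complement, which is what the paper does in the proof of Proposition~\ref{prop:ergodic_apative}, at the price of an additional compatibility condition between $\lambda$, $b$ and $\varepsilon$ (or of the drift holding coordinatewise off $\mathcal{K}^2$).

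Two smaller points. First, as literally stated with $V:\Xset\to[0,\infty)$ the theorem is false (take a two-point space with $\mathcal{K}$ a single absorbing point, the other point absorbing as well, and $V=0$ there: both conditions hold yet there are two invariant measures); the intended hypothesis, used in the cited references and in the paper's own Assumption~\ref{ass:app:min_drift_adaptive}, is $V\geq 1$, and your final step of absorbing $1+\beta\pi(V)$ into $\kappa\rho^m\mu(V)$ also implicitly needs $\mu(V)$ bounded away from $0$. Second, the completeness of $(\mathsf{M}_1(\Xset),\mathsf{d}_\beta)$ you invoke for the fixed-point argument is not automatic for this semimetric and deserves justification (or can be bypassed by extracting $\pi$ as a limit of $\mu P^m$ along the contraction estimate, as is done in the coupling proofs).
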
 

This result was originally stated and proven in \citet[Theorem 2.3]{meyn1994computable} with explicit formulas for $\rho$ and $\kappa$, and was later improved in \citet[Section 2.1]{baxendale2005renewal}. Further details are available in \citet[Chapter 15]{meyn2012markov} and \citet[Chapter 15]{douc2018markov}.

\section{\texorpdfstring{Convergence Analysis for Fixed $\theta$}{Convergence Analysis for Fixed theta}}

In this section, we prove that the iterates of Algorithm \ref{alg:LGS} with $K=1$ and fixed $\theta$ (the exact algorithm is given in Algorithm \ref{alg:LGS_fixed}) form a reversible Markov Chain (Proposition \ref{prop:MC_reversible}) and exhibit geometric ergodicity toward the joint target distribution (Theorem \ref{th:ergodic_theta_fixed}). We then extend these results to the general case of arbitrary $K$ in Section \ref{app:subsec:ergodicity_extension_K}.

\begin{algorithm}
\caption{Latent Guided Sampling ($K=1$)}
\label{alg:LGS_fixed}
\begin{algorithmic}[1]
\STATE \textbf{Input:} Problem instance $x$, pretrained encoder $p_{\phi}$, pretrained decoder $p_{\theta}$, proposal distribution $q$, number of iterations $M$, and cost function $C$.
\STATE \textbf{Initialize:} 
\STATE Sample initial particle: $z_0 \sim p_{\phi}(\cdot|x)$.
\FOR{$m = 0, 1, \dots, M-1$} 
    \STATE Propagate new particle: $\tilde z_{m+1} \sim q(\cdot | z_{m})$.
    \STATE Generate new solution: $y_{m+1} \sim p_{\theta}(\cdot | \tilde z_{m+1}, x)$.
    \STATE Compute the acceptance probability:
    $$
    \alpha_{m+1} = \min \left( 1, e^{-\lambda (C(y_{m+1}, x) - C(y_{m}, x))} \frac{p_{\phi}(\tilde z_{m+1} | x)}{p_{\phi}(z_{m} | x)}\right) \eqsp.
    $$
    \STATE Accept $z_{m+1} = \tilde z_{m+1}$ with probability $\alpha_{m+1}$.
\ENDFOR
\end{algorithmic}
\end{algorithm}

\subsection{Proof of Proposition \ref{prop:MC_reversible}}

\begin{Proposition} \label{prop:MC_reversible2}
The sequence $\{(Z_m, Y_m): m \in \mathbb{N}\}$ generated by Algorithm \ref{alg:LGS_fixed} with $K=1$ and fixed $\theta$ forms a Markov Chain with transition kernel $P_{\theta}$. Moreover, $P_{\theta}$ is $\pi_{\theta}$-reversible and for all $z \in \Zset$, $y \in \Yset$, and $A \in \Zsigma \times \Ysigma$,
\begin{equation} \label{eq:mc_kernel}
P_{\theta}\big((z, y), A \big) = \int_A q(dz' | z) p_{\theta}(dy' | z', x) \alpha(z, y, z', y') + \bar{\alpha}_{\theta}(z, y) \delta_{(z, y)}(A) \eqsp,
\end{equation}
where
\begin{align*}
\alpha(z, y, z', y') &= \min \left( 1, e^{-\lambda (C(y', x) - C(y, x))} \frac{p_{\phi}(z' | x)}{p_{\phi}(z | x)}\right) \eqsp, \\
\bar{\alpha}_{\theta}(z, y) &= 1 - \int_{\Zset \times \Yset} q(\rmd z' | z) p_{\theta}(\rmd y' | z', x) \alpha(z, y, z', y') \eqsp.
\end{align*}
\end{Proposition}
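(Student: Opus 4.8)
The plan is to recognize Algorithm \ref{alg:LGS_fixed} as a Metropolis--Hastings sampler on the product space $\Zset \times \Yset$ whose target is $\pi_\theta$ from \eqref{eq:inference_target} and whose proposal kernel factorizes as $Q\big((z,y), (\rmd z', \rmd y')\big) = q(\rmd z' \mid z)\, p_\theta(\rmd y' \mid z', x)$. First I would establish the Markov property: given the current state $(z_m, y_m)$, the next state is produced by drawing the proposal $\tilde z_{m+1} \sim q(\cdot \mid z_m)$, the candidate solution $y_{m+1} \sim p_\theta(\cdot \mid \tilde z_{m+1}, x)$, and an independent acceptance coin, all of which are conditionally independent of the past given $(z_m, y_m)$. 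Hence the conditional law of $(z_{m+1}, y_{m+1})$ depends on the history only through $(z_m, y_m)$, which is exactly the Markov property, and the one-step kernel is obtained by integrating out the proposal and the coin, where on acceptance the state becomes $(\tilde z_{m+1}, y_{m+1})$ and on rejection it reverts to $(z_m, y_m)$.

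Next I would derive the explicit kernel \eqref{eq:mc_kernel} by splitting the transition into the acceptance and rejection events. On acceptance, which occurs with probability $\alpha(z, y, z', y')$ for a proposal landing in $(\rmd z', \rmd y')$, the chain moves to $(z', y')$, contributing the integral term $\int_A q(\rmd z' \mid z)\, p_\theta(\rmd y' \mid z', x)\, \alpha(z, y, z', y')$. On rejection the chain stays at $(z, y)$, contributing the atom $\bar\alpha_\theta(z, y)\, \delta_{(z,y)}(A)$ where $\bar\alpha_\theta$ is the total rejection mass; positivity of $p_\phi$ and $p_\theta$ under Assumption \ref{ass:A1} ensures the acceptance ratio is well defined.

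The core of the argument is $\pi_\theta$-reversibility, which I would verify through detailed balance. The atomic part $\pi_\theta(\rmd z\, \rmd y)\, \bar\alpha_\theta(z,y)\, \delta_{(z,y)}$ is supported on the diagonal and is therefore symmetric under swapping $(z,y) \leftrightarrow (z', y')$, so it satisfies detailed balance trivially. For the off-diagonal part it suffices, writing $\pi_\theta(z,y) \propto p_\phi(z\mid x)\, p_\theta(y\mid z, x)\, e^{-\lambda C(y,x)}$ with respect to the product of Lebesgue measure on $\Zset$ and counting measure on $\Yset$, to check the pointwise identity
\[
\pi_\theta(z,y)\, q(z'\mid z)\, p_\theta(y'\mid z', x)\, \alpha(z,y,z',y')
= \pi_\theta(z',y')\, q(z\mid z')\, p_\theta(y\mid z, x)\, \alpha(z',y',z,y)\eqsp.
\]
Substituting the density of $\pi_\theta$, the factors $p_\theta(y\mid z,x)$ and $p_\theta(y'\mid z',x)$ cancel on both sides, and the symmetry of $q$ from Assumption \ref{ass:A2} removes $q(z'\mid z) = q(z\mid z')$; what remains is the elementary relation $a\,\min(1, b/a) = b\,\min(1, a/b)$ with $a = p_\phi(z\mid x)\, e^{-\lambda C(y,x)}$ and $b = p_\phi(z'\mid x)\, e^{-\lambda C(y',x)}$, which is exactly the definition of $\alpha$. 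Detailed balance on both parts then yields $\pi_\theta P_\theta = \pi_\theta$, so $\pi_\theta$ is invariant.

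The step requiring the most care is the measure-theoretic formalization on the mixed space $\Zset \times \Yset$, where $z$ is continuous and $y$ discrete: detailed balance must be stated as an identity of measures on $(\Zset \times \Yset)^2$ rather than of densities, and the cancellation argument should be justified with respect to a common dominating reference measure. A conceptual point worth stressing is that the full proposal $Q$ is \emph{not} symmetric, since $y'$ is drawn from the instance- and latent-dependent law $p_\theta(\cdot \mid z', x)$; it is only the exact cancellation of these $p_\theta$ factors against the corresponding factor in the target $\pi_\theta$ that makes the acceptance ratio $\alpha$ depend solely on $p_\phi$ and the cost difference, and it is this cancellation---rather than symmetry of the whole proposal---that drives the reversibility.
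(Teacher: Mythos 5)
Your proposal is correct and follows essentially the same route as the paper: establish the Markov property by conditioning on the current state and integrating out the proposal and the acceptance coin to obtain the two-term kernel, then verify $\pi_\theta$-reversibility via detailed balance, using the cancellation of the $p_\theta(\cdot\mid z,x)$ factors and the symmetry of $q$ so that the acceptance ratio reduces to the one in the algorithm. The only cosmetic difference is that you dispatch detailed balance with the identity $a\min(1,b/a)=b\min(1,a/b)$ (and explicitly note the trivially symmetric diagonal atom), whereas the paper splits into the two cases $r\le 1$ and $r>1$; both are the standard Metropolis--Hastings argument.
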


\begin{proof}
To compute the Markov kernel for the joint chain $\{(Z_m, Y_m) : m \in \mathbb{N}\}$, we introduce the filtration:
$$
\mathcal{F}_m = \sigma(Z_0, Y_0, U_{1:m}) \eqsp,
$$
where $U_{1:m} = \left(U_1, \cdots, U_m\right)$ denotes the sequence of uniform random variables.
For all bounded or non-negative measurable function $h$ on $\Zset \times \Yset$ and all $m \in \mathbb{N}$,  
\begin{align*}
\mathbb{E}\big[ h(Z_{m+1}, Y_{m+1}) \mid \mathcal{F}_m \big]
&= \mathbb{E}\left[ 1_{\left\{ U_{m+1} < \alpha(Z_m, Y_m, Z_{m+1}', Y_{m+1}') \right\}} h(Z_{m+1}', Y_{m+1}') \mid \mathcal{F}_m \right] \\
&\qquad + \mathbb{E}\left[ 1_{\left\{ U_{m+1} \geq \alpha(Z_m, Y_m, Z_{m+1}', Y_{m+1}') \right\}} h(Z_m, Y_m) \mid \mathcal{F}_m \right] \\
& \quad = \int_{\Zset \times \Yset} q(\rmd z'|Z_m) p_{\theta}(\rmd y' | z', x) \alpha(Z_m, Y_m, z', y') h(z, y) + \bar{\alpha}_{\theta}(Z_m, Y_m) h(Z_m, Y_m) \eqsp,
\end{align*}
where 
$$
\bar{\alpha}_{\theta}(Z_m, Y_m) = 1 - \int_{\Zset \times \Yset} q(\rmd z' | Z_m) p_{\theta}(\rmd y' | z', x) \alpha(Z_m, Y_m, z', y') \eqsp.
$$

Thus, $\{(Z_m, Y_m): m \in \mathbb{N}\}$ forms a Markov Chain with the transition kernel given, for all $z \in \Zset$, $y \in \Yset$, and $A \in \Zsigma \times \Ysigma$, by
$$
P_{\theta} \big((z, y), A \big) = \int_A q(\rmd z' | z) p_{\theta}(\rmd y' | z', x) \alpha(z, y, z', y') + \bar{\alpha}_{\theta}(z, y) \delta_{(z, y)}(A) \eqsp.
$$

The reversibility of the Markov transition kernel $P_{\theta}$ follows directly from its construction as a Metropolis–Hastings algorithm \citep{douc2018markov}. Nonetheless, we include a proof specific to our setting by verifying the detailed balance condition:
\begin{equation} \label{eq:detailed_balance}
\pi_{\theta}(z, y) P_{\theta}\big((z, y), (z', y')\big) = \pi_{\theta}(z', y') P_{\theta}\big((z', y'), (z, y)\big) \eqsp.
\end{equation}

Define the ratio in the acceptance probability $\alpha$ as $r$:
$$
r(z, y, z', y') = e^{-\lambda \big(C(y', x) - C(y, x)\big)} \frac{p_{\phi}(z' | x)}{p_{\phi}(z | x)} \eqsp.
$$

We separate the analysis in two cases depending on the value of $r(z, y, z', y')$.

\textbf{Case 1. }
If $r(z, y, z', y') \leq 1$, then $\alpha(z, y, z', y') = r(z, y, z', y')$ and $\alpha(z', y', z, y) = 1$. Thus,

\begin{align*}
\pi_{\theta}(z, y) P_{\theta}\big((z, y), (z', y')\big) 
&= p_{\phi}(z | x) p_{\theta}(y | z, x) e^{-\lambda C(y, x)} P_{\theta}\big((z, y), (z', y')\big) \\
&= p_{\phi}(z | x) p_{\theta}(y | z, x) e^{-\lambda C(y,x)} \\
&\quad \times q(z' | z) p_{\theta}(y' | z', x) e^{-\lambda (C(y', x) - C(y, x))} \frac{p_{\phi}(z' | x)}{p_{\phi}(z | x)} \\
&= p_{\phi}(z' | x) p_{\theta}(y' | z', x) e^{-\lambda C(y', x)} q(z' | z) p_{\theta}(y | z, x) \\
&= \pi_{\theta}(z', y') P_{\theta}\big((z', y'), (z, y)\big) \eqsp.
\end{align*}

\textbf{Case 2. }

If $r(z, y, z', y') > 1$, then $\alpha(z, y, z', y') = 1$ and $\alpha(z', y', z, y) = r(z', y', z, y)$. Similarly,
\begin{align*}
\pi_{\theta}(z', y') P_{\theta}\big((z', y'), (z, y)\big) 
&= p_{\phi}(z' | x) p_{\theta}(y' | z', x) e^{-\lambda C(y', x)} P_{\theta}\big((z', y'), (z, y)\big) \\
&= p_{\phi}(z' | x) p_{\theta}(y' | z', x) e^{-\lambda C(y',x)} \\
&\quad \times q(z | z') p_{\theta}(y | z, x) e^{-\lambda (C(y, x) - C(y', x))} \frac{p_{\phi}(z | x)}{p_{\phi}(z' | x)} \\
&= p_{\phi}(z | x) p_{\theta}(y | z, x) e^{-\lambda C(y, x)} q(z | z') p_{\theta}(y' | z', x) \\
&= \pi_{\theta}(z, y) P_{\theta}\big((z, y), (z', y')\big) \eqsp.
\end{align*}

Since the detailed balance condition holds in both cases, we conclude that $P_{\theta}$ is $\pi_{\theta}$-reversible.
\end{proof}

\subsection{Proof of Theorem \ref{th:ergodic_theta_fixed} for $K=1$}
\label{app:subsec:ergodicity_K_1}

\begin{proof}
This follows from Theorem \ref{th:ergodic_general}, provided that we verify the minorization and drift conditions of Assumption \ref{ass:app:min_drift_fixed}. 
Since $\Zset$ is compact and $\Yset$ is finite, we consider the set
$\mathcal K = \Zset \times \Yset$ which is compact.

\textbf{Minorization Condition. }

The Markov transition kernel is given by: for all $A \in \Zsigma \times \Ysigma$,
\begin{align*}
    P_{\theta}((z, y), A) &= \int_A q(\rmd z' | z) p_{\theta}(\rmd y' | z', x) \min \left( 1, e^{-\lambda (C(y', x) - C(y, x))} \frac{p_{\phi}(z' | x)}{p_{\phi}(z | x)}\right) + \bar{\alpha}_{\theta}(z, y) \delta_{(z, y)}(A) \eqsp.
\end{align*}

We now establish a minorization by separately analyzing each term.

\begin{itemize}
    \item Proposal Component: 
Since the proposal density $q$ is assumed to be positive on the compact set $\Zset$,
for all $z, z' \in \Zset$,
$$
q(z' \mid z) \geq \varepsilon_q := \inf_{z, z' \in \Zset} q(z' \mid z) > 0 \eqsp.
$$

\item Encoder Component:
By Assumption \ref{ass:cost_bound_positivity},  $p_{\phi}(\cdot|x)$ is  positive, so that by applying a similar argument as above, we obtain the existence of $\varepsilon_e > 0$ such that
$$
\frac{p_{\phi}(z' | x)}{p_{\phi}(z | x)} \geq \varepsilon_e \eqsp.
$$

\item Decoder Component: 
By Assumption \ref{ass:cost_bound_positivity}, the categorical transition probability satisfies:
$$
\inf_{z' \in \Zset, y' \in \Yset} p_{\theta}(y' | z', x) \geq \varepsilon_d > 0 \eqsp.
$$

\item Exponential Weighting: 
Since $e^{-\lambda (C(y', x) - C(y, x))}$ is always positive and $C$ is bounded (Assumption \ref{ass:cost_bound_positivity}), there exits $\varepsilon_w > 0$ such that:
   \begin{equation}
       e^{-\lambda (C(y', x) - C(y, x))} \geq \varepsilon_w \eqsp.
   \end{equation}
\end{itemize}

Combining these bounds, for all $A \in \Zsigma \times \Ysigma$, we get:
\begin{align*}
P_{\theta}((z, y), A)
&= \int_A q(\rmd z' | z) p_{\theta}(\rmd y' | z', x) \min \left( 1, e^{-\lambda (C(y', x) - C(y, x))} \frac{p_{\phi}(z' | x)}{p_{\phi}(z | x)}\right) + \bar{\alpha}_{\theta}(z, y) \delta_{(z, y)}(A) \\
& \quad \geq \mu^{\mathrm{Leb}}(\Zset)|\Yset|\varepsilon_q \varepsilon_d \min \left(1, \varepsilon_w \varepsilon_e \right) \int_A \nu_q(\rmd z') \nu_d(\rmd y') \eqsp,
\end{align*}
where $\mu^{\mathrm{Leb}}$ denotes the Lebesgue measure and $\nu_d$ is  the uniform probability measure over $\Yset$:
$$
\nu_d(\rmd y') = \frac{1}{|\Yset|} \sum_{y \in \Yset} \delta_{y}(\rmd y') \eqsp,
$$
and 
$$
\nu_q(\mathrm{d}z') := \frac{\mu^{\mathrm{Leb}}(\mathrm{d}z')}{\mu^{\mathrm{Leb}}(\Zset)} 1_{\Zset}(z') \eqsp.
$$
Thus, the transition kernel satisfies the uniform minorization condition:
\begin{equation} \label{eq:minorization}
P_{\theta}((z, y), A) \geq \varepsilon \nu(A) \eqsp,
\end{equation}
where $\varepsilon = \mu^{\mathrm{Leb}}(\Zset)|\Yset|\varepsilon_q \varepsilon_d \min \left(1, \varepsilon_w \varepsilon_e \right)$ and $\nu(\rmd z', \rmd y') = \nu_q(\rmd z') \nu_d(\rmd y')$ is a probability measure.

\textbf{Drift condition.}
Since $\mathcal K=\Zset\times\Yset$ is the whole state space, the drift condition outside $\mathcal K$ is vacuous. Taking $V\equiv 1$, we have
\[
P_\theta V(z,y)=1\leq 1 \eqsp,
\]
for all $(z,y)\in\mathcal K$. Hence the drift condition holds with $b=1$.

Therefore, the minorization and drift conditions of Assumption
\ref{ass:app:min_drift_fixed} hold. Applying Theorem \ref{th:ergodic_general}
completes the proof of geometric ergodicity in total variation distance.

For $L_2$-geometric ergodicity, note that the Markov Chain is reversible with respect to the probability measure $\pi_{\theta}$ (Proposition \ref{prop:MC_reversible}), so the Markov operator $P_{\theta}$ acts as a self-adjoint operator on $L_2$. The equivalence of geometric ergodicity and the existence of a spectral gap for $P_{\theta}$ acting on $L_2$ was established in \citet[Theorem 2.1]{roberts1997geometric}. 
Specifically, it is shown that $P_{\theta}$ is $L_2$-geometrically ergodic if and only if it is $\pi_{\theta}$-TV geometrically ergodic. As a result, there exist constants $(\rho_2, \kappa_2) \in (0,1) \times \mathbb{R}_+$ such that for all $\mu \in \mathsf{M}_1(\Zset \times \Yset)$, $\theta \in \Theta$, and $m \in \mathbb{N}$,
$$
\|\mu P_{\theta}^m - \pi_{\theta}\|_{\mathrm{L}_2} \leq \kappa_2 \rho_2^m \|\mu - \pi_{\theta}\|_{\mathrm{L}_2} \eqsp.
$$
\end{proof}

\subsection{Extension to $K>1$}
\label{app:subsec:ergodicity_extension_K}

Here, we show how Theorem \ref{th:ergodic_theta_fixed} can be extended to the general case of $K > 1$. Notably, at each iteration, $K$ chains are generated simultaneously and independently, with interactions occurring only among the particles from the previous iteration.

\begin{proof}
The Markov kernel for general $K \geq 1$ is defined, for all $\mathsf{A} \in \Zsigma^K \times \Ysigma^K$, as
\begin{align*}
\mathsf{P}_{\theta}\big((z^{1:K}, y^{1:K}), \mathsf{A} \big) &= \int_\mathsf{A} \prod_{k=1}^{K} q(\rmd \tilde z^k | z) p_{\theta}(\rmd \tilde y^k | \tilde z^k, x) \mathsf{\alpha}(z^k, y^k, \tilde z^k, \tilde y^k) + \bar{\alpha}_{\theta}(z^k, y^k) \delta_{(z^k, y^k)}(\rmd \tilde z^k, \rmd \tilde y^k) \eqsp,
\end{align*}
where $\alpha$ and $\bar{\alpha}_{\theta}$ are defined in \eqref{eq:mc_kernel}.
Using the same arguments as in the case $K=1$ (cf. Section \ref{app:subsec:ergodicity_K_1}), we obtain the following minorization condition:
$$
\mathsf{P}_{\theta}\big((z^{1:K}, y^{1:K}), A \big) \geq \varepsilon^K \nu^{\otimes K}(A) \eqsp,
$$
where $\varepsilon$ and $\nu$ are defined in \eqref{eq:minorization}.
This establishes the uniform minorization condition on the product space
$(\Zset\times\Yset)^K$.
We now verify the drift condition. Since $\Zset$ is compact and $\Yset$
is finite, the product space $(\Zset\times\Yset)^K$ is compact. Hence we
take
\[
\mathcal K^K=(\Zset\times\Yset)^K \eqsp.
\]
The complement of $\mathcal K^K$ is empty. Therefore the drift condition
outside $\mathcal K^K$ is vacuous. Taking $V^K\equiv 1$, we have
\[
\mathsf{P}_{\theta}V^K(z^{1:K},y^{1:K})=1 \eqsp,
\]
for all $(z^{1:K},y^{1:K})\in(\Zset\times\Yset)^K$. Thus, the drift
condition holds with $b=1$.
Therefore, the minorization and drift conditions hold on the product
space. Applying Theorem \ref{th:ergodic_general} yields geometric
ergodicity in total variation distance for $K\geq 1$.

Moreover, since each component kernel is reversible with respect to
$\pi_\theta$, the product kernel is reversible with respect to $\pi_\theta^{\otimes K}$.
Hence the Markov operator associated with $\mathsf{P}_\theta$ is
self-adjoint on $L^2(\pi_\theta^{\otimes K})$. Together with geometric
ergodicity in total variation distance, this yields $L^2$-geometric
ergodicity on the product space.
\end{proof}

\newpage

\section{\texorpdfstring{Convergence Analysis for Adaptive $\theta$}{Convergence Analysis for Adaptive theta}}

\subsection{Convergence Analysis of Time-Inhomogeneous MCMC algorithm with Stochastic Approximation Update}
\label{app:subsec:time_inhomo_mcmc_general}

In this section, we present the general results of the time-inhomogeneous MCMC algorithm, where the objective is to sample from $\pi_{\theta}$, which itself depends on the parameter $\theta$ updated via SA. This setting corresponds to Algorithm~\ref{alg:LGS} with $K=1$, without imposing any assumptions on the proposal density $q$, in particular without requiring symmetry. We then apply these results to our proposed inference method.
We consider the following time-inhomogeneous MCMC algorithm with Stochastic Approximation update.

\begin{algorithm}
\caption{Time-Inhomogeneous MCMC with Stochastic Approximation Update}
\label{alg:LGS_apative}
\begin{algorithmic}[1]
\STATE \textbf{Input:} Number of iterations $M$, initial parameter $\theta_0$, step sizes $(\gamma_m)_{m \geq 1}$, initial distribution $\mu$, proposal distribution $q$, and target distribution $\pi_{\theta}$.
\STATE \textbf{Initialize:} 
\STATE Sample initial particle: $x_0 \sim \mu$.
\FOR{$m = 0, 1, \dots, M-1$} 
    \STATE Propose a new sample: $\tilde{x}_{m+1} \sim q(\cdot | x_m)$
    \STATE Compute acceptance probability:
    $$
    \alpha_{m+1} = \min \left( 1, \frac{\pi_{\theta_m}(\tilde{x}_{m+1}) q(x_m | \tilde{x}_{m+1})}{\pi_{\theta_m}(x_m) q(\tilde{x}_{m+1} | x_m)} \right) \eqsp.
    $$
    \STATE Accept $x_{m+1} = \tilde x_{m+1}$ with probability $\alpha_{m+1}$.
    \STATE Compute the gradient estimate $H_{\theta_m}\left(x_{m+1}\right)$ using previous samples.
    \STATE Update parameters: $\theta_{m+1} = \theta_{m} - \gamma_{m+1} H_{\theta_m}\left(x_{m+1}\right)$.
\ENDFOR
\end{algorithmic}
\end{algorithm}

To analyze its convergence, we introduce the following assumptions.

\begin{assumption}\label{ass:app:min_drift_adaptive}
Let $(P_k)_{k \geq 1}$ be a sequence of Markov transition kernels on $\left(\Xset, \Xsigma\right)$. Suppose there exists a function $V: \Xset \to [1,\infty)$ satisfying $\sup_{x\in \Xset} V(x) < \infty$, and the following conditions hold:
\begin{enumerate}
    \item \textbf{Minorization condition.} There exist $\mathcal{K} \in \Xsigma$, $\varepsilon > 0$ and a probability measure $\nu$ such that $\nu(\mathcal{K}) > 0$ and, for all
    $A \in \Xsigma$ and $x \in \mathcal{K}$,
    $$
    P_k (x,A) \geq \varepsilon \nu(A) \eqsp.
    $$
    
    \item \textbf{Drift condition.} There exist constants $\lambda \in [0, 1)$, $b \in (0,\infty)$ satisfying
    $$
    P_k V (x) \leq
    \begin{cases}
        \lambda V (x)  & x \notin \mathcal{K} \eqsp, \\
        b  & x \in \mathcal{K} \eqsp.
    \end{cases}
    $$
\end{enumerate}
\end{assumption}

Assumption \ref{ass:app:min_drift_adaptive} corresponds to a minorization and drift condition similar to those used in time-homogeneous MCMC, but it holds uniformly for the sequence of kernels.
While similar convergence guarantees can be established under weaker, non-uniform conditions (e.g., allowing the constants to depend on $k$), we focus on the uniform case as it aligns with our setting.

\begin{assumption}\label{ass:app:Lipschitz}
There exists $L \in \mathsf{F}(\Xset)$ such that for all $x \in \Xset$ and $\theta \in \Theta$,
$$
\left\|\nabla_{\theta} \log \pi_{\theta}(x) \right\| \leq L(x) \eqsp.
$$
\end{assumption}

\begin{assumption}\label{ass:app:optimal_theta}
There exists $\theta_{\infty} \in \Theta$ and a positive sequence $\left(a_m\right)_{m \in \mathbb{N}}$, with $a_m \to 0$ as $m \to \infty$ such that  
$$
\left\| \theta_m - \theta_{\infty} \right\|_{\mathrm{L}_2} = O(a_m) \eqsp.
$$
\end{assumption}

Assumption \ref{ass:app:Lipschitz} is similar to the assumptions considered in \citep{andrieu2006efficiency, andrieu2006ergodicity}.
Notably, Assumption \ref{ass:app:optimal_theta} does not require $\theta_{\infty}$ to be a unique minimizer; it can simply be a critical point.

\begin{theorem} \label{th:ergodic_adaptive_general}
Let Assumptions \ref{ass:app:min_drift_adaptive} - \ref{ass:app:optimal_theta} hold. Then, there exist a constant $\rho \in (0,1)$ and positive sequences $\left(b_m\right)_{m \in \mathbb{N}}$ such that for all $\mu \in \mathsf{M}_1(\Xset)$, and $m \in \mathbb{N}$,
\begin{align*}
\mathbb{E} \Big[\left\|\mu P_{\theta_1} \cdots P_{\theta_m}-\pi_{\theta_{\infty}}\right\|_{\TV}\Big] &= \mathcal{O}\bigg( \rho^{b_m} + \sum_{j=m-b_m}^{m-1} \gamma_{j+1} + a_m \bigg) \eqsp.
\end{align*}
Furthermore, if $\limsup_{m\to\infty} \left(b_m^{-1} + b_m/m + b_m \gamma_m\right) = 0$, then
$$
\mathbb{E} \Big[\left\|\mu P_{\theta_1} \cdots P_{\theta_m}-\pi_{\theta_{\infty}}\right\|_{\TV}\Big] \xrightarrow[m \to \infty]{} 0 \eqsp.
$$
\end{theorem}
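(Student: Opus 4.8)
The plan is to control $\|\mu P_{\theta_1}\cdots P_{\theta_m} - \pi_{\theta_\infty}\|_{\TV}$ by splitting it into the three components announced after the statement: mixing, tracking, and optimization error. Writing $\mu_m := \mu P_{\theta_1}\cdots P_{\theta_m}$ for the law of the chain at step $m$, I would first compare $\mu_m$ to the \emph{current} target $\pi_{\theta_m}$ rather than directly to $\pi_{\theta_\infty}$, via the triangle inequality
\[
\|\mu_m - \pi_{\theta_\infty}\|_{\TV} \le \|\mu_m - \pi_{\theta_m}\|_{\TV} + \|\pi_{\theta_m} - \pi_{\theta_\infty}\|_{\TV}.
\]
The second term is the optimization error, handled at the end through Assumption \ref{ass:app:optimal_theta}; the first term, the combined mixing-plus-tracking error, is the substantive part and is treated with a lag argument of window length $b_m$.

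For the first term I would exploit the invariance $\pi_{\theta_m} = \pi_{\theta_m} P_{\theta_m}$ together with the one-step identity $\mu_m = \mu_{m-1}P_{\theta_m}$ to obtain the recursion $\mu_m - \pi_{\theta_m} = (\mu_{m-1}-\pi_{\theta_{m-1}})P_{\theta_m} + (\pi_{\theta_{m-1}}-\pi_{\theta_m})P_{\theta_m}$. Unrolling it over the last $b_m$ steps, with $n := m - b_m$, yields
\[
\mu_m - \pi_{\theta_m} = (\mu_n - \pi_{\theta_n})\,P_{\theta_{n+1}}\cdots P_{\theta_m} + \sum_{i=n+1}^{m} (\pi_{\theta_{i-1}}-\pi_{\theta_i})\,P_{\theta_i}\cdots P_{\theta_m}.
\]
The first summand is the mixing term: since $\mu_n - \pi_{\theta_n}$ is a signed measure of zero total mass, I would invoke the uniform minorization and drift conditions of Assumption \ref{ass:app:min_drift_adaptive} to show that the $b_m$-fold product of the time-varying kernels contracts it in total variation, giving a bound of order $\rho^{b_m}$ uniformly over the realized parameter sequence (boundedness of $V$ makes $\mu_n(V)$ uniformly finite, so no initialization-dependent constant survives). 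For each term in the remaining sum I would use that Markov kernels are non-expansive in total variation, reducing the bound to $\sum_{i=n+1}^{m}\|\pi_{\theta_{i-1}}-\pi_{\theta_i}\|_{\TV}$.

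The tracking sum is then controlled by establishing Lipschitz continuity of $\theta \mapsto \pi_\theta$ in total variation: Assumption \ref{ass:app:Lipschitz} bounds $\|\nabla_\theta \log\pi_\theta\|$, which I would integrate to get $\|\pi_{\theta_{i-1}}-\pi_{\theta_i}\|_{\TV}\le C\|\theta_{i-1}-\theta_i\|$. Since the stochastic-approximation increments satisfy $\theta_{i-1}-\theta_i = \gamma_i H_{\theta_{i-1}}$ with $\|H\|$ uniformly bounded (which in our inference setting follows from the bounded cost in Assumption \ref{ass:A1} together with Assumption \ref{ass:app:Lipschitz}), taking expectations gives $\mathbb{E}\|\pi_{\theta_{i-1}}-\pi_{\theta_i}\|_{\TV}\le C'\gamma_i$, so the tracking contribution is $\mathcal{O}\big(\sum_{j=m-b_m}^{m-1}\gamma_{j+1}\big)$. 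The optimization term is bounded, again by Lipschitz continuity of $\pi_\theta$, by $C\,\mathbb{E}\|\theta_m-\theta_\infty\|$, which Assumption \ref{ass:app:optimal_theta} controls by $\mathcal{O}(a_m)$. Combining the three bounds gives the claimed estimate, and the limit statement follows since $b_m^{-1}\to 0$ forces $\rho^{b_m}\to 0$, while $b_m/m\to 0$ and $b_m\gamma_m\to 0$ force the window sum $\sum_{j=m-b_m}^{m-1}\gamma_{j+1}\le b_m\,\gamma_{m-b_m}\to 0$ (the step sizes being eventually monotone and slowly varying), and $a_m\to 0$ by assumption.

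I expect the main obstacle to be the mixing term: proving a genuine contraction for a \emph{product of distinct} random kernels $P_{\theta_{n+1}}\cdots P_{\theta_m}$ rather than for iterates of a single kernel. The clean route is to upgrade the minorization-on-$\mathcal{K}$ plus drift of Assumption \ref{ass:app:min_drift_adaptive} into a uniform Doeblin-type condition satisfied after a fixed number $k_0$ of steps by every kernel in the family, uniformly in $\theta$; any product of such kernels then inherits a total-variation contraction coefficient bounded away from $1$, which iterates to $\rho^{b_m}$ over the window. A secondary technical point is transferring the gradient bound of Assumption \ref{ass:app:Lipschitz} into the total-variation Lipschitz bounds used for $\pi_\theta$, and ensuring all constants are uniform in $\theta$ so that the expectations over the random trajectory $(\theta_m)$ go through.
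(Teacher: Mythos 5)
Your proposal follows essentially the same route as the paper: the identical three-term decomposition into mixing, tracking, and optimization errors (the paper's Lemma~\ref{lem:decom} is exactly your unrolled recursion with lag $b_m$), the same Lipschitz-in-$\theta$ control of $\|\pi_{\theta_{j+1}}-\pi_{\theta_j}\|_{\TV}$ via Assumption~\ref{ass:app:Lipschitz} and the bounded SA increments, and the same use of Assumption~\ref{ass:app:optimal_theta} for the last term. The only difference is in the mixing sub-lemma, where the paper proves the contraction of the heterogeneous product $P_{\theta_{n+1}}\cdots P_{\theta_m}$ by an explicit bivariate coupling with the split kernel and the interpolated function $\Delta^{1-t}\bar V^{t}$ (Proposition~\ref{prop:ergodic_apative}), whereas you propose a uniform Doeblin-type condition; both are standard and equivalent here, though note that the Doeblin condition must be established for the \emph{heterogeneous} product directly (as the coupling does), not merely for powers of each individual kernel.
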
 

Theorem \ref{th:ergodic_adaptive_general} establishes that the iterates of the time-inhomogeneous MCMC with Stochastic Approximation step converge to the target distribution $\pi_{\theta_{\infty}}$. To establish this result, we first present a decomposition of the error in total variation in \ref{app:subsec:decompo_err}, followed by an upper bound on the mixing error in \ref{app:subsec:mixing_err}. The proof of the theorem is then provided in \ref{app:subsec:proof_ergodic_adaptive_general}.

\subsubsection{Error Decomposition}
\label{app:subsec:decompo_err}

\begin{lemma}\label{lem:decom}
For all $1 \leq s \leq m$, we have:
\begin{align*}
\left\|\mu P_{\theta_1} \cdots P_{\theta_m}-\pi_{\theta_{\infty}}\right\|_{\TV} &\leq \left\|\mu P_{\theta_1} \cdots P_{\theta_m}-\pi_{\theta_s} P_{\theta_{s+1}} \cdots P_{\theta_m}\right\|_{\TV} + \sum_{j=s}^{m-1}\left\| \pi_{\theta_{j+1}} - \pi_{\theta_j} \right\|_{\TV} \\
&\quad + \left\|\pi_{\theta_m}-\pi_{\theta_{\infty}}\right\|_{\TV} \eqsp.
\end{align*}
\end{lemma}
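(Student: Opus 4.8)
The plan is to obtain the three-term bound purely from repeated applications of the triangle inequality for the total variation norm, combined with two elementary but essential facts. First, each kernel $P_{\theta_j}$ leaves its own target invariant, $\pi_{\theta_j} P_{\theta_j} = \pi_{\theta_j}$, which holds because $P_{\theta_j}$ is a Metropolis--Hastings kernel targeting $\pi_{\theta_j}$ (the same reversibility construction used in Proposition \ref{prop:MC_reversible2}). Second, Markov kernels are non-expansive in total variation, i.e.\ $\|\mu P - \nu P\|_{\TV} \le \|\mu - \nu\|_{\TV}$ for any kernel $P$ and any $\mu, \nu \in \mathsf{M}_1(\Xset)$, a property that extends verbatim to finite compositions $P_{\theta_{j+1}} \cdots P_{\theta_m}$.

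First I would insert the measure $\pi_{\theta_s} P_{\theta_{s+1}} \cdots P_{\theta_m}$ and apply the triangle inequality, which immediately produces the mixing-error term $\|\mu P_{\theta_1} \cdots P_{\theta_m} - \pi_{\theta_s} P_{\theta_{s+1}} \cdots P_{\theta_m}\|_{\TV}$ together with a remainder $\|\pi_{\theta_s} P_{\theta_{s+1}} \cdots P_{\theta_m} - \pi_{\theta_\infty}\|_{\TV}$. Then I would split this remainder once more by inserting $\pi_{\theta_m}$, which isolates the optimization term $\|\pi_{\theta_m} - \pi_{\theta_\infty}\|_{\TV}$ and leaves $\|\pi_{\theta_s} P_{\theta_{s+1}} \cdots P_{\theta_m} - \pi_{\theta_m}\|_{\TV}$ to be controlled by the telescoping sum.

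To bound this last quantity, I would introduce the intermediate measures $\nu_j := \pi_{\theta_j} P_{\theta_{j+1}} \cdots P_{\theta_m}$ for $s \le j \le m$, noting that $\nu_s = \pi_{\theta_s} P_{\theta_{s+1}} \cdots P_{\theta_m}$ and that the empty composition at $j = m$ gives $\nu_m = \pi_{\theta_m}$, and then bound $\|\nu_s - \nu_m\|_{\TV} \le \sum_{j=s}^{m-1} \|\nu_j - \nu_{j+1}\|_{\TV}$ by the triangle inequality. The decisive step is to rewrite $\nu_{j+1} = \pi_{\theta_{j+1}} P_{\theta_{j+2}} \cdots P_{\theta_m} = \pi_{\theta_{j+1}} P_{\theta_{j+1}} P_{\theta_{j+2}} \cdots P_{\theta_m}$ using invariance, so that $\nu_j$ and $\nu_{j+1}$ are the images of $\pi_{\theta_j}$ and $\pi_{\theta_{j+1}}$ under the \emph{same} composition $P_{\theta_{j+1}} \cdots P_{\theta_m}$; non-expansiveness then yields $\|\nu_j - \nu_{j+1}\|_{\TV} \le \|\pi_{\theta_j} - \pi_{\theta_{j+1}}\|_{\TV}$, and summing produces exactly $\sum_{j=s}^{m-1} \|\pi_{\theta_{j+1}} - \pi_{\theta_j}\|_{\TV}$.

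The argument is entirely elementary, so there is no serious obstacle; the only point requiring genuine care is the invariance rewriting in the telescoping step, which is precisely what allows two consecutive measures $\nu_j$ and $\nu_{j+1}$ to be compared through a common Markov kernel and thereby converts the hard-to-estimate difference into a gap $\|\pi_{\theta_{j+1}} - \pi_{\theta_j}\|_{\TV}$ between successive invariant targets. I would also keep careful track of the index bookkeeping to ensure the endpoint conventions $\nu_s = \pi_{\theta_s} P_{\theta_{s+1}} \cdots P_{\theta_m}$ and $\nu_m = \pi_{\theta_m}$ are consistent.
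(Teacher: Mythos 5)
Your proposal is correct and follows essentially the same route as the paper: a triangle-inequality decomposition into mixing, tracking, and optimization terms, with the telescoping sum controlled by rewriting $\pi_{\theta_{j+1}} P_{\theta_{j+2}}\cdots P_{\theta_m} = \pi_{\theta_{j+1}} P_{\theta_{j+1}} P_{\theta_{j+2}}\cdots P_{\theta_m}$ via invariance and then applying the TV non-expansiveness of the common kernel composition. The only difference is the order in which the triangle inequality is applied (the paper peels off $\|\pi_{\theta_m}-\pi_{\theta_\infty}\|_{\TV}$ first), which is immaterial.
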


\begin{proof}
For all $m \in \mathbb{N}$, using the triangle inequality, we have:
\begin{align} \label{eq:err_decomp}
\left\|\mu P_{\theta_1} \cdots P_{\theta_m}-\pi_{\theta_{\infty}}\right\|_{\TV} &\leq \left\|\mu P_{\theta_1} \cdots P_{\theta_m}-\pi_{\theta_m}\right\|_{\TV} + \left\|\pi_{\theta_m}-\pi_{\theta_{\infty}}\right\|_{\TV} \eqsp.
\end{align}

Using the fact that $P_{\theta_m}$ admits $\pi_{\theta_m}$ as an invariant measure and applying the triangle inequality, for all $1 \leq s \leq m$, we have:
\begin{align*}
\left\|\mu P_{\theta_1} \cdots P_{\theta_m}-\pi_{\theta_m}\right\|_{\TV} &\leq \left\|\mu P_{\theta_1} \cdots P_{\theta_m}-\pi_{\theta_s} P_{\theta_{s+1}} \cdots P_{\theta_m}\right\|_{\TV}\\
& \quad + \sum_{j=s}^{m-1}\left\|\pi_{\theta_j} P_{\theta_{j+1}} P_{\theta_{j+2}} \cdots P_{\theta_m}-\pi_{\theta_{j+1}} P_{\theta_{j+1}} P_{\theta_{j+2}} \cdots P_{\theta_m}\right\|_{\TV} \\
&\leq \left\|\mu P_{\theta_1} \cdots P_{\theta_m}-\pi_{\theta_s} P_{\theta_{s+1}} \cdots P_{\theta_m}\right\|_{\TV} + \sum_{j=s}^{m-1}\left\|\pi_{\theta_{j+1}} - \pi_{\theta_j} \right\|_{\TV} \eqsp,
\end{align*}
where the last inequality follows from the fact that, for all $j$, the Markov kernels $P_{\theta_j}$ are contractions. Together with \eqref{eq:err_decomp}, this completes the proof.
\end{proof}

This bound decomposes the total variation error into three components: $(i)$ the mixing error (first term), which measures how well the Markov chain mixes from an arbitrary initial distribution; $(ii)$ the tracking error (second term), which measures how much the stationary distributions shift over time due to changes in the parameters; and $(iii)$ the optimization error (last term), which measures the difference between the current parameters and their limiting value $\theta_{\infty}$.

\subsubsection{Upper Bound on the Mixing Error}
\label{app:subsec:mixing_err}

\begin{Proposition} \label{prop:ergodic_apative}
Let Assumption \ref{ass:app:min_drift_adaptive} hold for a function $V: \Xset \to [1,\infty)$, where $\sup_{x\in \Xset} V(x) < \infty$. Let $(P_k)_{k \geq 1}$ be a sequence of Markov transition kernels on $\left(\Xset, \Xsigma\right)$. Then, there exists a constant $\rho \in (0,1)$ such that for all $\xi, \xi' \in \mathsf{M}_1(\Xset)$ and all $m \in \mathbb{N}$, 
\begin{align*}
\|\xi P_1 \cdots P_m - \xi' P_1 \cdots P_m\|_{\TV} & \leq \begin{cases} 2\rho^m & \text{if } \xi, \xi' \text{ are supported on } \mathcal{K} \eqsp, \\
\rho^m \big(\xi(V) + \xi'(V)\big) & \text{otherwise} \eqsp.\end{cases} 
\end{align*}
\end{Proposition}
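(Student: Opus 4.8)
The plan is to prove a one-step contraction for each kernel $P_k$ in a Harris-type weighted distance, with a rate that is \emph{uniform} in $k$, and then to iterate it along the inhomogeneous product $P_1\cdots P_m$. Following the Hairer--Mattingly formulation of Harris' theorem, I would fix a parameter $\beta>0$ to be determined and introduce on $\Xset\times\Xset$ the cost
$$
d_\beta(x,x') = \mathbf{1}_{x\neq x'}\bigl(2 + \beta V(x) + \beta V(x')\bigr) \eqsp,
$$
together with its associated Kantorovich distance $\mathcal{W}_{d_\beta}(\mu,\nu) = \inf_{\zeta\in\mathcal{C}(\mu,\nu)} \zeta(d_\beta)$ on $\mathsf{M}_1(\Xset)$, where $\mathcal{C}(\mu,\nu)$ is the coupling set of Definition \ref{def:coupling}. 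Two elementary comparisons drive the whole argument: since $d_\beta \geq 2\,\mathbf{1}_{x\neq x'}$, the coupling characterisation of total variation (Definition \ref{def:norm_TV}) gives $\|\mu-\nu\|_{\TV} \leq \mathcal{W}_{d_\beta}(\mu,\nu)$, while bounding $\mathbf{1}_{x\neq x'}$ by $1$ yields $\mathcal{W}_{d_\beta}(\mu,\nu) \leq 2+\beta\bigl(\mu(V)+\nu(V)\bigr)$. These two inequalities are exactly what will split the final estimate into its two regimes.

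The core step is to show that there exist $\beta>0$ and $\bar\rho\in(0,1)$, depending only on the constants $\varepsilon,\lambda,b$ of Assumption \ref{ass:app:min_drift_adaptive} and therefore independent of $k$, such that
$$
\mathcal{W}_{d_\beta}(\xi P_k, \xi' P_k) \leq \bar\rho\,\mathcal{W}_{d_\beta}(\xi,\xi') \eqsp.
$$
I would establish this on pairs of Dirac masses $\delta_x,\delta_{x'}$ and extend to general $\xi,\xi'$ by gluing couplings. The construction uses the minorization to couple the two chains so that, whenever both components lie in $\mathcal{K}$, they merge with probability at least $\varepsilon$ (both drawing from $\nu$), and uses the drift $P_kV\leq\lambda V$ off $\mathcal{K}$ to make the $V$-weight decay geometrically on the non-merged event; balancing the gain from merging against the growth of the $V$-weight fixes the admissible $\beta$ and the factor $\bar\rho$. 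Because the conditions in Assumption \ref{ass:app:min_drift_adaptive} hold with the same $\varepsilon,\lambda,b$ for every kernel, the resulting $\beta$ and $\bar\rho$ are common to all $P_k$.

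Granting the uniform one-step contraction, iterating along the product gives
$$
\mathcal{W}_{d_\beta}(\xi P_1\cdots P_m, \xi' P_1\cdots P_m) \leq \bar\rho^{\,m}\,\mathcal{W}_{d_\beta}(\xi,\xi') \eqsp,
$$
which is legitimate precisely because $\bar\rho$ does not vary with the kernel index. Combining this with $\|\cdot\|_{\TV}\leq \mathcal{W}_{d_\beta}$ and $\mathcal{W}_{d_\beta}(\xi,\xi')\leq 2+\beta(\xi(V)+\xi'(V))$ gives a decay of order $\bar\rho^m\bigl(2+\beta(\xi(V)+\xi'(V))\bigr)$. In the general case, I would use $\xi(V)+\xi'(V)\geq 2$ (from $V\geq 1$) to absorb the additive constant and slightly enlarge $\bar\rho$ to a rate $\rho\in(\bar\rho,1)$ to absorb the remaining multiplicative constant, producing $\rho^m(\xi(V)+\xi'(V))$; the finitely many small indices $m$ are dispatched by the crude bound $\|\cdot\|_{\TV}\leq 2$. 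When $\xi,\xi'$ are supported on $\mathcal{K}$, the boundedness $\sup_{\Xset}V<\infty$ lets me replace $\xi(V)+\xi'(V)$ by a constant, and the per-step minorization contraction $\|\xi P_1-\xi' P_1\|_{\TV}\leq 2(1-\varepsilon)$ (both laws dominating $\varepsilon\nu$) upgrades this to the sharper $2\rho^m$.

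The main obstacle is the uniform one-step contraction in the time-inhomogeneous setting. Unlike the single-kernel case of Theorem \ref{th:ergodic_general}, I cannot appeal to an invariant measure or a spectral-gap argument for each $P_k$ in isolation, so the contraction must be proved directly at the level of couplings, and, crucially, with $\beta$ and $\bar\rho$ that do not degrade as $k$ varies. This is exactly where the uniformity of $\varepsilon,\lambda,b$ in Assumption \ref{ass:app:min_drift_adaptive} is indispensable; the remaining tasks---verifying the coupling inequality on $\mathcal{K}\times\mathcal{K}$, choosing $\beta$, and the constant bookkeeping relating $d_\beta$ to $\|\cdot\|_{\TV}$ in the two regimes---are routine.
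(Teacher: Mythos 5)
Your proposal is correct and follows essentially the same route as the paper's proof: build a coupling kernel that merges the two chains with probability $\varepsilon$ on $\mathcal{K}\times\mathcal{K}$ via the minorization measure and evolves them independently otherwise, establish a one-step contraction --- uniform in $k$ because $\varepsilon,\lambda,b$ are --- for a functional combining $\mathbf{1}_{x\neq x'}$ with the Lyapunov function, iterate along the inhomogeneous product, and convert back to total variation through the coupling characterization. The only material difference is the choice of contracted functional: you take the additive Hairer--Mattingly semimetric $\mathbf{1}_{x\neq x'}\bigl(2+\beta V(x)+\beta V(x')\bigr)$, while the paper uses the H\"older interpolation $\Delta^{1-\idx}\bar V^{\idx}$ (as in Douc, Moulines, Priouret and Soulier, Theorem 19.4.1); both weightings deliver the same two-regime bound.
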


Proposition \ref{prop:ergodic_apative} corresponds to the mixing rate of a time-homogeneous Markov Chain and is analogous to \citet[Theorem 2]{douc2004quantitative}, though it differs in both statement and proof. In general, there are various approaches to establish the geometric ergodicity of Markov Chains. Here, we follow a coupling argument. Specifically, we adapt the proof of homogeneous Markov Chains from \citet[Theorem 19.4.1]{douc2018markov} to the time-homogeneous setting.

We construct a bivariate Markov Chain $(X_k, X'_k)_{k \geq 1}$ such that, marginally, $(X_k)_{k \geq 1}$ and $(X'_k)_{k \geq 1}$ are Markov Chains starting from $X_1 = x$ and $X'_1 = x'$, respectively, and each evolving according to the transition kernels $(P_k)_{k \geq 1}$.

To achieve this, for all $k\geq 1$, we define the modified kernel $Q_k$, defined for all $A \subset \mathcal{X}$ and $x_k\in \Xset$ as: 
$$
Q_k\left(x_{k}, A\right)=\frac{P_k\left(x_{k}, A\right)-\varepsilon \nu\left(A\right)}{1-\varepsilon} \eqsp,
$$ 
and introduce the coupling kernel $\bar{P}_k$ on $\Xset^2$, defined for all $A \times A' \subset \Xset^2$ and all $z_k=(x_k,x_k')\in \Xset^2$ by
\begin{multline}  \bar{P}_k\left(z_k, A\times A'\right) = \mathbf{1}_{x_k = x_k'} P_k\left(x_k, A\right) \delta_{x_{k+1}}\left(A'\right) + \mathbf{1}_{x_k \neq x_k'} \mathbf{1}_{z_k \notin \mathcal{K}^2} P_k\left(x_k, A\right) P_k\left(x_k', A'\right) \\  + \mathbf{1}_{x_k \neq x_k'} \mathbf{1}_{z_k \in \mathcal{K}^2}\left( \varepsilon \nu\left(A\right) \delta_{x_{k+1}}\left(A'\right) + \left(1 - \varepsilon\right) Q_k\left(x_k, A\right) Q_k\left(x_k', A'\right)\right) \eqsp. \label{eq:def:extkernel}
\end{multline}

\begin{lemma} \label{lem:coupling}
Let $\left(\bar{P}_k\right)_{k \geq 0}$ be the Markov kernels on $\left(\Xset^2, \Xsigma^2\right)$ defined by \eqref{eq:def:extkernel}. Then, for all $n \in \mathbb{N}$ and all $(x, x') \in \Xset^2$, we have  
$$
\bar{P}_{1} \cdots \bar{P}_{m}\left(\left(x, x'\right), \cdot\right) \in \mathcal{C}\left(P_{1} \cdots P_{m}(x, \cdot), P_{1} \cdots P_{m}\left(x', \cdot\right)\right) \eqsp,
$$
where $\mathcal{C}$ denotes the set of couplings introduced in Definition \ref{def:coupling}.
\end{lemma}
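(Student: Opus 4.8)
The plan is to prove the lemma by induction on $m$, reducing everything to the \emph{one-step marginal fidelity} of the coupling kernel, namely that for every $z=(x_k,x_k')\in\Xset^2$ and $A\in\Xsigma$,
$$
\bar{P}_k(z, A\times\Xset) = P_k(x_k, A), \qquad \bar{P}_k(z, \Xset\times A) = P_k(x_k', A) \eqsp.
$$
Once this is established, the coupling property propagates through composition automatically, so the heart of the argument is a one-step check plus a routine induction.

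First I would prove the one-step fidelity. The three regions in the definition of $\bar{P}_k$ are governed by mutually exclusive indicators ($x_k=x_k'$; $x_k\neq x_k'$ with $z\notin\mathcal{K}^2$; $x_k\neq x_k'$ with $z\in\mathcal{K}^2$), so it suffices to check the marginals region by region. The $\delta_{x_{k+1}}$ factors must be read as the pathwise description of the copy/coalescence coupling: on a product set $A\times A'$ the diagonal term contributes $\int_{A\cap A'}P_k(x_k,\rmd y)$ and the minorization term contributes $\varepsilon\,\nu(A\cap A')$, so that taking $A'=\Xset$ (resp.\ $A=\Xset$) recovers $P_k(x_k,A)$ (resp.\ $P_k(x_k,A')=P_k(x_k',A')$). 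For the region $z\in\mathcal{K}^2$ the essential input is the minorization condition of Assumption~\ref{ass:app:min_drift_adaptive}, which guarantees that the residual kernel $Q_k(x,\cdot)=(P_k(x,\cdot)-\varepsilon\nu)/(1-\varepsilon)$ is a genuine probability measure on $\mathcal{K}$ (nonnegative since $P_k(x,A)\geq\varepsilon\nu(A)$, and normalized since $Q_k(x,\Xset)=1$), together with the algebraic identity $\varepsilon\nu(A)+(1-\varepsilon)Q_k(x,A)=P_k(x,A)$. Setting $A'=\Xset$ collapses the factor $Q_k(x_k',\Xset)$ to $1$ and leaves exactly $P_k(x_k,A)$; the off-diagonal independent region is immediate since $P_k(x_k',\Xset)=1$; and the second marginal follows by the symmetric computation.

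Next I would run the induction. The base case $m=1$ is precisely the one-step fidelity just proved. For the inductive step I write, for $z_m=(x_m,x_m')$,
$$
\bar{P}_1\cdots\bar{P}_m\big((x,x'),A\times\Xset\big)=\int \bar{P}_1\cdots\bar{P}_{m-1}\big((x,x'),\rmd z_m\big)\,\bar{P}_m\big(z_m,A\times\Xset\big) \eqsp,
$$
substitute $\bar{P}_m(z_m,A\times\Xset)=P_m(x_m,A)$ from one-step fidelity, note that the integrand depends only on the first coordinate, and invoke the induction hypothesis that the first marginal of $\bar{P}_1\cdots\bar{P}_{m-1}((x,x'),\cdot)$ equals $P_1\cdots P_{m-1}(x,\cdot)$. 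The integral then reduces to $\int P_1\cdots P_{m-1}(x,\rmd x_m)P_m(x_m,A)=P_1\cdots P_m(x,A)$, and symmetrically for the second marginal, giving $\bar{P}_1\cdots\bar{P}_m((x,x'),\cdot)\in\mathcal{C}(P_1\cdots P_m(x,\cdot),P_1\cdots P_m(x',\cdot))$.

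I expect the only subtlety to be bookkeeping rather than a genuine obstacle: one must correctly interpret the informal $\delta_{x_{k+1}}$ notation as the copy coupling on the diagonal and on the coalescence event, and verify that both the off-diagonal independent region and the $Q_k$-residual region preserve each marginal. The measure-theoretic composition of kernels (a monotone-class argument reducing general measurable sets to products of the form $A\times\Xset$) is standard, and the minorization condition is precisely what makes $Q_k$ a well-defined probability kernel so that the splitting construction is legitimate.
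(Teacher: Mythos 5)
Your proof is correct and follows essentially the same route as the paper: an induction on $m$ whose inductive step composes with $\bar{P}_{m+1}$ and uses the one-step marginal identity $\bar{P}_k(z,A\times\Xset)=P_k(x_k,A)$. The only difference is that you spell out the region-by-region verification of that identity (including why the minorization condition makes $Q_k$ a genuine probability kernel), which the paper compresses into ``by construction.''
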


\begin{proof}
We proceed by induction on $m$. By the definition of $\bar{P}_1$, we have  by construction
$$
\bar{P}_1\left(\left(x, x'\right), \cdot\right) \in \mathcal{C}\left(P_1(x, \cdot), P_1(x', \cdot)\right) \eqsp.
$$
Suppose that for some $m \geq 1$, we have  
$$
\bar{P}_{1} \cdots \bar{P}_{m}\left(\left(x, x'\right), \cdot\right) \in \mathcal{C}\left(P_{1} \cdots P_{m}(x, \cdot), P_{1} \cdots P_{m}\left(x', \cdot\right)\right) \eqsp.
$$
Applying $\bar{P}_{m+1}$ to both sides and using the definition of composition of Markov kernels, we obtain, by definition of  $\bar{P}_{m+1}$,
\begin{align*}
\bar{P}_{1} \cdots \bar{P}_{m+1}\left(\left(x, x'\right), A \times \Xset\right) &= \int_{\Xset \times \Xset} \bar{P}_{1} \cdots \bar{P}_{m}\left((x, x'), \rmd y  \rmd y'\right) 
\bar{P}_{m+1}\left((y, y'), A \times \Xset\right) \\
&= \int_{\Xset \times \Xset} \bar{P}_{1} \cdots \bar{P}_{m}\left((x, x'), \rmd y  \rmd y'\right) 
P_{m+1}\left(y, A\right) \\
&= \int_{\Xset \times \Xset} P_{1} \cdots P_{m}\left(x, \rmd y\right) 
P_{m+1}\left(y, A\right) \\
&= P_{1} \cdots P_{m+1}\left(x, A\right) \eqsp,
\end{align*}
where the third equality follows from the inductive hypothesis.
Similarly, we obtain
$$
\bar{P}_{1} \cdots \bar{P}_{m+1}\left(\left(x, x'\right), \Xset \times A\right) = P_{1} \cdots P_{m+1}\left(x', A\right) \eqsp.
$$
Thus, we conclude that
$$
\bar{P}_{1} \cdots \bar{P}_{m+1}\left(\left(x, x'\right), \cdot\right) \in \mathcal{C}\left(P_{1} \cdots P_{m+1}(x, \cdot), P_{1} \cdots P_{m+1}\left(x', \cdot\right)\right) \eqsp.
$$
This completes the induction and proof.
\end{proof}

\begin{lemma} \label{lem:coupling_prop} 
For all $(x, x') \in \Xset^2$ define $\Delta(x, x') = \mathbf{1}_{x \neq x'}$ and $\bar{V}(x, x') = \left(V(x) + V(x')\right)/2$. Then, for all $k \in \mathbb{N}$:
\begin{itemize}
    \item If $(x, x') \in \mathcal{K}^2$, then
    $$
    \bar{P}_k \Delta(x, x') \leq (1 - \varepsilon) \Delta(x, x'),
    \quad \bar{P}_k \bar{V}(x, x') \leq b \eqsp.
    $$
    \item If $(x, x') \notin \mathcal{K}^2$, then
    $$
    \bar{P}_k \Delta(x, x') \leq \Delta(x, x'), \quad
    \bar{P}_k \bar{V}(x, x') \leq \lambda \bar{V}(x, x') \eqsp.
    $$
\end{itemize}
\end{lemma}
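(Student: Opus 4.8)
The plan is to treat the two Lyapunov-type functions $\Delta$ and $\bar V$ separately, and in each case to read off the bound directly from the three-piece definition of the coupling kernel $\bar P_k$ in \eqref{eq:def:extkernel}, distinguishing the diagonal $\{x=x'\}$, the off-diagonal part inside $\mathcal K^2$, and the off-diagonal part outside $\mathcal K^2$. For $\Delta$ the argument is purely probabilistic: $\bar P_k\Delta(x,x')$ is exactly the probability that the two coupled copies differ after one step. For $\bar V$ the key observation is that, by construction (equivalently, by Lemma~\ref{lem:coupling} specialized to $m=1$), the two marginals of $\bar P_k((x,x'),\cdot)$ are $P_k(x,\cdot)$ and $P_k(x',\cdot)$, so that $\bar P_k\bar V(x,x')=\tfrac12\big(P_kV(x)+P_kV(x')\big)$, after which the drift part of Assumption~\ref{ass:app:min_drift_adaptive} is applied coordinatewise.

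For the $\Delta$ bounds I would check each piece of \eqref{eq:def:extkernel}. On the diagonal the two copies move synchronously, so $y=y'$ and $\bar P_k\Delta=0$, consistent with both claimed inequalities since $\Delta(x,x')=0$ there. Off the diagonal with $(x,x')\notin\mathcal K^2$ the copies evolve independently, so $\bar P_k\Delta(x,x')=\mathbb P(Y\ne Y')\le 1=\Delta(x,x')$. Off the diagonal with $(x,x')\in\mathcal K^2$, the minorization mass forces coupling with probability $\varepsilon$ (the term $\varepsilon\,\nu(\rmd y)\delta_{y}(\rmd y')$ contributes $0$ to $\Delta$) while the residual kernels $Q_k$ contribute at most $1$, giving $\bar P_k\Delta(x,x')\le(1-\varepsilon)=(1-\varepsilon)\Delta(x,x')$. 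This yields the contraction of $\Delta$ on $\mathcal K^2$ and its non-expansiveness elsewhere.

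For the $\bar V$ bounds I would use the marginal identity $\bar P_k\bar V(x,x')=\tfrac12\big(P_kV(x)+P_kV(x')\big)$. When $(x,x')\in\mathcal K^2$ both $x,x'\in\mathcal K$, so the drift gives $P_kV\le b$ for each coordinate and hence $\bar P_k\bar V(x,x')\le b$. When both $x,x'\notin\mathcal K$ the drift gives $P_kV(x)\le\lambda V(x)$ and $P_kV(x')\le\lambda V(x')$, whence $\bar P_k\bar V(x,x')\le\lambda\bar V(x,x')$ immediately. Note that the diagonal requires no special treatment, as it is subsumed by these two computations.

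The main obstacle is the remaining mixed case, where exactly one point, say $x'$, lies outside $\mathcal K$ while $x\in\mathcal K$: the naive bound only gives $\bar P_k\bar V(x,x')\le\tfrac12\big(b+\lambda V(x')\big)=\tfrac b2+\tfrac\lambda2 V(x')$, which is not directly $\le\lambda\bar V$. The resolution is to exploit that $\mathcal K$ may be taken as a sublevel set $\{V\le d\}$ with $d$ large enough: then $x'\notin\mathcal K$ forces $V(x')>d$, so $\bar V(x,x')\ge\tfrac12 V(x')>d/2$, and the residual constant is absorbed via $\tfrac b2\le\tfrac bd\,\bar V(x,x')$, yielding $\bar P_k\bar V(x,x')\le(\lambda+b/d)\bar V(x,x')$ with $\lambda+b/d<1$ once $d>b/(1-\lambda)$; such a choice is compatible with $\sup_x V(x)<\infty$. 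Relabeling this modified contraction constant as $\lambda$ gives the stated inequality uniformly over $(x,x')\notin\mathcal K^2$, and uniformly in $k$ since all constants $\varepsilon,\lambda,b,\nu,\mathcal K$ are taken uniform in $k$ by Assumption~\ref{ass:app:min_drift_adaptive}.
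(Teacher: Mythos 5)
Your treatment of $\bar P_k\Delta$ (synchronous on the diagonal, independent outside $\mathcal K^2$, $\varepsilon$-coupling inside $\mathcal K^2$) and of the two clean $\bar V$ cases via the marginal identity $\bar P_k\bar V(x,x')=\tfrac12\big(P_kV(x)+P_kV(x')\big)$ is exactly the paper's argument. The substantive difference is the mixed case $x\in\mathcal K$, $x'\notin\mathcal K$, which you are right to isolate: the paper's proof simply writes ``using $P_kV(x)\le\lambda V(x)$ and $P_kV(x')\le\lambda V(x')$'' for all $(x,x')\notin\mathcal K^2$, which is only licensed when \emph{both} coordinates lie outside $\mathcal K$. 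This is not a cosmetic omission. Since $V\ge 1$ forces $P_kV\ge 1$ and hence $b\ge 1>\lambda$, the claimed bound $\bar P_k\bar V\le\lambda\bar V$ can genuinely fail in the mixed region (take $V(x)=1$ with $x\in\mathcal K$ and $V(x')$ close to its minimal admissible value $1/\lambda$ off $\mathcal K$; then $\bar P_k\bar V\ge 1>(\lambda+1)/2\ge\lambda\bar V$). So under Assumption~\ref{ass:app:min_drift_adaptive} as literally stated, the second bullet of the lemma is not provable, and the paper's proof is incomplete at precisely the point you identify.

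Your patch --- taking $\mathcal K=\{V\le d\}$ with $d>b/(1-\lambda)$ and absorbing $b/2$ into $(b/d)\,\bar V$ --- is the standard remedy and is internally correct, but be explicit that it is a strengthening of the hypotheses, not a derivation from them: Assumption~\ref{ass:app:min_drift_adaptive} places no structural relation between $\mathcal K$ and $V$, and the concrete $\mathcal K$ used to verify the drift in Theorem~\ref{th:ergodic_theta_fixed} (a ball $\|z\|^2\le R^2$ intersected with a condition on $y$) is \emph{not} a sublevel set of $V_x(z,y)=1+e^{sC(y,x)}$, which does not depend on $z$ at all; no nonempty sublevel set of $V_x$ is contained in that $\mathcal K$. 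So your fix propagates: either the assumption must require $\mathcal K\supseteq\{V\le d\}$ for suitable $d$ (equivalently, minorization on a sublevel set of the bivariate $\bar V$, in the Hairer--Mattingly style) and the downstream verification must be redone with such a set, or the contraction factor $\lambda$ in the lemma and in $\varrho_{\idx}$ of Proposition~\ref{prop:ergodic_apative} must be replaced by $\lambda+b/d<1$, as you note. Flagging this dependency is the one thing still missing from your write-up; otherwise the argument is sound and strictly more careful than the paper's.
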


\begin{proof}
The inequality for $\bar{P}_k \Delta$ in both cases follows immediately from the definition of $\bar{P}_k$.
For the second inequality, we have:
$$
\bar{P}_k \bar{V}(x, x') = \frac{P_k V(x) + P_k V(x')}{2} \eqsp.
$$
If $(x, x') \in \mathcal{K}^2$, then since $P_k V(x) \leq b$ and $P_k V(x') \leq b$, it follows that:
$$
\frac{P_k V(x) + P_k V(x')}{2} \leq b \eqsp.
$$

If $(x, x') \notin \mathcal{K}^2$, using $P_k V(x) \leq \lambda V(x)$ and $P_k V(x') \leq \lambda V(x')$, we obtain:
$$
\bar{P}_k \bar{V}(x, x') = \frac{P_k V(x) + P_k V(x')}{2} \leq \frac{\lambda V(x) + \lambda V(x')}{2} = \lambda \bar{V}(x, x') \eqsp.
$$
This concludes the proof. 
\end{proof}

\begin{proof}[Proof of Proposition \ref{prop:ergodic_apative}]
For any $\idx \in(0,1)$, we define
\begin{equation*}
\varrho_{\idx}=\max \left(\left(1-\varepsilon\right)^{1-\idx} b^{\idx}, \lambda^{\idx}\right) \eqsp.
\end{equation*}

For chosen $\idx$, we introduce the function:
$W(x,x')=\Delta^{1-\idx} 1_{(x,x') \in \mathcal{K}^2} + \Delta^{1-\idx} \bar{V}^{\idx}1_{(x,x') \notin \mathcal{K}^2}$. Then, using Lemma \ref{lem:coupling}, we have:
\begin{align*}
\left\|P_{1} \cdots P_{m}(x, \cdot) - P_{1} \cdots P_{m}\left(x', \cdot\right)\right\|_{\TV} &= 2 \inf \left\{ \zeta(\Delta) : \zeta \in \mathcal{C}(P_{1} \cdots P_{m}(x, \cdot), P_{1} \cdots P_{m}\left(x', \cdot\right)) \right\} \\
&\leq 2 \bar{P}_{1} \cdots \bar{P}_{m} \Delta\left(x, x'\right) \\
&\leq 2 \bar{P}_{1} \cdots \bar{P}_{m} W(x,x') \eqsp.
\end{align*}
where we used $V \geq 1$. 
Finally, applying Hölder’s inequality and using Lemma \ref{lem:coupling_prop}, we obtain, for all $\left(x, x'\right) \in \Xset^{2}$,
$$
\begin{aligned}
\bar{P}_k W\left(x, x'\right)=\bar{P}_k\left(\Delta^{1-\idx} \bar{V}^{\idx}\right)\left(x, x'\right) & \leq\left(\bar{P}_k \Delta\left(x, x'\right)\right)^{1-\idx}\left(\bar{P}_k \bar{V}\left(x, x'\right)\right)^{\idx} \\
& \leq \Delta^{1-\idx} \left(x, x'\right) \times \begin{cases}\left(1-\varepsilon\right)^{1-\idx}b^{\idx} & \text { if }\left(x, x'\right) \in \mathcal{K}^2 \\
\lambda^{\idx} \bar{V}^{\idx} \left(x, x'\right) & \text { if }\left(x, x'\right) \notin \mathcal{K}^2\end{cases} \\
& \leq \varrho_{\idx} W\left(x, x'\right) \eqsp.
\end{aligned}
$$

This implies by induction that for all $m \in \mathbb{N}$ and all $\left(x, x'\right) \in \Xset^{2}$,
\begin{equation*}
\bar{P}_{1} \cdots \bar{P}_{m} W\left(x, x'\right) \leq \varrho_{\idx}^{m} W\left(x, x'\right) \eqsp.
\end{equation*}

Then,
\begin{align*}
\left\|P_{1} \cdots P_{m}(x, \cdot) - P_{1} \cdots P_{m}\left(x', \cdot\right)\right\|_{\TV} 
&\leq 2 \varrho_{\idx}^{m} W\left(x, x'\right) \\
& \leq \begin{cases} 2 \varrho_{\idx}^{m} & \text { if } x \in \mathcal{K} \eqsp, \\
\varrho_{\idx}^{m}\left(V(x)+V\left(x'\right)\right) & \text { if } x \notin \mathcal{K} \eqsp. \end{cases}
\end{align*}
This concludes the proof.
\end{proof}

\subsubsection{Proof of Theorem \ref{th:ergodic_adaptive_general}}
\label{app:subsec:proof_ergodic_adaptive_general}

\begin{proof}
Using Lemma \ref{lem:decom}, and taking $s=m-b_m$, we have:
\begin{align*}
\left\|\mu P_{\theta_1} \cdots P_{\theta_m}-\pi_{\theta_{\infty}}\right\|_{\TV} &\leq \left\|\mu P_{\theta_1} \cdots P_{\theta_m}-\pi_{\theta_{m-b_m}} P_{\theta_{m-b_m}} \cdots P_{\theta_m}\right\|_{\TV} \\
&\quad + \sum_{j=m-b_m}^{m-1}\left\|\pi_{\theta_{j+1}} - \pi_{\theta_j} \right\|_{\TV} + \left\|\pi_{\theta_m}-\pi_{\theta_{\infty}}\right\|_{\TV} \\
&\leq \left\|\mu P_{\theta_1} \cdots P_{\theta_m}-\pi_{\theta_{m-b_m}} P_{\theta_{m-b_m}} \cdots P_{\theta_m}\right\|_{\TV} \\
&\quad + L(x) \sum_{j=m-b_m}^{m-1}\left\|\theta_{j+1} - \theta_j \right\| + L(x) \left\|\theta_m - \theta_{\infty}\right\| \eqsp,
\end{align*}
where we used the Lipschitz condition of $\pi_{\theta}$. 
For the first term, using Proposition \ref{prop:ergodic_apative} with $\xi = \mu P_{\theta_1} \cdots P_{\theta_{m-b_m-1}}$ and $\xi' = \pi_{\theta_{m-b_m}}$, we have:
$$
\left\|\mu P_{\theta_1} \cdots P_{\theta_m}-\pi_{\theta_{m-b_m}} P_{\theta_{m-b_m}} \cdots P_{\theta_m}\right\|_{\TV} \leq \kappa \rho^{b_m} \eqsp.
$$

For the second term, using the Lipschitz condition (Assumption \ref{ass:app:Lipschitz}) and the recursion of $\theta_{j+1}$, we get:
\begin{align*}
\sum_{j=m-b_m}^{m-1} \mathbb{E}\left[ \left\|\theta_{j+1} - \theta_j \right\| \right] &=  \sum_{j=m-b_m}^{m-1} \gamma_{j+1} \mathbb{E}\left[ \left\|H_{\theta_j}\left(x_{j+1}\right) \right\| \right] \\
&= \mathbb{E}\left[ L(x) \right] \sum_{j=m-b_m}^{m-1} \gamma_{j+1} \eqsp.
\end{align*}

For the last term, Using Jensen inequality and Assumption \ref{ass:app:optimal_theta}, we obtain:
$$
\mathbb{E}\left[ \left\| \theta_m - \theta_{\infty} \right\|
\right] \leq \left\| \theta_m - \theta_{\infty} \right\|_{\mathrm{L}_2} = \mathcal{O}\left( a_m\right) \eqsp.
$$
The proof concludes by taking expectations in the decomposition above and combining the previous bounds.
\end{proof}

\subsection{Proof of Theorem \ref{th:ergodic_adaptive}}

For a fixed $x \in \Xset$, since $\Zset$ is compact and $\Yset$ is finite, we consider $\mathcal K=\Zset\times\Yset$.
Following the procedure outlined in the proof of Theorem~\ref{th:ergodic_theta_fixed}, the minorization condition holds on the whole state space. Moreover, taking $V\equiv 1$, the drift condition is immediate since
\[
P_\theta V(z,y)=1
\]
for all $(z,y)\in\Zset\times\Yset$. Hence Assumption~\ref{ass:app:min_drift_adaptive} holds.
Additionally, using Assumptions~\ref{ass:bounded_decoder_score} and~\ref{ass:parameter_convergence}, we verify Assumptions~\ref{ass:app:Lipschitz} and~\ref{ass:app:optimal_theta}. The proof is concluded by applying Theorem~\ref{th:ergodic_adaptive_general}.

\subsection{Convergence Rate in Stochastic Approximation}
\label{app:subsec:SA_rate}

Stochastic Approximation can be traced back to \citet{robbins1951stochastic}. Since then, numerous variants have been proposed, including those using adaptive step sizes, such as \citet{kingma2014adam}.
The non-asymptotic convergence of biased SA methods has been studied in various settings. For instance, \citet{karimi2019non} analyzes the case without adaptive step sizes, while \citet{surendran2024non} extends the analysis to include adaptive schemes for non-convex smooth objectives.
These works provide convergence guarantees in terms of the squared norm of the gradient of the objective function. Specifically, they show that the iterates converge to a critical point at a rate of $\mathcal{O}(\log m/\sqrt{m} + b)$, where $b$ corresponds to the bias and $m$ to the number of iterations. 
The analysis typically relies on standard assumptions, including the smoothness of the objective function, an assumption on the bias and variance of the gradient estimator (see Assumption H3 in \citet{surendran2024non}), and a decreasing step size.

In our setting, given that the cost is bounded, the smoothness of the test objective $\mathcal{L}_{test}$ hinges on the smoothness of the policy $p_{\theta}$, an assumption also used in \citet{papini2018stochastic, surendran2025theoretical}. The stochastic update defined in \eqref{eq:grad_est_theta_inf} is bounded under Assumption \ref{ass:bounded_decoder_score}, which allows us to verify the necessary conditions on the bias and variance. In particular, the bias is of order $\mathcal{O}(1/K)$. Therefore, with an additional smoothness assumption on $p_\theta$ and a suitable choice of step sizes, such as $\gamma_m = 1/\sqrt{m}$, Assumption \ref{ass:parameter_convergence} can be satisfied.

\section{Extensive Related Work}
\label{app:sec:extended_rw}

\textbf{Other Sampling Methods. }
Besides MCMC methods, there are works on Boltzmann sampling with non-differentiable reward functions \citep{fan2023dpok, uehara2024understanding, venkatraman2024amortizing, bengio2023gflownet}, which align well with our setting. In \citet{fan2023dpok, uehara2024understanding, venkatraman2024amortizing}, the authors propose methods for fine-tuning diffusion models to maximize potentially non-differentiable reward functions. These approaches remain applicable when adapting a generic pretrained policy. This idea is closely related to Active Search \citep{bello2016neural}, where policy parameters are fine-tuned through Reinforcement Learning. Efficient Active Search (EAS) \citep{hottung2021efficient} extends this approach by fine-tuning only a subset of parameters—typically by adding new layers—and by incorporating Imitation Learning (IL), which allows part of the generated samples to imitate the best solutions found.
Another promising direction is GFlowNets \citep{bengio2023gflownet}, which have also been applied to Vehicle Routing Problems \citep{zhang2025adversarial}. In this setting, a GFlowNet generator is trained jointly with an adversarial discriminator to tackle large-scale routing problems, handling instances with up to 1,000 nodes.

\textbf{Large-scale NCO Methods. }
Several recent works specifically focus on solving large-scale problems in the NCO literature, with an emphasis on scalability and generalization. \citet{li2021learning} propose a learning-augmented local search framework for large-scale vehicle routing problems, where a learned policy decomposes the original problem into subproblems that are then solved by a classical solver. \citet{qiu2022dimes} introduce DIMES, which learns probabilities over solution components and uses them to sample many candidate solutions in parallel, thereby avoiding the costly step-by-step neural decoding used in many NCO methods. \citet{sun2023revisiting} revisit sampling-based methods for combinatorial optimization and show that simple sampling strategies can remain strong competitors to learned solvers, especially when considering scalability and robustness under distribution shift. Beyond these approaches, scalability and generalization have also been studied through meta-learning and adaptive search \citep{son2023meta}, neural ant-colony-style search \citep{ye2023deepaco, kim2025ant}, lightweight attention mechanisms for large-scale routing \citep{luo2025boosting}, and robustness-oriented training schemes \citep{huang2025rethinking}. These approaches are complementary to ours and address scalability from different perspectives, including decomposition, meta-learning, hybrid discrete search, lightweight architectures, and robustness under distribution shift.

\newpage
\section{Additional Experiments}
\label{app:sec:exp}

\subsection{Training Details}

In this section, we provide the details of our model and training procedure. The encoder uses multi-head attention with 8 heads and an embedding dimension of $d_h = 128$, and consists of 6 layers. The decoder includes a single multi-head attention layer with 8 heads and a key dimension of $d_k = 16$.

For both the TSP and CVRP, node coordinates $c_i$ are sampled uniformly within the unit square. In CVRP instances, customer demands $d_i$ are drawn from a uniform distribution $\mathcal{U}([1, 10])$. The vehicle capacity $D$ is set based on the number of nodes: $D = 50$ for $n = 100$, $D = 55$ for $n = 125$, and $D = 60$ for $n = 150$, following the setup used in the literature \citep{hottung2020learning}.

Training is conducted only for instances with $n = 100$ nodes, using $K = 100$ latent samples. The latent space is defined as a compact space with diameter $R = 40$ and dimension $d_z = 100$. We use the Adam optimizer with a learning rate of $5 \times 10^{-4}$, a batch size of $128$, and train for $8000$ epochs. The momentum parameters are fixed at $\beta_1 = 0.9$ and $\beta_2 = 0.999$, with a weight decay of $1 \times 10^{-6}$. The entropic regularization parameter $\beta$ is set to $0.01$, and the weights $\tau$ in the loss \eqref{eq:obj_training} are chosen according to an exponential decay schedule.
The training loss and corresponding cost are shown in Figure \ref{fig:training_loss_score}.

\begin{figure}[ht!]\centering  
\includegraphics[width=8cm,height=5cm]{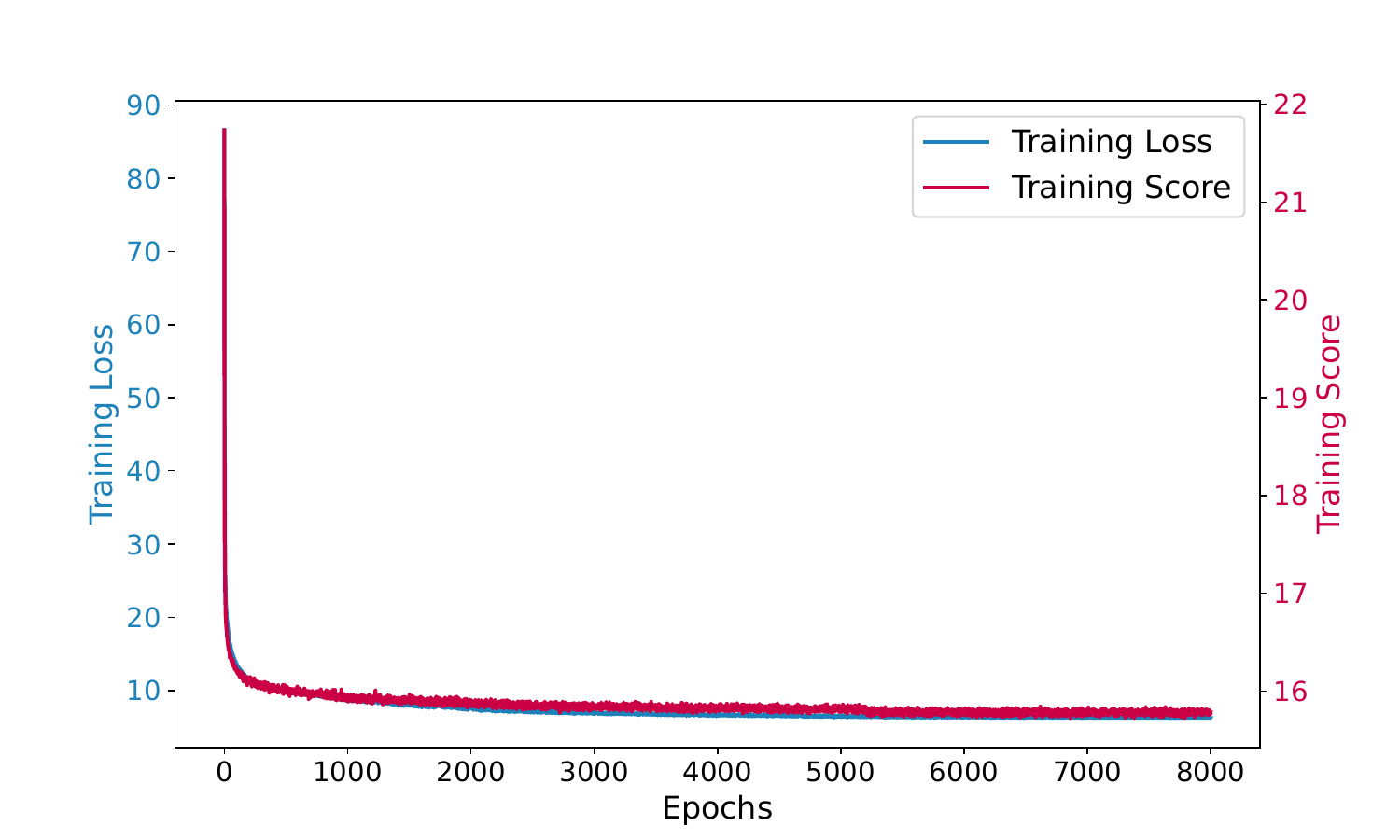}
\caption{\label{fig:training_loss_score} Training loss and score for our model trained on CVRP instances with nodes $n=100$}
\centering
\end{figure}

\subsection{Inference Details and Additional Experiments}
\label{app:subsec:inference}

\subsubsection{Inference Details}

The inference results typically include the objective value (cost), the optimality gap, and the computation time. For example, in Tables~\ref{tab:results_TSP} and \ref{tab:results_CVRP}, Obj. denotes the value of the cost function defined in \eqref{eq:cost_tsp_cvrp} for the TSP and CVRP. The Gap indicates the percentage gap to optimality, computed as:
\begin{equation} \label{app:eq:gap}
Gap(y, y^*) = \left(\frac{C(y,x)}{C(y^*,x)} - 1\right)*100 \% \eqsp,
\end{equation}
where $y^*$ denotes the optimal solution for TSP and the near-optimal solution provided by LKH3 for CVRP.

In our experiments, we use a batch size of 200 for TSP and 100 for CVRP with $K = 600$ latent samples when the augmentation trick is not applied. When using the augmentation trick, we reduce the batch size to 100 for TSP and 50 for CVRP, and $K = 300$ latent samples.

The latent dimension is set to $d_z = 100$, and the latent space is constrained to a ball of radius $R = 40$. For MCMC, we use a Gaussian proposal distribution with density, for all $m \in \mathbb{N}$ and $1 \leq k \leq K$,
$$
q(z^{k}_{m+1} \mid z^{1:K}_{m}) = \mathcal{N}\left(z_{m+1}^{k}; z_{m}^{k} + \gamma\left(z^{I_1}_{m} - z^{I_2}_{m}\right), \sigma^2 I_{d_z}\right) \eqsp,
$$
where $I_1, I_2 \sim \mathcal{U}(\{1, \dots, K\})$. The variance parameter is set to $\sigma^2 = 0.01$ and the scaling factor is $\gamma = 0.319$ for TSP and $\gamma = 0.379$ for CVRP.
Instead of updating the parameters at every iteration, updates are performed at fixed intervals. The update schedule is described in the next section and illustrated in Figure~\ref{fig:sa_step_illus}.

\paragraph{Symmetry of the proposal distribution.}
The cross-particle proposal used in our experiments is symmetric conditional on the current particle population. Indeed, for a given
population $z_m^{1:K}$, the proposal from a current latent particle $z$ can be written as
\[
z' = z+\gamma(z_m^{I_1}-z_m^{I_2})+\varepsilon,
\qquad
\varepsilon\sim\mathcal N(0,\sigma^2 I_{d_z}) \eqsp,
\]
where $I_1$ and $I_2$ are sampled uniformly from $\{1,\ldots,K\}$.
Hence its conditional density is
\[
q_m(z'\mid z;z_m^{1:K}) = \frac{1}{K^2}\sum_{i,j=1}^K f_\sigma\left(z'-z-\gamma(z_m^i-z_m^j)\right) \eqsp,
\]
where $f_\sigma$ is the centered Gaussian density with covariance
$\sigma^2 I_{d_z}$. Since $f_\sigma(u)=f_\sigma(-u)$ and the sum is
invariant under exchanging $i$ and $j$, we obtain
\[
q_m(z'\mid z;z_m^{1:K}) = q_m(z\mid z';z_m^{1:K}) \eqsp .
\]
Thus, conditional on the particle population, the proposal is symmetric.

\subsubsection{Additional Experiments}

Here, we present the experimental results for TSP and CVRP with the augmentation trick.
“Obj.” denotes the average total cost (travel distance) over all instances, while “Time” indicates the total runtime for all 1,000 instances.
“Gap” defined in \eqref{app:eq:gap}, measures the difference from the best-known solution (Concorde for TSP and LKH3 for CVRP).
Concorde is an exact solver (optimal solutions), whereas LKH3 is a heuristic (near-optimal solutions). Consequently, negative gaps indicate that the corresponding method achieves lower-cost solutions than LKH3 on CVRP.
For clarity and ease of comparison, we also report the results without augmentation, as originally shown in the main paper. In addition, we report results for HGS \citep{vidal2012hybrid, vidal2022hybrid}, one of the state-of-the-art heuristic algorithms for CVRP. This provides a strong
reference point for evaluating solution quality, since HGS often
produces near-optimal solutions on standard CVRP benchmarks.

\begin{table}[ht]
\centering
\scriptsize
\renewcommand{\arraystretch}{0.9}
\caption{Experimental results on TSP without and with the augmentation trick}
\label{tab:results_TSP} 
\resizebox{\textwidth}{!}{ 
\begin{tabular}{llcccccccccc} 
\toprule
\multicolumn{2}{l}{} & 
\multicolumn{3}{c}{\textbf{Training distribution}} & 
\multicolumn{6}{c}{\textbf{Generalization}} \\
\cmidrule(lr){3-5} \cmidrule(lr){6-11}
\multicolumn{2}{l}{} & 
\multicolumn{3}{c}{\textbf{n = 100}} & 
\multicolumn{3}{c}{\textbf{n = 125}} & 
\multicolumn{3}{c}{\textbf{n = 150}} \\
\cmidrule(lr){3-5} \cmidrule(lr){6-8} \cmidrule(lr){9-11}
& \textbf{Method} & Obj. & Gap & Time & Obj. & Gap & Time & Obj. & Gap & Time \\
\midrule
& Concorde & 7.752 & 0.00\% & 8M & 8.583 & 0.00\% & 12M & 9.346 & 0.00\% & 17M \\
& LKH3 & 7.752 & 0.00\% & 47M & 8.583 & 0.00\% & 73M & 9.346 & 0.00\% & 99M \\
\midrule
\multirow{8}{*}{\rotatebox{90}{no aug.}} & POMO (greedy) & 7.785 & 0.429\% & $<$1M & 8.640 & 0.664\% & $<$1M & 9.442 & 1.022\% & $<$1M \\
& POMO (sampling) & 7.768 & 0.206\% & 20M & 8.614 & 0.361\% & 30M & 9.406 & 0.642\% & 40M \\

& CVAE-Opt & 7.779 & 0.348\% & 15H & 8.646 & 0.736\% & 21H & 9.482 & 1.454\% & 30H \\
& EAS & 7.767 & 0.197\% & 20M & 8.607 & 0.280\% & 30M & 9.387 & 0.434\% & 40M \\
& COMPASS & 7.753 & 0.014\% & 20M & 8.586 & 0.035\% & 30M & 9.358 & 0.128\% & 40M \\
& ELG & 7.783 & 0.399\% & 20M & 8.634 & 0.594\% & 30M & 9.427 & 0.867\% & 40M \\
& CNF & 7.766 & 0.181\% & 20M & 8.607 & 0.279\% & 30M & 9.394 & 0.514\% & 40M \\
& LGS-Net (ours) & \bf 7.752 & \bf 0.002\% & 20M & \bf 8.584 & \bf 0.012\% & 30M & \bf 9.354 & \bf 0.081\% & 40M \\ \midrule
\multirow{7}{*}{\rotatebox{90}{aug.}} & POMO (greedy) & 7.762 & 0.132\% & $<$1M & 8.607 & 0.280\% & $<$1M & 9.397 & 0.541\% & $<$1M \\
& POMO (sampling) & 7.756 & 0.051\% & 40M & 8.596 & 0.156\% & 60M & 9.378 & 0.342\% & 90M \\
& EAS & 7.755 & 0.042\% & 60M & 8.591 & 0.093\% & 100M & 9.363 & 0.177\% & 160M \\
& COMPASS & 7.752 & 0.002\% & 40M & 8.585 & 0.024\% & 60M & 9.352 & 0.059\% & 90M \\
& ELG & 7.761 & 0.116\% & 40M & 8.606 & 0.268\% & 75M & 9.391 & 0.481\% & 140M \\
& CNF & 7.756 & 0.052\% & 40M & 8.595 & 0.139\% & 75M & 9.377 & 0.332\% & 140M \\
& LGS-Net (ours) & \bf 7.752 & \bf 0.000\% & 40M & \bf 8.583 & \bf 0.001\% & 60M & \bf 9.349 & \bf 0.027\% & 90M \\
\bottomrule
\end{tabular}
}
\end{table}

\begin{table}[ht]
\centering
\scriptsize 
\renewcommand{\arraystretch}{0.9} 
\caption{Experimental results on CVRP without and with the augmentation trick}
\label{tab:results_CVRP} 
\resizebox{\textwidth}{!}{ 
\begin{tabular}{llccccccccc} 
\toprule
\multicolumn{2}{l}{} & 
\multicolumn{3}{c}{\textbf{Training distribution}} & 
\multicolumn{6}{c}{\textbf{Generalization}} \\
\cmidrule(lr){3-5} \cmidrule(lr){6-11} 
\multicolumn{2}{l}{} & 
\multicolumn{3}{c}{\textbf{n = 100}} & 
\multicolumn{3}{c}{\textbf{n = 125}} & 
\multicolumn{3}{c}{\textbf{n = 150}} \\
\cmidrule(lr){3-5} \cmidrule(lr){6-8} \cmidrule(lr){9-11}
\multicolumn{2}{l}{\textbf{Method}} & Obj. & Gap & Time & Obj. & Gap & Time & Obj. & Gap & Time \\
\midrule
& LKH3 & 15.54 & 0.00\% & 17H & 17.50 & 0.00\% & 19H & 19.22 & 0.00\% & 20H \\
& HGS & 15.462 & -0.505\% & 5H33 & 17.374 & -0.723\% & 6H57 & 19.051 & -0.879\% & 8H20 \\
& OR Tools & 17.084 & 9.936\% & 38M & 18.036 & 3.063\% & 64M & 21.209 & 10.349\% & 73M \\
\midrule
\multirow{8}{*}{\rotatebox{90}{no aug.}} & POMO (greedy) & 15.740 & 1.287\% & $<$1M & 17.905 & 2.314\% & $<$1M & 19.882 & 3.444\% & $<$1M \\
& POMO (sampling) & 15.607 & 0.431\% & 40M & 17.652 & 0.869\% & 1H & 19.539 & 1.659\% & 1H30 \\
& CVAE-Opt & 15.752 & 1.364\% & 32H & 17.864 & 2.080\% & 36H & 19.843 & 3.240\% & 46H \\
& EAS & 15.563 & 0.148\% & 40M & 17.541 & 0.234\% & 1H & 19.319 & 0.515\% & 1H30 \\
& COMPASS & 15.561 & 0.135\% & 40M & 17.546 & 0.263\% & 1H & 19.358 & 0.718\% & 1H30 \\
& ELG & 15.736 & 1.261\% & 40M & 17.729 & 1.308\% & 1H & 19.516 & 1.540\% & 1H30 \\
& CNF & 15.591 & 0.328\% & 40M & 17.682 & 1.040\% & 1H & 19.998 & 4.047\% & 1H30 \\
& LGS-Net (ours) & \bf 15.524 & \bf -0.102\% & 40M & \bf 17.496 & \bf -0.022\% & 1H & \bf 19.286 & \bf 0.343\% & 1H30 \\
\midrule
\multirow{7}{*}{\rotatebox{90}{aug.}} & POMO (greedy) & 15.652 & 0.721\% & 1M & 17.756 & 1.463\% & 1M & 19.701 & 2.503\% & 1M \\
& POMO (sampling) & 15.561 & 0.137\% & 80M & 17.583 & 0.475\% & 2H10 & 19.462 & 1.261\% & 3H20 \\
& EAS & 15.508 & -0.205\% & 80M & 17.466 & -0.194\% & 2H10 & \bf 19.212 & \bf -0.041\% & 3H20 \\
& COMPASS & 15.531 & -0.057\% & 80M & 17.512 & 0.068\% & 2H10 & 19.318 & 0.509\% & 3H20 \\
& ELG & 15.635 & 0.611\% & 90M & 17.623 & 0.703\% & 2H30 & 19.421 & 1.046\% & 4H15 \\
& CNF & 15.553 & 0.084\% & 90M & 17.607 & 0.611\% & 2H30 & 19.878 & 3.423\% & 4H15 \\
& LGS-Net (ours) & \bf 15.501 & \bf -0.251\% & 80M & \bf 17.461 & \bf -0.223\% & 2H10 & 19.229 & 0.046\% & 3H20 \\
\bottomrule
\end{tabular}
}
\end{table}

The average performance of each method is reported in Table \ref{tab:results_TSP} (TSP) and Table \ref{tab:results_CVRP} (CVRP), both with and without the augmentation trick of \citet{kwon2020pomo}. Overall, our approach achieves state-of-the-art performance across most settings. For the TSP, our method produces near-optimal solutions even without augmentation, and consistently reaches optimality when the augmentation trick is applied.

For the CVRP without augmentation, our model again outperforms all learning-based baselines, including both COMPASS and EAS. It also surpasses the performance of LKH3 on the instances with $n=100$ and $n=125$. When augmentation is applied, performance improves further. However, for $n=150$, EAS outperforms our method, likely due to the reduced number of latent samples imposed by the computational budget. In this case, exploring the latent space effectively may require a higher sample count. We note that, when solving one instance at a time (thus relaxing the budget constraint), our model can achieve even stronger performance under augmentation.

\paragraph{Illustration of the cost-weighted loss in training. }
We compare training our model with the proposed cost-weighted loss to using the POPPY loss from \citet{grinsztajn2023winner, chalumeau2023combinatorial}, which focuses on the best sample. Further details on the cost-weighted loss and its mathematical justification are provided in Appendix \ref{app:training}. The empirical comparison is reported in Table \ref{tab:comparison_loss}.

\begin{table}[ht]
\centering
\scriptsize
    \captionof{table}{Comparison of cost-weighted loss and POPPY loss when training on CVRP with $n=100$}
    \label{tab:comparison_loss}
    \centering
\begin{tabular}{lccc}
    \toprule
    \textbf{Method} & Obj. & Gap \\
    \midrule
    POPPY loss    & 15.527          & -0.083\%       \\
    Cost-weighted loss & \textbf{15.524} & \textbf{-0.102\%} \\
    \bottomrule
\end{tabular}
\end{table}

Training with the cost-weighted loss (used in our main experiments) yields slightly better results than the POPPY loss, although the performance gap remains small. We attribute this to the strength of the inference procedure. In particular, we allocate a sufficiently large inference-time budget, which tends to reduce performance differences arising from the choice of training objective. Moreover, since our inference method includes parameter updates, the model can partially “forget” suboptimal training parameters and adapt at test time—an effect formally supported by our theoretical results (Theorems \ref{th:ergodic_theta_fixed} and \ref{th:ergodic_adaptive}).

\paragraph{Choice of the inference budget.}

In our experiments, all comparisons are carried out under a fixed wall-clock budget, rather than a fixed number of rollouts or attempts, so that methods with different per-iteration costs are evaluated under comparable computational effort. This wall-clock budget is fixed for the entire test set at a given problem size. To choose this budget, we proceeded as follows. Since our architecture builds on the CVAE-Opt setting, we first ran Differential Evolution (the inference method used in \citet{hottung2020learning}) on our model with the same number of iterations as in their original setup and recorded the corresponding runtime. We then used this runtime as a reference and applied the same wall-clock budget to all methods at that problem size. The resulting budgets of 20M for TSP100 and 40M for CVRP100 correspond to roughly 1 second and 2 seconds per instance, respectively, which we found to provide a reasonable trade-off between solution quality and computational cost. All methods are evaluated in the same PyTorch codebase and on the same hardware (a single NVIDIA RTX 6000 GPU).

\paragraph{Comparison of our method with an instance-independent latent model. }

To isolate the effect of conditioning the latent space on the instance, we additionally study a variant of our model in which the latent distribution is instance-independent. Concretely, we replace the encoder $p_\phi(z \mid x)$ with an unconditional latent distribution $p_\phi(z)$ (a standard Gaussian), while keeping the decoder architecture, training objective, and inference procedure unchanged. In this setting, the same latent space must be shared across all instances, and the search in latent space can no longer adapt its prior structure to the specific geometry of each problem instance.

We report a comparison between the instance-conditioned and instance-independent variants, as well as our inference method applied to the COMPASS model, in Table \ref{tab:comparison_independent}. First, the combined approach (COMPASS + LGS) does not improve over the original COMPASS model and remains clearly below the performance of our full method. This is reasonable, since COMPASS differs from our model not only in the lack of instance conditioning and in its loss (POPPY), but also in how the latent space is used: COMPASS draws a small number of latent samples and, for each latent sample, generates a solution with different starting points. As a result, only a limited region of the latent space is actually explored, which we believe restricts the effectiveness of our latent-space search when applied on top of a COMPASS-trained model.

The instance-independent model underperforms both the instance-conditioned variant and COMPASS. We suspect that COMPASS nevertheless works well, despite using an instance-independent latent distribution, because its latent space is learned on top of a strong pre-trained policy. Overall, these results confirm that conditioning on the instance $x$ provides a more informative latent prior for the search and yields measurable gains, in line with the intuition that an instance-dependent latent space can better capture problem-specific structure.

\begin{table}[ht]
\centering
\scriptsize
    \captionof{table}{Comparison of instance-conditioned and instance-independent model on CVRP with $n=100$}
    \label{tab:comparison_independent}
    \centering
\begin{tabular}{lccc}
    \toprule
    \textbf{Method} & Obj. & Gap \\
    \midrule
    COMPASS + LGS & 15.579 & 0.249\% \\
    Instance-independent model   & 15.641    & 0.648\%  \\
    Instance-conditioned model & \textbf{15.524} & \textbf{-0.102\%} \\
    \bottomrule
\end{tabular}
\end{table}

\paragraph{Comparison of deterministic and stochastic latent variables. }
In the deterministic case, the encoder maps each instance to a single latent vector, $z = f_{\phi}(x)$, where $\phi$ denotes the encoder parameters. This collapses the latent space to a single representation per instance, so the Markov chain no longer has a meaningful space to explore. In contrast, a stochastic encoder $p_{\phi}(z \mid x)$ defines a distribution over the latent space that captures model uncertainty and multiple promising regions. This distribution is exactly what our method exploits: the MCMC proposals and accept/reject steps are defined with respect to a target $\pi_{\theta}(z \mid x)$, and exploration in latent space translates into diverse and high-quality solutions in the original solution space.

We evaluate our inference method with a deterministic latent representation (deterministic embedding from POMO). Specifically, we apply (i) a deterministic latent model with our LGS inference, and (ii) a deterministic latent model with a gradient-based method similar to EAS. The results are reported in Table \ref{tab:comparison_deterministic}.

\begin{table}[ht]
\centering
\scriptsize
    \captionof{table}{Comparison of deterministic and stochastic latent variables on CVRP with $n=100$}
    \label{tab:comparison_deterministic}
    \centering
\begin{tabular}{lccc}
    \toprule
    \textbf{Method} & Obj. & Gap \\
    \midrule
    Deterministic latent (+LGS) & 15.572 & 0.206\% \\
    Deterministic latent (+SGD) & 15.564 & 0.154\% \\
    LGS (ours) & \bf 15.524 & \bf -0.102\% \\
    \bottomrule
\end{tabular}
\end{table}

We observe a clear degradation for both deterministic latent variants compared to the stochastic latent model, highlighting the benefit of stochasticity. In this setting, the gradient-based method performs slightly better than applying LGS directly on a deterministic latent representation, which is consistent with the fact that, without a proper target distribution in latent space, MCMC-based exploration becomes less natural. In summary, the deterministic latent model is effective for producing greedy solutions under a very small inference budget, whereas for a fixed, larger budget, a stochastic latent variable substantially improves exploration and solution quality.

\paragraph{Comparison with other sampling methods. }
We now compare our method to the approaches discussed in Section \ref{app:sec:extended_rw}: Active Search \citep{bello2016neural}, which appears to be the most effective existing approach related to sampling non-differentiable reward functions \citep{fan2023dpok, uehara2024understanding, venkatraman2024amortizing}, and the Adversarial Generative Flow Network (AGFN) \citep{zhang2025adversarial}, which builds on GFlowNet.
We evaluate these methods on CVRP instances with $n=100$, under the same setup as in Table \ref{tab:comparison_methods}, and report the results in Table \ref{tab:add_comparison_methods}. The results show that Active Search outperforms POMO (both greedy and sampling) but remains inferior to our method. In contrast, AGFN performs surprisingly poorly compared to POMO-greedy, despite using the same inference budget. Notably, the original AGFN paper also reports sub-par performance on instances with $n=200$. While the method shows improvements on extremely large-scale problems, such settings are beyond the scope of our current study but represent a valuable direction for future work.

\begin{table}[ht]
\centering
\scriptsize
    \captionof{table}{Additional comparison of different inference methods on CVRP with $n=100$}
    \label{tab:add_comparison_methods}
    \centering
\begin{tabular}{lccc}
    \toprule
    \textbf{Method} & Obj. & Gap \\
    \midrule
    Active Search + IL & 15.618 & 0.502\% \\
    AGFN (greedy) & 15.873 & 2.142\% \\
    LGS (ours) & \bf 15.524 & \bf -0.102\% \\
    \bottomrule
\end{tabular}
\end{table}

\paragraph{Comparison with gradient-based methods. }

Our initial motivation for using a continuous latent space was to gain the flexibility of applying continuous inference methods, particularly gradient-based techniques such as SGLD. The primary challenge in our setting is to obtain high-quality solutions within a limited computational budget. However, we found that computing gradients—especially backpropagating through the decoder in the latent space—is computationally expensive. This observation motivated us to adopt a sampling-and-learning approach instead.

In our experiments, all methods (including ours) were evaluated on batches of 1,000 problem instances. Gradient-based approaches, however, cannot process such large batches within GPU memory limits, requiring much smaller batch sizes and thereby reducing their practical efficiency. In contrast, our sampling-and-learning method updates only the decoder’s final layer, and does so at fixed intervals rather than at every iteration, thereby avoiding costly end-to-end backpropagation. This design reduces computation time while preserving solution quality, underscoring the practical advantages of sampling-and-learning inference over fully gradient-based alternatives.

\paragraph{Diffusion prior in the latent space. }

We also evaluate a variant of LGS-Net in which the encoder prior is
replaced by a diffusion-based prior over the latent space. The decoder
and inference procedure are kept unchanged, allowing us to isolate the
effect of the latent prior. 

We construct a dataset of 100K instances from the training distribution and compute a target latent vector for each instance using Differential Evolution (500 steps). We then train a conditional variance-preserving score-based model \citep{dickstein2015, ho2020denoising, song2021score} with a linear noise schedule on these latent vectors, conditioned on the instance through the frozen encoder representation of LGS-Net. The score network is implemented as a residual MLP with attention pooling and cross-attention between the latent variable and the encoder node embeddings. We optimize using Adam with learning rate $2\times10^{-4}$ for 200 epochs.

The results are reported in Table~\ref{tab:diffusion_prior}. The
diffusion prior yields slightly better performance, especially on the
out-of-distribution settings with $n=125$ and $n=150$. This supports
the intuition that improving the latent prior can provide a more
informative latent search space and improve generalization. We note,
however, that this experiment is intended as an ablation on the latent
prior: the diffusion variant requires an additional training stage and
the construction of target latent vectors using Differential Evolution,
whereas the encoder prior used in our main experiments does not.

\begin{table}[H]
\centering
\scriptsize
\renewcommand{\arraystretch}{0.9}
\caption{Comparison of LGS-Net with the encoder prior and a diffusion-based latent prior on CVRP}
\label{tab:diffusion_prior}
\resizebox{\textwidth}{!}{
\begin{tabular}{lccccccccc}
\toprule
\multicolumn{1}{c}{} &
\multicolumn{3}{c}{\textbf{Training distribution}} &
\multicolumn{6}{c}{\textbf{Generalization}} \\
\cmidrule(lr){2-4} \cmidrule(lr){5-10}
\textbf{Method} &
\multicolumn{3}{c}{\textbf{$n=100$}} &
\multicolumn{3}{c}{\textbf{$n=125$}} &
\multicolumn{3}{c}{\textbf{$n=150$}} \\
\cmidrule(lr){2-4} \cmidrule(lr){5-7} \cmidrule(lr){8-10}
& Obj. & Gap & Time & Obj. & Gap & Time & Obj. & Gap & Time \\
\midrule
LGS-Net (encoder prior)   & 15.524 & -0.102\% & 40M  & 17.496 & -0.022\% & 1H   & 19.286 & 0.343\% & 1H30 \\
LGS-Net (diffusion prior) & \textbf{15.522} & \textbf{-0.113\%} & 40M  & \textbf{17.489} & \textbf{-0.061\%} & 1H   & \textbf{19.278} & \textbf{0.302\%} & 1H30 \\
\bottomrule
\end{tabular}
}
\end{table}

\paragraph{Extension to larger instances ($n=200$). }

While our main experiments follow the CVAE-Opt setup with instance sizes up to $n=150$, we additionally evaluate scalability on larger CVRP instances with $n=200$. Using the same inference setup as for $n=100$ (without the augmentation trick), we obtain the results reported in Table \ref{tab:add_cvrp_200}. LGS-Net remains competitive with strong baselines such as EAS and COMPASS on CVRP200.

\begin{table}[ht]
\centering
\scriptsize
\caption{Experimental results on CVRP with $n=200$ without the augmentation trick}
\label{tab:add_cvrp_200}
\begin{tabular}{lccc}
    \toprule
    \textbf{Method} & Obj. & Gap & Time \\
    \midrule
    LKH3 & 22.00 & 0.00\% & 25H \\
    \cmidrule(lr){1-4}
    POMO (greedy)   & 23.296 & 5.891\% & 1M \\
    POMO (sampling) & 23.371 & 6.232\% & 2H10 \\
    EAS             & 22.361 & 1.643\% & 2H10 \\
    COMPASS         & 22.455 & 2.069\% & 2H10 \\
    ELG             & 22.460 & 2.092\% & 2H10 \\
    CNF             & 23.389 & 6.314\% & 2H10 \\
    LGS-Net (ours)  & \bf 22.284 & \bf 1.289\% & 2H10 \\
    \bottomrule
\end{tabular}
\end{table}

\paragraph{Scalability of our method. }
While our experiments are limited to $n \leq 200$, our method is in principle applicable to larger instances. The main computational cost arises from (i) propagating multiple latent particles during inference, and (ii) updating the final decoder layer via Stochastic Approximation. Both steps parallelize efficiently on GPUs, with the number of particles $K$ and the update frequency remaining constant across scales. However, the decoding horizon $T$ inevitably grows with instance size, a limitation shared by all constructive NCO methods. An interesting direction for future work is to design inference strategies that exploit problem structure by adapting the latent space at each decoding step, though this would incur significant computational cost.

\subsubsection{Illustration of hyperparameters}

Figure \ref{fig:sol_TSP_CVRP} illustrates solutions generated by our method, following a similar visualization style as in \citet{perron2019ortools, kool2018attention}. Visually, the solutions appear to be optimal.

\begin{figure*}[ht!]
\begin{center}
\centerline{
    \includegraphics[width=0.5\textwidth]
    {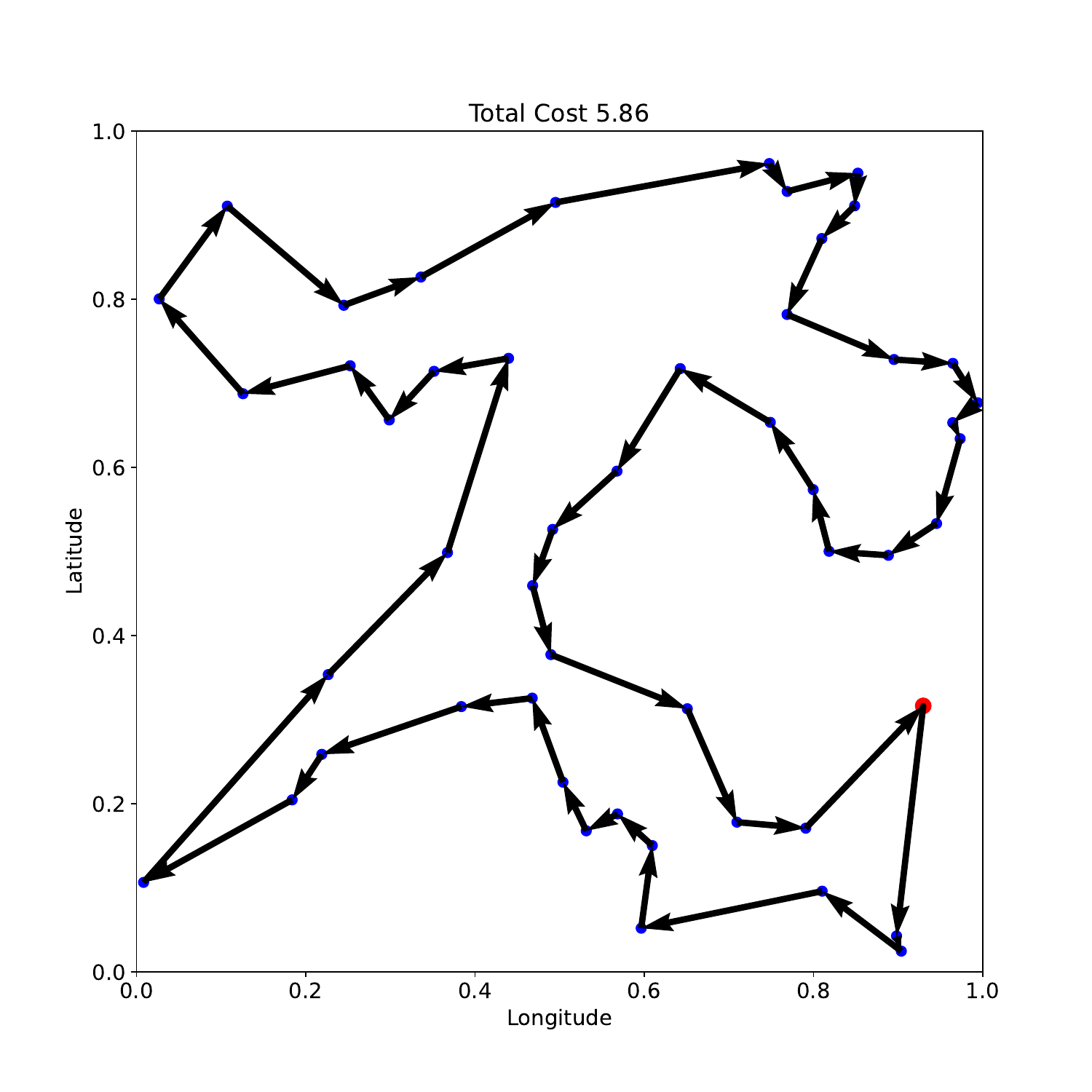}
    \includegraphics[width=0.5\textwidth]{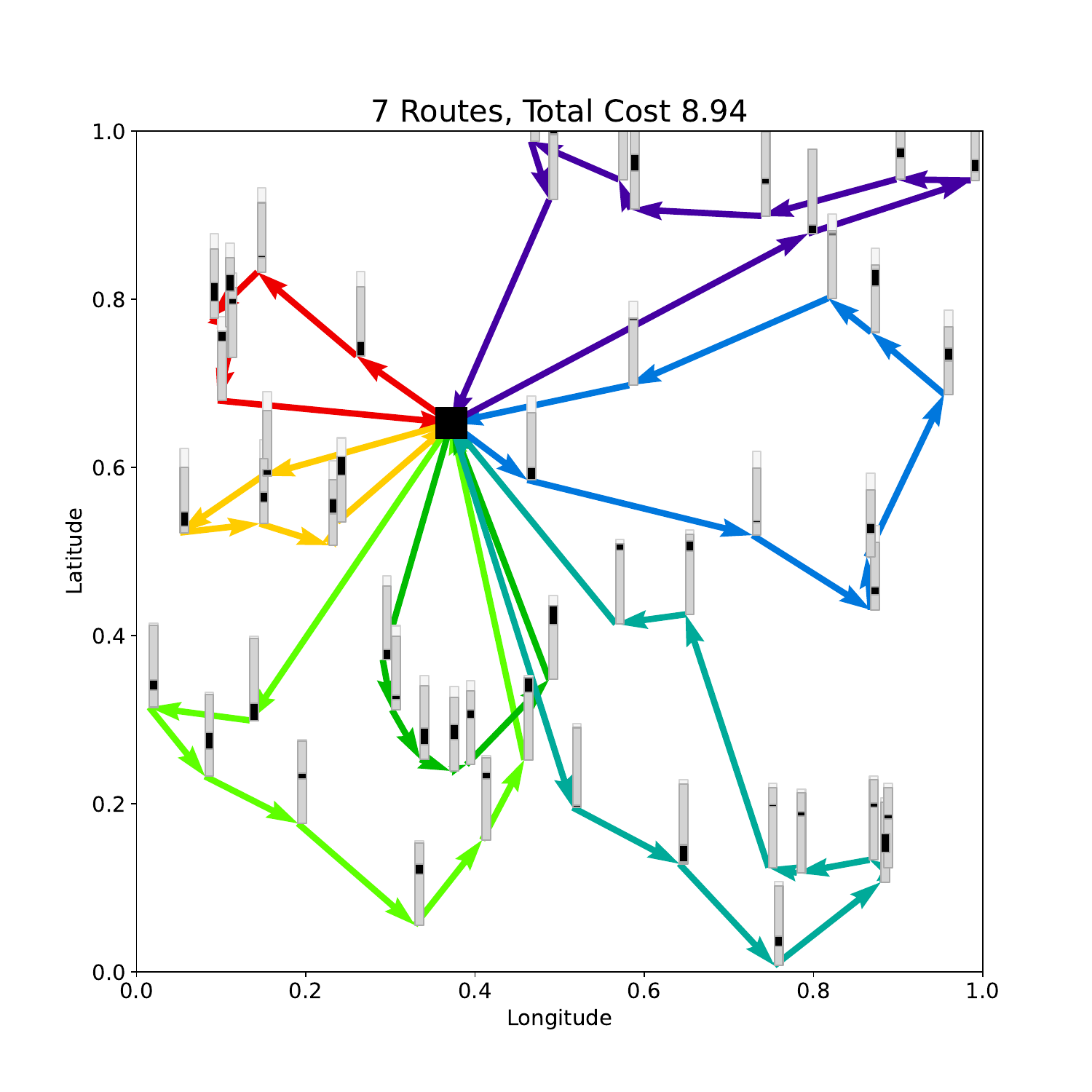}
    }
\caption{Solution representation produced by our model on the TSP (left) and CVRP (right) with $n=50$. In the TSP plot, the red point denotes the starting node, and arrows indicate the visiting order. In the CVRP plot, the large square represents the depot, and each color corresponds to a distinct vehicle route. The bar illustrates vehicle capacity usage: black segments show the portion used by each customer, while white segments indicate unused capacity. The overall height of each bar reflects the total load on the corresponding route.}
\label{fig:sol_TSP_CVRP}
\end{center}
\end{figure*}

To highlight the impact of important hyperparameters, we evaluate our inference method while varying the number of latent samples $K$, the latent space radius $R$, the latent dimension $d_z$, and the parameter update frequency.

\paragraph{Impact of the number of latent samples $K$. }

We first focus on the effect of the number of latent samples $K$. Figure \ref{fig:illus_K} illustrates how the average gap evolves as $K$ increases, and we observe that increasing $K$ improves the performance of our method. This can be explained by the fact that the variance of the gradient estimator in \eqref{eq:grad_est_theta_inf} is of order $\mathcal{O}(1/K)$, and also by the dependence on $K$ of the constant arising from the minorization condition in Theorem~\ref{th:ergodic_adaptive}. However, beyond a certain threshold, the improvement becomes marginal. Choosing an appropriate value of $K$ is therefore crucial to balance faster mixing against computational cost.

\begin{figure}[ht!]\centering  
\includegraphics[width=8cm,height=5cm]{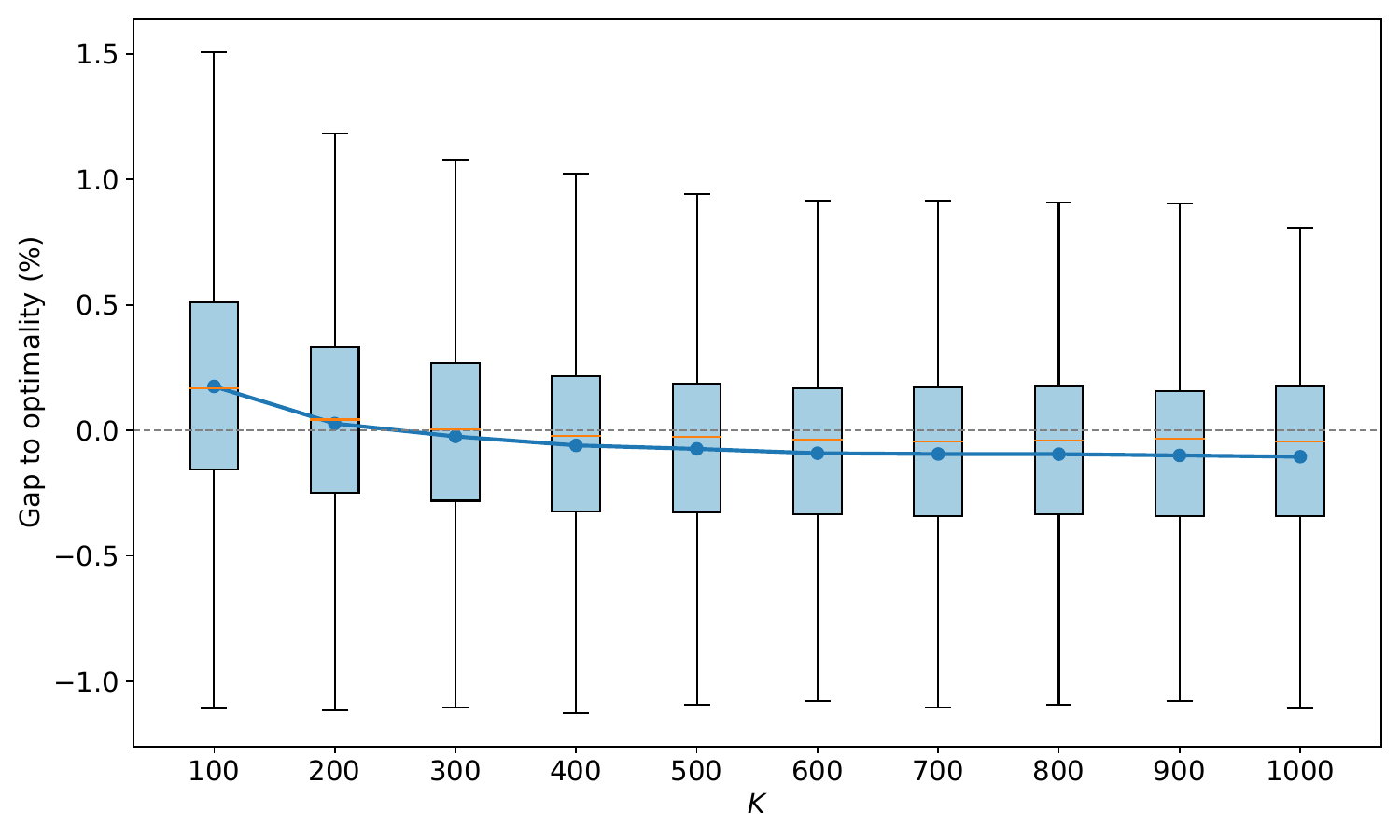}
\caption{\label{fig:illus_K} Average gap to optimality for different values of $K$ in the CVRP with $n=100$}
\centering
\end{figure}

\paragraph{Sensitivity to latent dimension $d_z$ and radius $R$. }

Since our latent space is, in practice, restricted to a ball of radius $R$, we study the impact of this parameter. Figure~\ref{fig:illus_R} illustrates the average optimality gap for different values of $R$. For small values of $R$, the model performance degrades slightly because the latent space is too constrained. For large values of $R$, the performance also deteriorates slightly, as the latent space becomes too large and harder to exploit effectively. In contrast, moderate values of $R$ strike a good balance and yield the best empirical performance. Nevertheless, across all tested values of $R$, the performance remains competitive.

\begin{figure}[ht!]\centering  
\includegraphics[width=8cm,height=5cm]{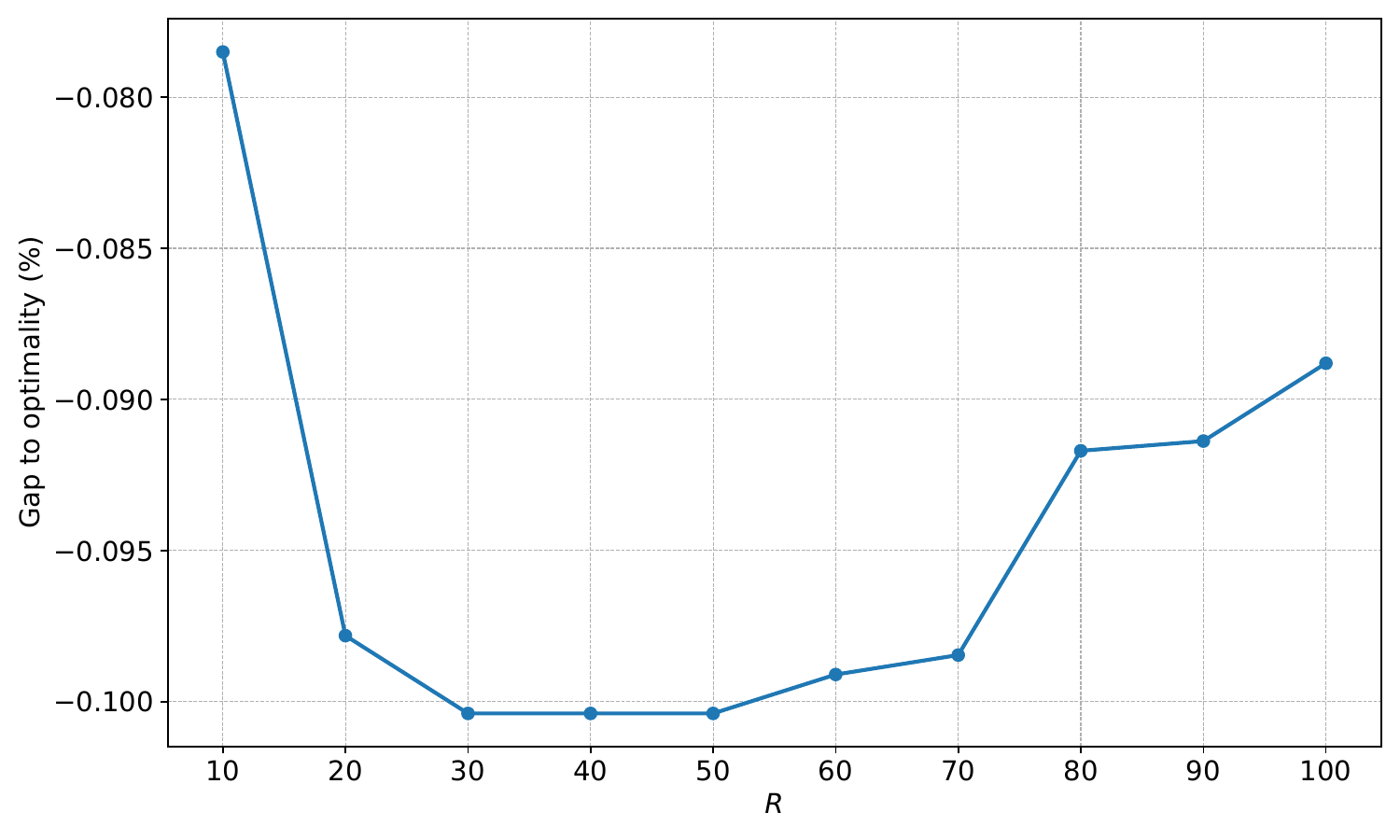}
\caption{\label{fig:illus_R} Average gap to optimality for different values of $R$ in the CVRP with $n=100$}
\centering
\end{figure}

Regarding the latent dimension, we train our model with the same configuration while varying the latent dimension $d_z \in \{2, 50, 100, 150\}$, and we use the same inference procedure and budget as in our other experiments on CVRP with $n=100$. The results are reported in Table \ref{tab:comparison_dim}.
As expected, the very low-dimensional model with $d_z=2$ performs noticeably worse, since such a small latent space cannot capture the structure of routing problems. For $d_z=50$, the model performs competitively but slightly worse than $d_z=100$. Increasing the dimension further to $d_z=150$ does not bring additional gains: performance degrades marginally, likely because the higher-dimensional latent space is harder to explore efficiently under a fixed inference budget.

\begin{table}[ht]
\centering
\scriptsize
    \captionof{table}{Effect of latent dimension $d_z$ in our method on CVRP with $n=100$}
    \label{tab:comparison_dim}
    \centering
\begin{tabular}{lccc}
    \toprule
    \textbf{Method} & Obj. & Gap \\
    \midrule
    $d_z=2$ & 16.189 & 4.176\% \\
    $d_z=50$ & 15.529 & -0.071\% \\
    $d_z=100$ & \bf 15.524 & \bf -0.102\% \\
    $d_z=150$ & 15.543 & 0.019\% \\
    \bottomrule
\end{tabular}
\end{table}

\paragraph{Impact of the SA step frequency. }
We now discuss the SA step, focusing on how frequently the parameters should be updated. Our initial motivation for not updating the parameters at every iteration is twofold: to reduce computational cost and to better explore the latent space given the current parameter distribution. Consequently, parameter updates can be performed at regular intervals rather than every iteration.

Since the initial parameters are typically far from optimal, allowing long exploration intervals early in training is often unnecessary. Instead, we begin with short exploration intervals and gradually increase them to enable more thorough exploration as learning progresses. Selecting these intervals is non-trivial; we chose them manually without extensive hyperparameter tuning. The update schedule used in our experiments is $[1, 1, 5, 15, 25, 100, 150]$. Optimizing this schedule remains an open question and presents an interesting direction for future work.

\begin{figure}[ht!]\centering  
\includegraphics[width=8cm,height=5cm]{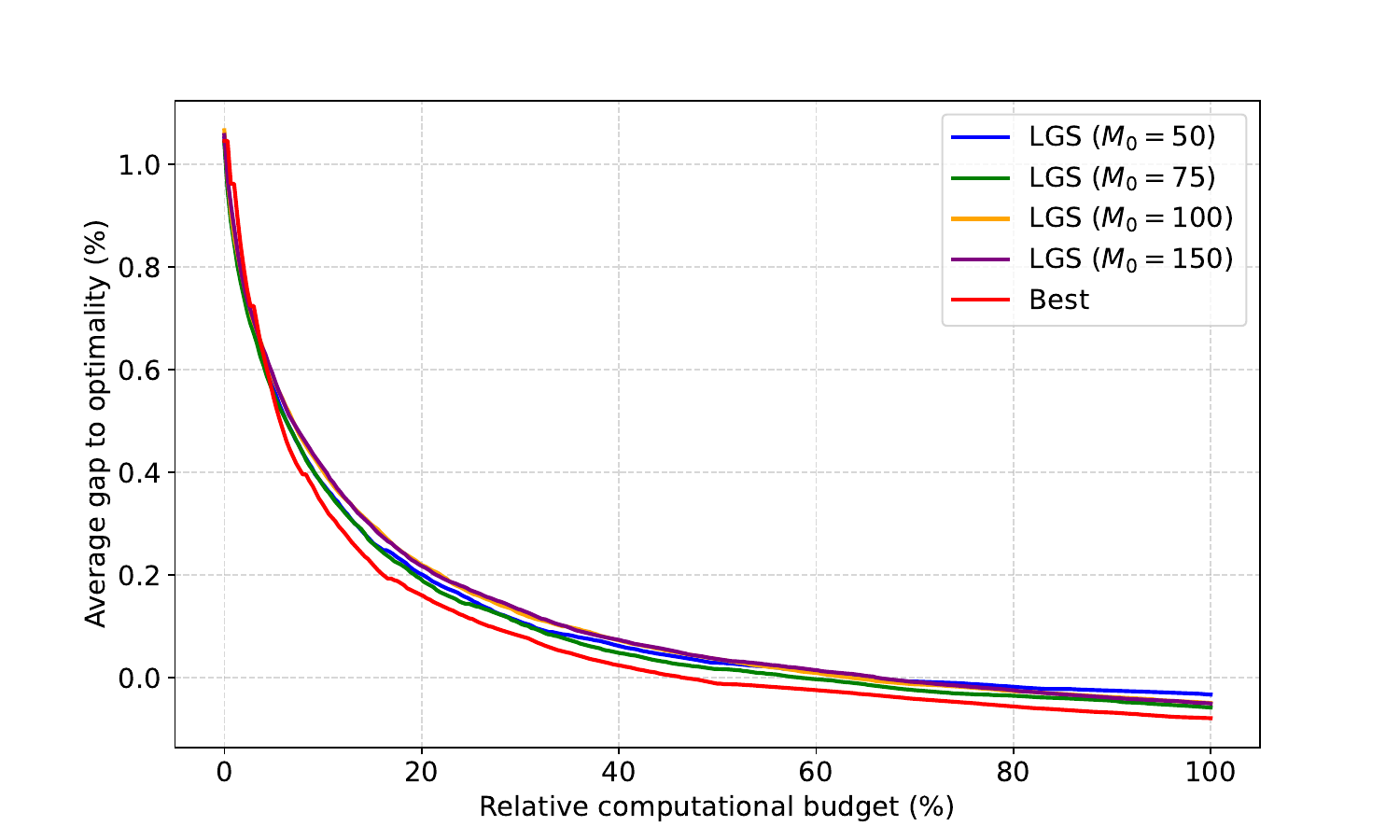}
\caption{\label{fig:sa_step_illus} Performance comparison of various SA step update frequencies on CVRP with $n=100$}
\centering
\end{figure}

Figure~\ref{fig:sa_step_illus} illustrates the average gap to optimality for different choices of parameter update frequency, including both regular intervals and the increasing schedule. Here, $M_0$ denotes the update frequency, with $M_0 = 50$ indicating that parameters are updated every 50 iterations. We observe that while regular intervals produce similar results overall, $M_0 = 75$ yields slightly better performance. Notably, the increasing update schedule achieves a clear improvement over the fixed schedules, highlighting the potential benefits of adaptive strategies.

\subsubsection{Generalization to Other Problems}

We now illustrate how our method can be extended beyond routing problems, using the classical 0/1 knapsack problem \citep{pisinger1998knapsack}.  
Given a set of $n$ items indexed by $i \in \{1,\dots,n\}$, each item is associated with a weight $w_i$ and a value $v_i$, and the knapsack has a capacity $D$ (we denote the capacity by $D$ for consistency with our routing notation). The 0/1 knapsack problem consists in maximizing the total value of the selected items subject to the capacity constraint:
\begin{align*}
    \max_{a \in \{0, 1\}^n} \sum_{i=1}^{n} a_i v_i \eqsp, \quad
    \text{subject to } \sum_{i=1}^{n} a_i w_i \leq D \eqsp,
\end{align*}
where $a_i = 1$ indicates that item $i$ is included in the knapsack and $a_i = 0$ otherwise.  

We apply our LGS-net model with the encoder described in \ref{app:subsec:encoder}, using input features $x_i = (w_i, v_i) \in \Rset^2$. The decoder follows the same architecture as the CVRP decoder in \ref{app:subsec:cvrp_decoder} and keeps track of the remaining load capacity. The only difference lies in the termination condition: in the knapsack setting, an episode ends as soon as the remaining capacity reaches zero, whereas in the CVRP setting the decoder runs until all nodes have been visited.

We follow the experimental protocol of \citet{bello2016neural}. We generate two datasets, Knapsack50 and Knapsack100, each containing $1000$ instances, with weights and values drawn independently and uniformly at random in $[0,1]$. The knapsack capacity $D$ is set to $12.5$ for Knapsack50 and $25$ for Knapsack100. We compare our inference method (LGS) with a sampling-based decoding from the same trained model (\emph{Sampling}), OR-Tools \citep{perron2019ortools}, and the optimal dynamic-programming solutions; the results are reported in Table~\ref{tab:results_knapsack}. We observe that LGS matches the optimal solutions and OR-Tools on average, whereas the sampling-based method struggles to reach this optimal performance.

\begin{table}[ht]
\centering
\scriptsize
    \captionof{table}{Experimental results on knapsack instances (average objective value; higher is better).}
    \label{tab:results_knapsack}
    \centering
\begin{tabular}{lccccc}
    \toprule
    \textbf{Method} & Sampling & LGS (ours) & OR-Tools & Optimal \\
    \midrule
    Knapsack50 & 20.012 & \bf 20.018 & \bf 20.018 & \bf 20.018 \\
    Knapsack100 & 40.501 & \bf 40.512 & \bf 40.512 & \bf 40.512 \\
    \bottomrule
\end{tabular}
\end{table}


\end{document}